
\documentclass{article}

\usepackage{microtype}
\usepackage{graphicx}
\usepackage{subcaption}
\usepackage{booktabs} 

\usepackage{hyperref}




\usepackage[accepted]{icml2026}

\usepackage{amsmath}
\usepackage{amssymb}
\usepackage{mathtools}
\usepackage{amsthm}

\usepackage{booktabs}       
\usepackage{amsfonts}       
\usepackage{nicefrac}       
\usepackage{microtype}      
\usepackage{xcolor}         
\usepackage{bbm}
\usepackage{enumitem}

\newcommand{\lp}{\left(}
\newcommand{\rp}{\right)}
\newcommand{\lb}{\left[}
\newcommand{\rb}{\right]}
\newcommand{\lbp}{\left\{}
\newcommand{\rbp}{\right\}}
\newcommand{\lba}{\left\lvert}
\newcommand{\rba}{\right\rvert}
\newcommand{\lV}{\left\lVert}
\newcommand{\rV}{\right\rVert}

\newcommand{\mcal}{\mathcal}

\newcommand{\bbm}{\mathbbm}
\newcommand{\mbb}{\mathbb}
\newcommand{\msf}{\mathsf}

\newcommand{\ra}{\rightarrow}

\newcommand{\lan}{\langle}
\newcommand{\ran}{\rangle}

\newcommand{\eqDef}{\triangleq}
\newcommand{\diid}{\overset{\text{i.i.d.}}{\sim}}

\newcommand{\E}{\mathbb{E}}

\renewcommand{\Pr}{\mathbb{P}}

\newtheorem*{theorem*}{Theorem}
\newtheorem*{lemma*}{Lemma}
\usepackage[capitalize,noabbrev]{cleveref}

\theoremstyle{plain}
\newtheorem{theorem}{Theorem}[section]

\newtheorem{lemma}[theorem]{Lemma}
\newtheorem{corollary}[theorem]{Corollary}
\theoremstyle{definition}
\newtheorem{definition}[theorem]{Definition}

\theoremstyle{remark}

\newtheorem*{corollary*}{Corollary}


\DeclarePairedDelimiterX\Set[1]{\lbrace}{\rbrace}%
 {  #1 }

\newcommand{\ip}[2]{\left\langle #1, #2 \right \rangle}

\DeclareMathOperator*{\esssup}{ess\,sup}

\usepackage[textsize=tiny]{todonotes}

\icmltitlerunning{Hidden states DP zeroth-order optimization}

\begin{document}

\twocolumn[
  \icmltitle{Privacy Amplification in \\Differentially Private Zeroth-Order Optimization with Hidden States}



  \icmlsetsymbol{equal}{*}

  \begin{icmlauthorlist}
    \icmlauthor{Eli Chien}{ntu,aicore}
    \icmlauthor{Wei-Ning Chen}{msft}
    \icmlauthor{Pan Li}{gt}
  \end{icmlauthorlist}

  \icmlaffiliation{ntu}{Department of Electrical Engineering, National Taiwan University, Taiwan}
  \icmlaffiliation{aicore}{NTU Artificial Intelligence Center of Research Excellence (NTU AI-CoRE), Taiwan}
  \icmlaffiliation{msft}{Microsoft, USA}
  \icmlaffiliation{gt}{Department of Electrical and Computer Engineering, Georgia Institute of Technology, USA}

  \icmlcorrespondingauthor{Eli Chien}{elichien@ntu.edu.tw}

  \icmlkeywords{Differential Privacy, Zeroth-order, Optimization, Gradient Descent, Hidden-state}

  \vskip 0.3in
]



\printAffiliationsAndNotice{}  

\begin{abstract}

  Zeroth-order optimization has emerged as a promising approach for fine-tuning large language models under differential privacy (DP) and memory constraints. While privacy amplification by iteration (PABI) provides convergent DP bounds for first-order methods, establishing similar guarantees for zeroth-order methods remains an open problem. First-order PABI analysis relies on the fact that gradients are perturbed with isotropic noise, allowing privacy bounds to be iteratively tracked via shifted Rényi divergence. In contrast, DP zeroth-order methods inject scalar noise along random update directions to maintain utility. This anisotropic update fails standard shifted divergence frameworks, as the global Lipschitz property no longer holds almost surely. We provide the first convergent hidden-state DP bound for zeroth-order optimization by proposing a hybrid noise mechanism and a novel coupling analysis. We bypass the purely shifted-divergence approach by constructing a coupled auxiliary process, which circumvents the global Lipschitz barrier and yields a convergent privacy bound. Furthermore, our results induce better DP zeroth-order algorithmic designs that are previously unknown to the literature.
\end{abstract}

\section{Introduction}
The scaling laws \citep{kaplan2020scaling} suggest that increasing the amount of training data and model size leads to corresponding increases in model capability. It has demonstrated success in many large-scale vision and language foundational models, which typically consist of billions to hundreds of billions of parameters. However, as model size grows, preserving a model's privacy (in particular, differential privacy \citep{dwork2006calibrating}) during the training phase becomes increasingly challenging. This challenge primarily stems from the significant increase in memory usage when optimizing the model via differentially private optimizers such as DP-SGD \citep{abadi2016deep}. To DP-fy these first-order methods, the standard approach is to clip the gradients of each sample and then inject calibrated Gaussian noise in each training round. The per-sample gradient clipping step, however, incurs huge memory costs since the optimizer has to cache all the intermediate gradients of each sample during back-propagation\footnote{While some recent works have proposed methods to alleviate such costs (e.g., via ghost clipping \citep{li2021large}), the increase in memory usage remains significant compared to non-DP training.}.

To address this issue, recent works by \citet{zhang2023dpzero, tang2024private} consider differentially private zeroth-order optimization, which relies only on the evaluated loss (i.e., forward pass of the model) and does not require the computation of gradients. While there are several variants of zeroth-order optimization, they generally involve the following steps: in each training round, a random perturbation is generated on the model weights (e.g., sampling from an isotropic Gaussian distribution), and the step size is computed based on the loss values evaluated on the perturbed weights. To ensure DP, one could clip and add noise to the loss values associated with each sample. Since these zeroth-order optimization methods do not require the computation of gradient, they significantly improve both memory and computational costs. Indeed, \citet{zhang2023dpzero, tang2024private} successfully fine-tuned models with over 60B parameters with DP and the performance is comparable with the state-of-the-art DP first-order methods such as DP-LoRA \citep{yu2022differentially}.

Despite successfully scaling up DP training to models with tens of billions of parameters, the current privacy analysis of the zeroth-order optimization method in \citet{zhang2023dpzero, tang2024private} relies on the composition theorem, which accounts for the privacy budget spent during each training round. As a result, the privacy cost increases \emph{linearly} with the number of steps, and hence a carefully chosen stopping point is required to balance privacy and utility. Note that such framework assumes that intermediate optimization steps are public, potentially leading to an overestimation of the true privacy cost when only the \emph{last iterate} (i.e., the final model weights) is released. 

\begin{figure*}
    \centering
    \includegraphics[trim={0cm 11.5cm 5.5cm 3cm},clip,width=0.95\linewidth]{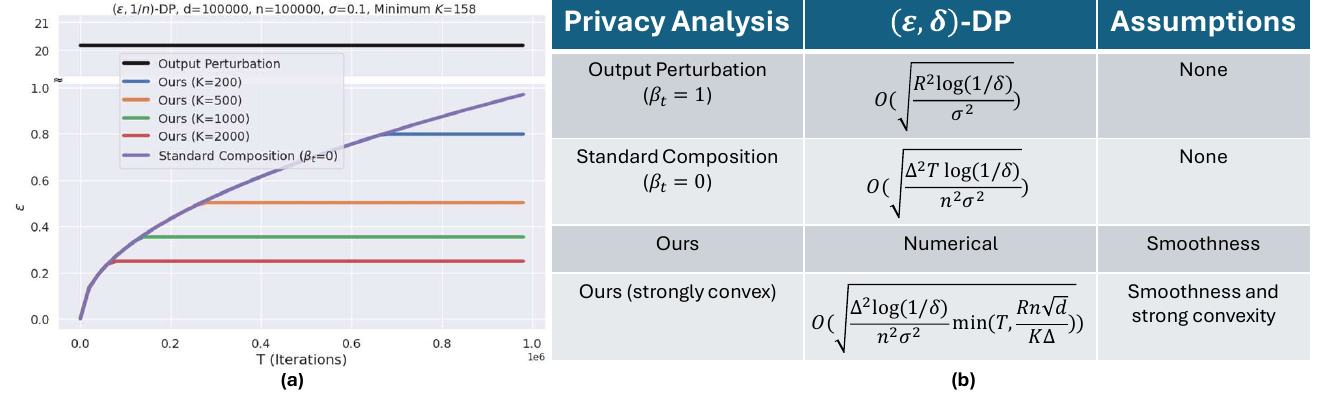}
    \vspace{-0.4cm}
    \caption{(a) The DP guarantees for smooth strongly convex losses over the bounded domain versus training iteration $T$, where the noise variance $\sigma^2$ is the same for all lines. $d$ is the number of parameters, and $n$ is the size of the training dataset. The output perturbation directly utilizes the Gaussian mechanism with sensitivity chosen to be the diameter of the bounded domain. The minimum $K$ indicates the lower bound on the choice of $K$ stated in Corollary~\ref{cor:main}. The non-convex case can be found at Appendix~\ref{apx:exp_results}. (b) Privacy bound $\varepsilon$ comparison to standard analysis for Noisy-ZOGD~\eqref{eq:DP_ZOGD-general}, where $\Delta$ is the clipped norm.}
    \label{fig:main_fig1}
\end{figure*} 

In this work, we demonstrate that by concealing the intermediate optimization steps and slightly tweaking the perturbation and model update steps, one can ``amplify'' the privacy guarantees provided by the composition theorem. This type of privacy analysis, known as ``privacy amplification by iteration'' (PABI), leverages the hidden optimization trajectory to achieve stronger privacy. While PABI has been extensively studied for first-order methods \citep{feldman2018privacy, altschuler2022privacy, altschuler2023resolving,chien2024certified,chien2025convergent}, existing analyses do not extend directly to zeroth-order optimization.

Applying PABI to zeroth-order methods introduces unique challenges, primarily due to the structure of the DP noise. In first-order methods like DP-SGD, gradients are perturbed with isotropic Gaussian noise each round, and privacy analysis relies on a key shifted-reduction lemma \citep{feldman2018privacy} to control R\'enyi divergence iteratively. However, in zeroth-order optimization, noise is applied to the step size along a random direction $u$, resulting in ``anisotropic'' noise, i.e., a scalar Gaussian perturbation along $u$ rather than in all directions. As a result, the standard shifted-reduction lemmas are no longer applicable. 


One could add isotropic Gaussian noise on top of zeroth-order updates and apply existing analysis tools, but prior work has shown that doing so severely worsens the privacy-utility trade-off~\citep{zhang2023dpzero}. In zeroth-order methods, discrepancies between adjacent datasets only appear along the update direction $u$, making noise in the orthogonal subspace $u^\perp$ wasteful. Furthermore, such a random update direction $u$ leads to a worse global Lipschitz constant for the zeroth-order update compared to its first-order counterpart. Directly applying the shifted-divergence analysis gives an undesired privacy bound. 

To address this, we design a hybrid noise mechanism that combines the advantages of directional and isotropic noise, achieving tighter privacy bounds. Specifically, our method employs (1) a random $K$-dimensional update each step to reduce variance, (2) Gaussian perturbation on the step size, and (3) a small isotropic Gaussian noise component. To address the Lipschitz constant issue, we first derive the high probability pointwise Lipschitz bound, which provides a much smaller Lipschitz constant. However, such a pointwise bound cannot be applied to the shifted-divergence analysis. We hence propose a novel coupling technique by constructing an auxiliary process $\widetilde{W}$ that is ``in the middle'' of two original training processes $W,W^\prime$ on the adjacent datasets. This construction allows us to decompose the privacy analysis into two manageable parts: a total variation (TV) distance bound on $W,\widetilde{W}$ that captures the failure probability of the coupling, and a R\'enyi divergence bound on $\widetilde{W},W^\prime$ derived from the shifted Gaussian mechanism.

Our analysis reveals several privacy advantages in zeroth-order method overlooked by prior work. First, we highlight the role of $K$, the dimension of the update at each iteration. While larger $K$ is known to improve utility by reducing the variance of gradient estimation, previous DP zeroth-order studies \citep{zhang2023dpzero,tang2024private} avoid large $K$ due to the $\tilde{O}(\sqrt{K})$ increase in privacy cost under standard composition. However, under hidden-state analysis, we show that privacy loss actually \emph{decays} with $K$ under a fixed utility constraint. Standard composition fails to capture this benefit, treating the privacy loss as independent of $K$ \citep{zhang2023dpzero}. Furthermore, our analysis shows that when $K>1$, choosing orthonormal update directions $\{u_{t,k}\}_{k=1,..,K}$ significantly improves the privacy bound compared to using i.i.d. random directions -- a benefit not recognized in the existing literature. These findings not only provide tighter privacy guarantees but also offer practical guidelines for designing better differentially private zeroth-order optimization algorithms.


\paragraph{Our contributions.} Our main contributions can be summarized as follows:

\begin{itemize}[leftmargin=*]
    \item \emph{PABI for Zeroth-Order Methods}: We develop the first privacy amplification by iteration (PABI) analysis tailored for DP zeroth-order optimization, demonstrating a constant privacy cost that saturates after a fixed number of training steps. Interestingly, our analysis sidesteps the shifted-divergence analysis by directly constructing the auxiliary process that fits the zeroth-order update. 
    \item \emph{Hybrid Noise Mechanism}: We propose a new noise mechanism that combines directional and isotropic noise to enable tighter privacy bounds.
    \item \emph{Role of Update Dimension $K$}: We show that, contrary to prior beliefs, the privacy loss decays with the update dimension $K$ under hidden-state analysis, and that using orthonormal directions further improves privacy.
\end{itemize}

Missing proofs, discussions on limitations, and broader impacts are deferred to the appendices.

\section{Preliminaries and Problem formulation}
Let $\mathcal{D} = \{x_i\}_{i=1}^n\in \mathcal{X}^n$ be a training dataset of $n$ data points from some domain $\mathcal{X}$. We say a dataset $\mathcal{D}^\prime$ is adjacent to $\mathcal{D}$ with replacement (denoted by $\mathcal{D}\simeq \mathcal{D}^\prime$) if $\mathcal{D}^\prime$ can be obtained by replacing one data point $x_i$ from $\mathcal{D}$ with an arbitrary point $x_i'$ in $\mathcal{X}$. One could also adopt a different adjacency definition as the one in \citet{kairouz2021practical}. We denote $[n]:= \{1,2,\cdots,n\}$ and $f\sharp \mu$ the pushforward of a distribution $\mu$ under a function $f$. We use $X_{i:j}$ as shorthand for the vector concatenating $X_i,\cdots,X_j$ and $X\stackrel{d}{=}Y$ to denote that random variables $X,Y$ are equal in distribution.

\subsection{Differential Privacy}
A randomized algorithm $\mcal{A}:\mcal{X}^n \ra \mcal{W}$ satisfies differential privacy \citep{dwork2006calibrating, mironov2017renyi}, if the following holds:

\begin{definition}[(R\'enyi) Differential Privacy]
    A randomized algorithm $\mcal{A}: \mcal{X}^n \ra \mcal{W}$ is $(\varepsilon, \delta)$-differentially private (DP), if for any $\mcal{D} \simeq \mcal{D}^\prime$ and any measurable $\mcal{S} \subseteq \msf{range}\lp \mcal{A} \rp$,
    $$ \Pr\lbp \mcal{A}(\mcal{D}) \in \mcal{S} \rbp \leq e^\varepsilon\cdot\Pr\lbp \mcal{A}\lp \mcal{D}^\prime \rp \in \mcal{S}\rbp+\delta. $$
    $\mcal{A}$ satisfies $\lp\varepsilon, \alpha\rp$-R\'enyi differential privacy (RDP) if
     $D_{\alpha}\lp {\mcal{A}(\mcal{D})} \middle\Vert {\mcal{A}\lp \mcal{D}^\prime \rp} \rp \leq \varepsilon, $
    where $D_{\alpha}$ denotes the R\'enyi divergence: $D_\alpha\lp\mu \middle\Vert\nu\rp\eqDef \frac{1}{\alpha-1}\log\left(\int (\mu(x)/\nu(x))^\alpha \nu(x) dx \right)$ if $\mu \ll \nu$, and $\infty$ otherwise\footnote{With a slight abuse of notation, for random variables $X\sim\mu$ and $Y\sim \nu$ we define $D_\alpha(X||Y) = D_\alpha\lp\mu \middle\Vert\nu\rp$.}.
\end{definition}
RDP enjoys several useful properties that will be utilized in our analysis.
\begin{lemma}[Post-processing property \citep{mironov2017renyi}]\label{lma:Renyi_post_process}
    For any $\alpha > 1$, any function $f$, and any probability distribution $\mu,\nu$, $D_\alpha(h \sharp \mu || h\sharp \nu) \leq D_\alpha\lp\mu \middle\Vert\nu\rp.$
\end{lemma}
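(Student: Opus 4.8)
The plan is to reduce the statement to Jensen's inequality applied to the convex function $t \mapsto t^\alpha$. First I would dispose of the degenerate case: if $\mu \not\ll \nu$ then $D_\alpha(\mu \| \nu) = \infty$ by definition and there is nothing to prove, so I may assume $\mu \ll \nu$ and set $g \eqDef d\mu/d\nu$. Note $\mu \ll \nu$ forces $h\sharp\mu \ll h\sharp\nu$ (if $h\sharp\nu(A)=\nu(h^{-1}(A))=0$ then $\mu(h^{-1}(A))=0$), so the Hellinger-type integral $H_\alpha(\mu\|\nu)\eqDef\int (d\mu/d\nu)^\alpha\, d\nu = \E_\nu[g^\alpha]$ is well defined, and likewise for the pushforwards. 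Since $\alpha > 1$, the map $r \mapsto \tfrac{1}{\alpha-1}\log r$ is strictly increasing on $(0,\infty)$, so $D_\alpha(\cdot\|\cdot)$ is a monotone transformation of $H_\alpha(\cdot\|\cdot)$; hence it suffices to prove $H_\alpha(h\sharp\mu\|h\sharp\nu) \le H_\alpha(\mu\|\nu)$.

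The key step is to identify the Radon--Nikodym derivative of the pushforward. Writing $X \sim \nu$ and $Y = h(X)$, the standard fact is that $\tfrac{d(h\sharp\mu)}{d(h\sharp\nu)}(y) = \E_\nu[\,g(X)\mid h(X) = y\,]$ for $h\sharp\nu$-a.e. $y$, which one checks by testing against functions of the form $\phi\circ h$. Consequently,
\[
H_\alpha(h\sharp\mu\|h\sharp\nu) = \E_\nu\!\left[\big(\E_\nu[g(X)\mid h(X)]\big)^\alpha\right].
\]
Now I apply the conditional Jensen inequality: because $t\mapsto t^\alpha$ is convex on $[0,\infty)$ for $\alpha>1$ and $g \ge 0$, we have $\big(\E_\nu[g\mid h(X)]\big)^\alpha \le \E_\nu[g^\alpha\mid h(X)]$ pointwise, and taking expectations with the tower property gives $\E_\nu[(\E_\nu[g\mid h(X)])^\alpha] \le \E_\nu[g^\alpha] = H_\alpha(\mu\|\nu)$. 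Combined with the monotonicity of $r\mapsto \tfrac{1}{\alpha-1}\log r$, this yields $D_\alpha(h\sharp\mu\|h\sharp\nu) \le D_\alpha(\mu\|\nu)$. (Equivalently, one could invoke the joint convexity of $(p,q)\mapsto q\,(p/q)^\alpha$ and the ensuing data-processing inequality for $f$-divergences, but the conditioning argument above is self-contained.)

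I do not anticipate a genuine obstacle; the only point requiring care is the measure-theoretic justification of the conditional-expectation formula for $d(h\sharp\mu)/d(h\sharp\nu)$, and the handling of the case $\mu \ll \nu$ but the density $g$ unbounded (which is harmless since all quantities are monotone limits of their truncations, and $\E_\nu[g^\alpha]$ may simply be $+\infty$). The same argument moreover extends verbatim to randomized post-processing — replace $h$ by a Markov kernel and condition on its input — which is the form actually needed in the privacy analysis that follows.
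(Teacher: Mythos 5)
Your proof is correct and complete. The paper does not supply its own argument for this lemma but cites it directly from \citep{mironov2017renyi}; your derivation via the conditional Jensen inequality applied to the Hellinger-type integral $H_\alpha(\mu\|\nu)=\E_\nu[(d\mu/d\nu)^\alpha]$ (equivalently, the data-processing inequality for the $f$-divergence with $f(t)=t^\alpha$, $\alpha>1$) is the standard route, and the key measure-theoretic identity $\tfrac{d(h\sharp\mu)}{d(h\sharp\nu)}(y)=\E_\nu[g(X)\mid h(X)=y]$ is exactly what is needed. One cosmetic point: the lemma as stated in the paper introduces a function $f$ but then writes the conclusion in terms of $h$; you have correctly read it as a single post-processing map $h$.
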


\begin{lemma}[Strong composition for R\'enyi divergence\citep{mironov2017renyi}]\label{lma:Renyi_strong_comp}
    For any $\alpha>1$ and any two sequences of random variables $X_1,\cdots,X_k$ and $Y_1,\cdots,Y_k$, it holds that 
    \begin{align}
        & D_{\alpha}(X_{1:k}||Y_{1:k}) \leq \nonumber\\
        & \sum_{i=1}^k \sup_{x_{1:i-1}} D_\alpha(X_i\vert_{X_{1:i-1}=x_{1:i-1}}||Y_i\vert_{Y_{1:i-1}=x_{1:i-1}}).
    \end{align}
\end{lemma}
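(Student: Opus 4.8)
The plan is to prove this by induction on $k$ after passing to the ``exponentiated'' divergence $E_\alpha(\mu\|\nu)\eqDef\int (d\mu/d\nu)^\alpha\,d\nu$, which for $\mu\ll\nu$ satisfies $D_\alpha(\mu\|\nu)=\tfrac{1}{\alpha-1}\log E_\alpha(\mu\|\nu)$ (and $E_\alpha(\mu\|\nu)=\infty$ otherwise). Since $\alpha>1$, the map $t\mapsto\tfrac{1}{\alpha-1}\log t$ is increasing, so it suffices to establish the multiplicative form
$$E_\alpha(X_{1:k}\|Y_{1:k})\;\le\;\prod_{i=1}^k\;\sup_{x_{1:i-1}}E_\alpha\!\left(X_i|_{X_{1:i-1}=x_{1:i-1}}\;\middle\|\;Y_i|_{Y_{1:i-1}=x_{1:i-1}}\right);$$
applying $\tfrac{1}{\alpha-1}\log(\cdot)$ to both sides, using $\log\prod=\sum\log$, and commuting the increasing function $\tfrac{1}{\alpha-1}\log$ with the supremum then recovers the stated bound. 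If any conditional divergence on the right is $+\infty$ there is nothing to prove, so below I assume joint (hence every conditional) absolute continuity and work with densities.

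For the multiplicative bound I would use the chain rule for densities. Let $p,q$ be the joint densities of $X_{1:k}$ and $Y_{1:k}$; factoring $p(x_{1:k})=p(x_{1:k-1})\,p(x_k\mid x_{1:k-1})$ and likewise for $q$, the likelihood ratio at $y_{1:k}$ factors into the ratio on the first $k-1$ coordinates times the conditional ratio on the last coordinate. Raising to the power $\alpha$ and integrating against $q$, I condition on $Y_{1:k-1}$: the inner conditional expectation of $\bigl(p(Y_k\mid Y_{1:k-1})/q(Y_k\mid Y_{1:k-1})\bigr)^\alpha$ given $Y_{1:k-1}$ is exactly $E_\alpha\bigl(X_k|_{X_{1:k-1}=Y_{1:k-1}}\|Y_k|_{Y_{1:k-1}=Y_{1:k-1}}\bigr)$, which is bounded --- uniformly over the realization --- by the supremum $M_k$ appearing in the $k$-th factor of the product. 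As $M_k$ is a nonnegative constant it pulls out of the outer expectation, leaving $M_k\cdot E_\alpha(X_{1:k-1}\|Y_{1:k-1})$; the induction hypothesis bounds the second factor by $\prod_{i=1}^{k-1}M_i$, and the base case $k=1$ is an identity. This closes the induction.

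The only genuine obstacle is measure-theoretic bookkeeping rather than anything conceptually deep: one needs regular conditional distributions to exist (assume standard Borel spaces), one must verify that the almost-sure value of the conditional $E_\alpha$ evaluated at the random vector $Y_{1:k-1}$ is indeed dominated by the pointwise supremum over all conditioning vectors $x_{1:k-1}$ (immediate from the definition of $\sup$), and one must handle the degenerate cases where some divergence is $+\infty$ so that integrating an unbounded likelihood ratio raised to the $\alpha$-th power causes no trouble --- in every such case the right-hand side is already $+\infty$. With these points settled, the argument is just the chain rule for $E_\alpha$ together with the induction.
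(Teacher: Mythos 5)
Your proof is correct, and it is exactly the standard chain-rule argument for R\'enyi divergence composition: factor the joint likelihood ratio via the conditional densities, raise to the power $\alpha$, integrate out the last coordinate first so its conditional $E_\alpha$ is bounded by the supremum and factors out, then induct. The paper itself does not include a proof of this lemma (it cites \citet{mironov2017renyi}), and your argument is the one the cited reference essentially uses, so there is nothing to contrast against; all the points you flag as bookkeeping (regular conditional distributions on standard Borel spaces, essential-sup dominated by pointwise sup, $+\infty$ degeneracies, monotonicity of $t \mapsto \tfrac{1}{\alpha-1}\log t$ commuting with $\sup$) are the only places where care is needed, and you handle them correctly.
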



We will also need the following definition.
\begin{definition}[$W_\infty$ distance]
    Let $\mu,\nu$ be probability distributions over $\mathbb{R}^d$. The infinite Wasserstein distance between $\mu,\nu$ is defined as 
    $W_\infty(\mu,\nu) = \inf_{\gamma \in \Gamma(\mu,\nu)}\esssup_{(X,Y)\sim \gamma} \|X-Y\|,$
    where $\Gamma(\mu,\nu)$ is the set of all possible coupling of $\mu,\nu$.
\end{definition}


\subsection{Zeroth-Order Optimization}
We consider the empirical risk minimization (ERM) problem of minimizing the loss function for a parameter vector $w \in \mathbb{R}^d$ over a given dataset $\mathcal{D} = \{x_i\}_{i=1}^n$:
$$
L(w; \mathcal{D}) = \frac{1}{n} \sum_{i=1}^n \ell(w; x_i) = \frac{1}{n} \sum_{i=1}^n \ell_i(w).
$$

To this end, we study a zeroth-order gradient descent (ZOGD) algorithm:
\begin{align}\label{eq:ZOGD}
    & w_{t+1} = \Pi_{\mathcal{K}}\left[w_t - \frac{\eta}{K}\sum_{k=1}^K\hat{g}_t(w_t;u_{t,k}) \right], \text{where } \hat{g}_t(w_t;u_{t,k})\nonumber\\
    &  := \frac{1}{n}\sum_{i=1}^n\msf{clip}\lp\frac{  \ell_i(w_t + \xi u_{t,k})-\ell_i(w_t - \xi u_{t,k})}{2 \xi }; \Delta\rp u_{t,k},
\end{align}
Here, $\xi \geq 0$ is the perturbation scale, $\eta > 0$ is the step size, $\msf{clip}(y; \Delta)$ denotes the clipping function (defined as($\msf{clip}(y; \Delta) \triangleq \frac{y}{\max(1, |y|/\Delta)}$), and the operator $\Pi_{\mathcal{K}}(\cdot)$ denotes projection onto a bounded convex set $\mathcal{K}$ with diameter $D$. Note that the estimator $\hat{g}_t(w_t; u_{t,k})$ is a two-point approximation of the directional derivative along $u_{t,k}$ and serves as a zeroth-order surrogate for the true gradient $\nabla L(w)$. 
We refer to this estimator as the \emph{zeroth-order gradient} throughout the paper. The direction vectors $u_{t,k}$ are randomly sampled from an isotropic distribution and will be specified in the algorithm. In the special case $K = 1$, it is common to drawn $u$ from the Euclidean sphere $\mathbb{S}^{d-1}$ uniformly~\citep{zhang2023dpzero} or multivariate Gaussian distribution~\citep{tang2024private}. Our results can be generalized to other zeroth-order optimization methods, such as one-point estimators~\citep{flaxman2005online} and the directional derivative~\citep{nesterov2017random}.

\section{Hidden-state DP Analysis for Zeroth-order Optimization}\label{sec:main}
Our goal in this section is to show that the final iterate $w_T$ of the privatized ZOGD algorithm \eqref{eq:ZOGD} satisfies $(\alpha, \varepsilon(\alpha))$-RDP, with $\varepsilon(\alpha)$ independent of $T$. This can be translated into $(\varepsilon, \delta)$-DP via the standard RDP-to-DP conversion~\citep{mironov2017renyi}. Equivalently, we aim to upper bound the R\'enyi divergence between two adjacent output distributions $w_T$ and $w_T'$ corresponding to neighboring datasets $\mathcal{D} \simeq \mathcal{D}'$.

To privatize ZOGD~\eqref{eq:ZOGD}, \citet{zhang2023dpzero} has considered two types of noise injection mechanisms in the case of $K=1$:
\begin{align}\label{eq:DP-ZOGD}
    & \text{(a)}: \Pi_{\mathcal{K}}\left[w_t - \eta \hat{g}_t(w_t;u_t) + \eta G_t^{(1)}u_t\right], \nonumber\\
    & \text{(b)}: \Pi_{\mathcal{K}}\left[w_t - \eta \hat{g}_t(w_t;u_t) + \eta G_t^{(2)}\right],
\end{align}
where $G_t^{(1)} \sim \mathcal{N}(0, \sigma_1^2)$ is scalar Gaussian noise applied in the update direction, and $G_t^{(2)} \sim \mathcal{N}(0, \sigma_2^2 I_d)$ is isotropic Gaussian noise over all coordinates. Under the same privacy budget (i.e., $\sigma_1 = \sigma_2$), \citet{zhang2023dpzero} showed that mechanism (a) achieves better utility than mechanism (b).

This advantage can be intuitively explained using the composition-based privacy framework: conditioned on the previous iterate $w_{t-1}$, the sensitive information at step $t$ is only leaked through the scalar step size $\hat{g}_t(w_t; u_t)$ along the (random) direction $u_t$. Thus, in mechanism (b), noise injected orthogonally to $u_t$ (i.e., in the remaining $d-1$ dimensions) does not contribute to privacy but still degrades utility. Conversely, mechanism (a) is more efficient but presents technical difficulties: due to the directional nature of the noise, standard shifted-divergence analysis (PABI) techniques are no longer directly applicable. This leads to our central question:
\emph{Can we design a noise mechanism that (1) concentrates noise along the update directions to improve utility, while (2) still enabling shifted-divergence analysis for final iterate privacy?}

\subsection{A Unified Noisy ZOGD mechanism}
To address the above, we propose a generalized noise injection scheme that subsumes both (a) and (b) as special cases, while enabling tighter hidden-state privacy analysis:
\begin{align}\label{eq:DP_ZOGD-general}
    & \text{Noisy-ZOGD: } w_{t+1} = \Pi_{\mathcal{B}_R}\big[w_t - \frac{\eta}{K}\sum_{k=1}^K\hat{g}_{t}(w_t;u_{t,k}) \nonumber\\
    & +  \frac{\eta}{\sqrt{K}} \sum_{k=1}^K G_{t,k}^{(1)} u_{t,k} + \frac{\eta}{\sqrt{d}} G_t^{(2)}\big],
\end{align}
where $\mathcal{B}_R$ is the $\ell_2$ ball of radius $R$ centered at the origin, $G_{t,k}^{(1)} \sim \mathcal{N}(0, \beta_t\sigma^2)$, and $G_t^{(2)} \sim \mathcal{N}(0, (1-\beta_t) \sigma^2 I_d)$. The vectors $\{u_{t,k}\}_{k=1}^K$ are orthonormal and drawn uniformly from the Stiefel manifold $V_K(\mathbb{R}^d)$; when $K = 1$, this reduces to the uniform distribution on the sphere, i.e., $u_{t,k} \sim \textsf{Unif}(\mathbb{S}^{d-1})$.

A couple of remarks are given in order. First, in zeroth-order optimization, using $K > 1$ directions improves gradient estimation~\citep{duchi2015optimal}, though at higher computational cost per iteration. Prior work typically assumes the directions $\{u_{t,k}\}$ are drawn independently from $\textsf{Unif}(\mathbb{S}^{d-1})$. In contrast, our orthonormal construction facilitates tighter privacy analysis under the hidden-state model. Second, the parameterization in Eq.~\eqref{eq:DP_ZOGD-general} ensures equivalent total noise variance (and thus utility guarantees) across all $\beta_t \in [0,1]$ and $K \geq 1$, following the utility analysis of~\citet{zhang2023dpzero} (see Appendix~\ref{apx:utility} for details). In particular, setting $K = 1$ with $\beta_t = 1$ and $\beta_t = 0$ recovers mechanisms (a) and (b), respectively. Next, we provide baseline privacy bounds for the above two cases (assuming access to the entire trajectory $\{w_t\}_{t=1}^T$) using standard composition.
\begin{theorem}[Public-State Privacy Baseline]\label{thm:standard_result}
    The Noisy-ZOGD update rule \eqref{eq:DP_ZOGD-general} with $T$ iterations satisfies $(\varepsilon,\delta)$-DP, where
    \begin{align}
        & \varepsilon = O\lp\sqrt{\frac{\Delta^2\log(1/\delta) T}{n^2\sigma^2}} \rp\;\text{when }\beta_t=1,\nonumber\\
        & \varepsilon = O\lp\sqrt{\frac{d \Delta^2\log(1/\delta) T}{K n^2\sigma^2}}\rp\;\text{when }\beta_t=0.
    \end{align}
\end{theorem}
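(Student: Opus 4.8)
The plan is to establish both bounds via a per-iteration sensitivity analysis combined with Rényi composition (Lemma~\ref{lma:Renyi_strong_comp}) and the standard RDP-to-DP conversion. Since the trajectory $\{w_t\}$ is assumed public, it suffices to bound the RDP cost of a single update step conditioned on $w_t$, and then multiply by $T$. The only sensitive quantity at step $t$, conditioned on $w_t$ and the (data-independent) directions $\{u_{t,k}\}$, is the clipped finite-difference vector; everything else is post-processing and hence controlled by Lemma~\ref{lma:Renyi_post_process}.

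First I would compute the per-step $\ell_2$-sensitivity of the pre-noise update. Under replacement adjacency $\mathcal{D}\simeq\mathcal{D}'$, only one summand $\msf{clip}(\cdot;\Delta)$ differs in $\hat g_t(w_t;u_{t,k})$, so for each $k$ the clipped scalar changes by at most $2\Delta/n$; since $\|u_{t,k}\|=1$, the contribution to $\frac{\eta}{K}\sum_k \hat g_t$ changes by at most $\frac{\eta}{K}\cdot\frac{2\Delta}{n}$ in the direction $u_{t,k}$. For the case $\beta_t=0$, the injected noise is $\frac{\eta}{\sqrt K}\sum_k G^{(1)}_{t,k}u_{t,k}$ with $G^{(1)}_{t,k}\sim\mathcal N(0,\sigma^2)$; because the $u_{t,k}$ are orthonormal, this is exactly a Gaussian with covariance $\frac{\eta^2\sigma^2}{K}$ on the $K$-dimensional subspace $\mathrm{span}\{u_{t,k}\}$, and the sensitivity vector also lies in that subspace with squared norm at most $K\cdot(\frac{2\eta\Delta}{Kn})^2 = \frac{4\eta^2\Delta^2}{Kn^2}$. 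Thus the Gaussian mechanism on this subspace gives per-step RDP $\varepsilon_t(\alpha)\le \frac{\alpha}{2}\cdot\frac{4\eta^2\Delta^2/(Kn^2)}{\eta^2\sigma^2/K} = \frac{2\alpha\Delta^2}{n^2\sigma^2}$, independent of $K$ and $d$. Composing over $T$ steps via Lemma~\ref{lma:Renyi_strong_comp} yields $O(\alpha T\Delta^2/(n^2\sigma^2))$-RDP, and the RDP-to-DP conversion (optimizing over $\alpha$) produces $\varepsilon = O(\sqrt{\Delta^2\log(1/\delta)T/(n^2\sigma^2)})$.

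For the case $\beta_t=1$, the injected noise is the isotropic term $\frac{\eta}{\sqrt d}G^{(2)}_t$ with $G^{(2)}_t\sim\mathcal N(0,\sigma^2 I_d)$, i.e.\ covariance $\frac{\eta^2\sigma^2}{d}I_d$. The sensitivity vector now has squared norm at most $\frac{4\eta^2\Delta^2}{Kn^2}$ as computed above, so the Gaussian mechanism gives per-step RDP at most $\frac{\alpha}{2}\cdot\frac{4\eta^2\Delta^2/(Kn^2)}{\eta^2\sigma^2/d} = \frac{2\alpha d\Delta^2}{Kn^2\sigma^2}$. Composing over $T$ and converting gives $\varepsilon = O(\sqrt{d\Delta^2\log(1/\delta)T/(Kn^2\sigma^2)})$, matching the statement. (Here I am using the fact that the projection $\Pi_{\mathcal B_R}$ is data-independent post-processing; and that conditioning on the public directions and previous iterate in Lemma~\ref{lma:Renyi_strong_comp} is exactly what the $\sup$ over prefixes accommodates.)

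The main obstacle, and the only genuinely delicate point, is the sensitivity bookkeeping when $K>1$ with orthonormal directions: one must verify that the worst-case change across all $k$ simultaneously still yields squared norm $O(\eta^2\Delta^2/(Kn^2))$ rather than $O(\eta^2\Delta^2/n^2)$ — this is where the $1/K$ variance-reduction-style gain enters, and it relies on the $\frac{1}{K}$ (not $\frac{1}{\sqrt K}$) scaling of the gradient-estimator average together with orthonormality of the $u_{t,k}$ so the per-direction perturbations add in quadrature. A secondary point is to confirm the noise calibration in Eq.~\eqref{eq:DP_ZOGD-general} — the $\frac{\eta}{\sqrt K}$ and $\frac{\eta}{\sqrt d}$ prefactors — produces the clean cancellations above; once that algebra is pinned down, the rest is the textbook Gaussian-mechanism-plus-composition argument, so I would keep that part terse.
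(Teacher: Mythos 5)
Your proof is correct and follows essentially the same route as the paper's: compute the per-step $\ell_2$-sensitivity from the clipping bound and orthonormality of the $u_{t,k}$, apply the Gaussian mechanism, compose over $T$ via Lemma~\ref{lma:Renyi_strong_comp}, and convert RDP to DP. The only cosmetic differences are that you treat each step as a single $K$-dimensional Gaussian mechanism on $\mathrm{span}\{u_{t,k}\}$ whereas the paper decomposes it into $K$ one-dimensional mechanisms and composes over $k$ as well as $t$ (mathematically equivalent), and you carry the worst-case replacement sensitivity $2\Delta/n$ per direction explicitly while the paper writes $\Delta/n$ — your bookkeeping is arguably the cleaner one, and neither affects the big-$O$.
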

First of all, since $K \leq d$, setting $\beta_t = 0$ always leads to a worse privacy bound under fixed utility constraints. This motivates the design of scalar-directional noise ($\beta_t = 1$), as explored in \citet{zhang2023dpzero}. Second, the above privacy bounds assume that all intermediate iterates $w_1, \dots, w_T$ are publicly released. As such, the total privacy cost accumulates linearly over iterations, growing unboundedly as $T$ increases. Finally, another way to account the final privacy budget is to treat the mechanism as output perturbation, that is, by ignoring all intermediate noise but only focusing on the last step. Since $w_T \in \mathcal{B}_R$, the Gaussian noise in the last iterate always ensures an $\lp O\lp \sqrt{{R^2 \log(1/\delta)}/{\sigma^2}}\rp, \delta\rp$-DP guarantee, independent of $T$. However, this bound is generally weaker than the composition-based bound in Theorem~\ref{thm:standard_result} for most practical $T$.


Our main contribution is a \emph{non-trivial convergent} privacy analysis and guarantee for the final iterate $w_T$. Specifically, for smooth (strongly) convex problems, we show that it satisfies $\lp O\lp \sqrt{\frac{ \Delta^2\log(1/\delta)}{n^2 \sigma^2}\min(T,\frac{Rn\sqrt{d}}{K\Delta})}\rp, \delta\rp$-DP; see Figure~\ref{fig:main_fig1} for numerical results and comparisons to the standard composition and output perturbation. Note that our analysis extends to non-convex settings, and, while the privacy bound in those cases does not admit a closed-form expression, we provide an numerical privacy accountant to compute the privacy budget. Importantly, our analysis reveals that by carefully tuning $\beta_t \in (0,1)$, one can achieve improved privacy-utility trade-offs using hidden-state analysis. Our results also extend to mini-batch settings and incorporate privacy amplification via subsampling~\citep{balle2018privacy}.

\subsection{Hidden-state Privacy Analysis for Noisy-ZOGD}

Next, we present our main theorem, the first \emph{convergent} privacy bound for the zeroth-order ERM. 

\begin{theorem}[Hidden-State DP Bound]\label{thm:main}
    Assume $\ell(\cdot,x)$ are $M$-smooth and $\Delta$-Lipschitz for the first argument and all possible data $x$. For any $\alpha>1,d\geq 2K,\vartheta\geq 0$, define
    
    \begin{align*}
        & \bar{c}_1 = \sqrt{1-(1-c^2)\frac{K}{d} + \vartheta(1+c^2)\frac{K}{d}},\;c_2 = \eta M \xi,\\
        & \delta_f = 2(T-\tau)\exp\lp -\frac{3\vartheta^2 dK}{12(d-K) + 8\vartheta(d-2K)} \rp.\\
        & \rho_\alpha= \sum_{t=\tau}^{T-1}\lp \frac{\alpha}{2\beta_t \sigma^2}\lp\frac{2\Delta}{n}\rp^2 + \frac{\alpha a_{t}^2 d}{2\eta^2\lp 1-\beta_t\rp\sigma^2}\rp,\\
        & \rho_\alpha^\star = \min_{\tau,\beta_t,a_t} \rho_\alpha\\
        & \;s.t.\;\tau\in\{0,\ldots,T-1\},\;\beta_t\in [0,1],\;a_t,z_t\geq 0,\\
        & \;\forall t\geq \tau, z_{t-1} = \bar{c_1}^{-1}(z_t + a_t-c_2),\;z_{T} = 0,\\
        & \;z_{\tau}\geq \min\lp 2R,\frac{2\eta \Delta \tau}{\sqrt{K}}\rp,
    \end{align*}
    where $c=1+\frac{\eta M}{K}$. If $\ell$ is also convex and choose $\eta \leq 2K/M$, we have $c=1$. If $\ell$ is also $m$-strongly convex and choose $\eta \leq K/M$, we have $c = 1-\frac{\eta m}{K}$. Then for any $\delta_p \in (0,1)$, the Noisy-ZOGD update~\eqref{eq:DP_ZOGD-general} is 
    \begin{align*}
        (\rho_\alpha^\star+\frac{\log(1/\delta_p)}{\alpha-1},\delta_p+\delta_f)\text{-DP}.
    \end{align*}
    
\end{theorem}

While the tightest privacy bound in Theorem~\ref{thm:main} requires solving a constrained optimization problem, we emphasize that any suboptimal feasible solution $\rho_\alpha$ still yields a valid privacy guarantee. That is, any $\tau,\beta_t,a_t$ that satisfy the constraint stated in Theorem~\ref{thm:main} would result in a valid privacy bound. In Appendix~\ref{apx:exp_results}, we provide numerical results to demonstrate how our bound improves upon standard composition and the naive output perturbation analysis. Although the optimal bound generally lacks a closed-form expression, we can derive a closed-form result in the case of $m$-strongly convex objectives. 

\begin{corollary}[Closed-from DP guarantee of Noisy-ZOGD]\label{cor:main}
    Assume $\ell(\cdot,x)$ are $M$-smooth, $m$-strongly convex and $\Delta$-Lipschitz for the first argument and all possible data $x$. The Noisy-ZOGD update~\eqref{eq:DP_ZOGD-general} is $(\varepsilon,\delta)$-DP with the choice $\eta = K/M$, $ \max(\frac{20 (1+c^2)^2}{3 (1-c^2)^2}\log(\frac{4}{\delta}\lceil \frac{MRn\sqrt{2d}}{\Delta}\rceil),1) \leq K \leq d/2$ and $\xi \leq \frac{2\Delta}{n\eta M \sqrt{2d}}$, where
    \begin{align*}
        & \varepsilon = O\lp \sqrt{ \frac{\Delta^2\log(1/\delta)}{n^2 \sigma^2}\min(T,\frac{M R n \sqrt{d}}{K \Delta})}\rp,\\
        & c = 1-\frac{m}{M}.
    \end{align*}
\end{corollary}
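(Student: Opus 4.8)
The plan is to instantiate the constrained minimization in Theorem~\ref{thm:main} at one explicit feasible point, bound its value by $O\!\left(\tfrac{\alpha\Delta^2}{n^2\sigma^2}\min(T,\rho^\star)\right)$ with $\rho^\star:=\tfrac{MRn\sqrt{d}}{K\Delta}$, and then apply the RDP-to-DP conversion. In the $m$-strongly convex regime with $\eta=K/M$, Theorem~\ref{thm:main} gives $c=1-\tfrac{\eta m}{K}=1-\tfrac{m}{M}\in(0,1)$, so $1-c^2$ is bounded away from $0$. First I would fix the auxiliary parameter $\epsilon:=\tfrac{1-c^2}{2(1+c^2)}\in(0,\tfrac12)$, which keeps $c_1^2=1-(1-c^2)\tfrac{K}{d}+\epsilon(1+c^2)\tfrac{K}{d}=1-\tfrac{(1-c^2)K}{2d}$ strictly below $1$ with a multiplicative gap; hence $1-c_1\asymp\tfrac{(1-c^2)K}{d}$ and $c_1^{\rho}$ decays geometrically on the scale $\rho\sim d/((1-c^2)K)$. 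Since $\epsilon\le 1$ and $d\ge 2K$ make the denominator $12(d-K)+8\epsilon(d-2K)$ in the failure term $O(d)$, the concentration exponent is $\Omega\!\left(\tfrac{(1-c^2)^2}{(1+c^2)^2}K\right)$; combined with $T-\tau\le\lceil\rho^\star\rceil\le\lceil\tfrac{MRn\sqrt{2d}}{\Delta}\rceil$ (using $K\ge 1$), the hypothesized lower bound on $K$ then forces $\mathbb{P}(\mathcal{G}^c)\le\delta/2$.

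Next I would exhibit the feasible point. Take $\beta_t\equiv\tfrac12$ (any constant would do), let the burn-in length be $\tau:=\max\big(0,\,T-\lceil\rho^\star\rceil\big)$ so that the active window has length $T-\tau=\Theta(\min(T,\rho^\star))$, and set $z_\tau:=\min\!\left(2R,\tfrac{2\eta\Delta\tau}{\sqrt{K}}\right)\le 2R$ (meeting the constraint with equality) together with $z_T=0$. Unrolling the linear recursion $z_{t-1}=c_1^{-1}(z_t+a_t-c_2)$ between these endpoints shows the only requirement on $\{a_t\}_{t=\tau}^{T-1}$ is the single linear constraint $\sum_{t=\tau}^{T-1}c_1^{T-1-t}a_t=S$, where $S:=c_1^{T-\tau}z_\tau+c_2\tfrac{1-c_1^{T-\tau}}{1-c_1}$ and $c_2=\eta M\xi$; I would then use the Cauchy--Schwarz-optimal allocation $a_t\propto c_1^{T-1-t}$, which is nonnegative, keeps every $z_t\ge 0$, and attains $\sum_t a_t^2=\tfrac{S^2(1-c_1^2)}{1-c_1^{2(T-\tau)}}$. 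When $T\le\lceil\rho^\star\rceil$ I would instead take $\tau=0$ with $z_t\equiv a_t\equiv 0$ (legitimate since the constraint reads $z_0\ge\min(2R,0)=0$), so the second sum vanishes outright when $\xi=0$.

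It then remains to bound the two sums of Theorem~\ref{thm:main} at this point. The first is $\sum_{t=\tau}^{T-1}\tfrac{\alpha}{2\beta_t\sigma^2}\!\left(\tfrac{2\Delta}{n}\right)^{2}=\tfrac{4\alpha(T-\tau)\Delta^2}{n^2\sigma^2}=O\!\left(\tfrac{\alpha\Delta^2}{n^2\sigma^2}\min(T,\rho^\star)\right)$ directly. For the second, $\sum_t\tfrac{\alpha a_t^2 d}{2\eta^2(1-\beta_t)\sigma^2}=\tfrac{\alpha d}{\eta^2\sigma^2}\cdot\tfrac{S^2(1-c_1^2)}{1-c_1^{2(T-\tau)}}$, I would split $S^2\le 2c_1^{2(T-\tau)}z_\tau^2+2c_2^2\!\left(\tfrac{1-c_1^{T-\tau}}{1-c_1}\right)^{2}$. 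The first piece: the elementary bound $c_1^{-2\rho}-1\ge\rho(1-c_1^2)$ yields $(1-c_1^2)\tfrac{c_1^{2\rho}}{1-c_1^{2\rho}}\le\tfrac1\rho$, so with $z_\tau\le 2R$, $\eta=K/M$, and $T-\tau\gtrsim\min(T,\rho^\star)$ this is $O\!\left(\tfrac{\alpha dR^2M^2}{K^2\sigma^2(T-\tau)}\right)$, which substituting $\rho^\star=\tfrac{MRn\sqrt{d}}{K\Delta}$ collapses to $O\!\left(\tfrac{\alpha\Delta^2}{n^2\sigma^2}\min(T,\rho^\star)\right)$ (and is simply absent in the sub-case $T\le\lceil\rho^\star\rceil$). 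The second (bias) piece is $O\!\left(\tfrac{\alpha d c_2^2}{\eta^2\sigma^2}\min(T-\tau,\tfrac1{1-c_1})\right)$, and the hypothesis $\xi\le\tfrac{2\Delta}{n\eta M\sqrt{2d}}$ (hence $d c_2^2\le 2\Delta^2/n^2$) reduces it to the same order (it vanishes for $\xi=0$). Thus $\varepsilon(\alpha)\le C\tfrac{\alpha\Delta^2}{n^2\sigma^2}\min(T,\rho^\star)$ on $\mathcal{G}$ for a universal $C$. Finally, the RDP-to-DP conversion~\citep{mironov2017renyi} gives, conditionally on $\mathcal{G}$, an $\big(\varepsilon(\alpha)+\tfrac{\log(2/\delta)}{\alpha-1},\,\delta/2\big)$-DP guarantee for all $\alpha>1$; a union bound with $\mathbb{P}(\mathcal{G}^c)\le\delta/2$ removes the conditioning at the cost of doubling $\delta$, and optimizing over $\alpha$ (taking $\alpha-1\asymp\sqrt{\log(1/\delta)\,n^2\sigma^2/(\Delta^2\min(T,\rho^\star))}$) gives $\varepsilon=O\!\left(\sqrt{\tfrac{\Delta^2\log(1/\delta)}{n^2\sigma^2}\min(T,\rho^\star)}\right)$, the claimed bound.

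The main obstacle I anticipate is the joint calibration of the burn-in $\tau$ and the allocation $\{a_t\}$: a short active window $T-\tau$ makes the composition-like first sum small but forces $c_1^{T-\tau}\approx 1$, so absorbing the shift $z_\tau\asymp R$ by time $T$ demands a large $\sum_t a_t^2$ and blows up the second sum, whereas a long window inflates the first sum --- the choice $T-\tau=\Theta(\rho^\star)$ is exactly the balance point. Verifying that both sums then land at $O\!\left(\tfrac{\alpha\Delta^2}{n^2\sigma^2}\min(T,\rho^\star)\right)$, while simultaneously keeping the finite-difference bias $c_2=\eta M\xi$ subdominant through the cap on $\xi$ and the contraction exponent large through the lower bound on $K$, is the technical heart; the rest is routine bookkeeping plus the standard $\alpha$-optimization in the RDP-to-DP step.
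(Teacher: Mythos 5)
Your proposal is genuinely different from the paper's proof in one key respect: you keep $c_1<1$ (choosing $\epsilon=\tfrac{1-c^2}{2(1+c^2)}$ so $c_1^2=1-\tfrac{(1-c^2)K}{2d}$) and run a geometric-weights argument, whereas the paper sets $\epsilon=\tfrac{1-c^2}{1+c^2}$ exactly, which forces $c_1=1$ and collapses the recursion $z_{t-1}=c_1^{-1}(z_t+a_t-c_2)$ to the plain telescoping sum $z_{t-1}=z_t+a_t-c_2$. Once $c_1=1$, the paper substitutes $a_t'=a_t-c_2\ge 0$, observes that $z_t\ge 0$ is then automatic, and applies Cauchy--Schwarz only to $\sum(a_t')^2$ subject to $\sum a_t'\ge z_\tau^\star$. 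Your elementary bound $(1-c_1^2)\tfrac{c_1^{2\rho}}{1-c_1^{2\rho}}\le\tfrac1\rho$ is correct and recovers the same $1/(T-\tau)$ decay as the paper in the end, so the final asymptotics agree.

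There is, however, a genuine gap in your argument as written. You claim that the only constraint left after unrolling the recursion between $z_T=0$ and $z_\tau$ is a single linear equality, and that the Cauchy--Schwarz-optimal allocation $a_t\propto c_1^{-(T-t)}$ keeps every intermediate $z_t\ge 0$. This is false once $c_2=\eta M\xi>0$. The binding constraint at $t=T-1$ is $a_{T-1}\ge c_2$ (since $z_{T-1}=c_1^{-1}(a_{T-1}-c_2)$), but the optimal unconstrained allocation with $c_1<1$ assigns its smallest weight precisely to $a_{T-1}$; a direct computation shows $a_{T-1}\asymp c_2(1+c_1)c_1^{T-1-\tau}\to 0$ as $T-\tau$ grows, so $a_{T-1}<c_2$ and the path $\{z_t\}$ dips negative. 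Your remark that the bias piece ``vanishes for $\xi=0$'' does not cure this because the corollary is stated for an entire range of $\xi>0$. The standard fix — precisely what the paper does — is to change variables $a_t=c_2+a_t'$ \emph{before} optimizing, which makes $z_t\ge 0$ automatic and moves $c_2$ into the additive $2c_2^2$ slack; if you rerun your geometric Cauchy--Schwarz on $\{a_t'\}$ this way the gap closes. A secondary mismatch: with your smaller $\epsilon$ the concentration exponent in $\mathbb{P}(\mathcal{G}^c)$ degrades by a factor of $4$, so the feasibility condition you derive would read $K\gtrsim\tfrac{80(1+c^2)^2}{3(1-c^2)^2}\log(\cdot)$, not the $\tfrac{20(1+c^2)^2}{3(1-c^2)^2}\log(\cdot)$ appearing in the corollary; the paper's threshold choice $\epsilon=\tfrac{1-c^2}{1+c^2}$ (giving $c_1=1$) is exactly what produces that constant. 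Neither issue affects the big-$O$ shape of the bound, but both mean the proof as written does not establish the corollary with the stated hypotheses.
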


The bound on $\varepsilon$ in Corollary~\ref{cor:main} is convergent as $T \to \infty$ and becomes inversely proportional to $K$ in this regime. This behavior contrasts with the standard composition-based privacy bound in Theorem~\ref{thm:standard_result}, where using multiple directions ($K > 1$) offers no privacy gain when $\beta_t = 0$. This distinction highlights a core contribution of our work: our hidden-state analysis reveals that incorporating multiple zeroth-order estimates improves not only utility but also privacy—a phenomenon not observed in analogous first-order methods. Compared to prior results in first-order settings~\citep{altschuler2022privacy, chien2025convergent}, our results demonstrate that using multiple zeroth-order directions provides its own privacy benefits in the zeroth-order regime. While leveraging $K>1$ directions is computationally more expensive than the standard DP zeroth-order method (i.e., case (a) in~\eqref{eq:DP-ZOGD}), our results unveil its privacy benefit (under fixed utility bound) and thus enable a new privacy-computational complexity tradeoff for DP zeroth-order optimization. 

Before presenting the proof, we introduce several technical lemmas and definitions that will be utilized in our hidden-state DP analysis. The first is related to the Lipschitz behavior of the zeroth-order update map.


\begin{definition}[Generalized Lipschitz]\label{def:generalized_Lip}
    For $c_1,c_2\geq 0$, the mapping $\phi$ is said to be $(c_1,c_2)$-generalized Lipschitz if for any $x,y$ in its domain,
    $\|\phi(x)-\phi(y)\| \leq c_1\|x-y\| + c_2.$
\end{definition}


We next show that the noiseless ZOGD update $\hat{\psi}_t(w) = w-\frac{\eta}{K}\sum_{k=1}^K\hat{g}_t(w;u_{t,k})$ is indeed generalized Lipschitz with high probability when its first-order counterpart $\phi_t(w) = w- \frac{\eta}{nK}\sum_{i=1}^n \nabla \ell_i(w)$ is Lipschitz. Note that when the loss is at least smooth, one can show that the GD update is indeed Lipschitz with a proper choice of step size~\citep{hardt2016train}. 

\begin{lemma}\label{lma:c12_fix_u_case_1}
    Let $\hat{\psi}(w) = w - \frac{\eta}{K} \sum_{k=1}^K\hat{g}(w;u_k)$, where $\hat{g}$ and $u_k$ are defined in~\eqref{eq:DP_ZOGD-general}. Assume the corresponding first order counterpart $\phi$ is $c$-Lipschitz $\ell_i,\ell_i^\prime$ are $\Delta$-Lipschitz and $M$-smooth. Then $\hat{\psi}$ is $(c_1,c_2)$-generalized Lipschitz, where $c_2 = \eta M \xi$ and
    \begin{align*}
        & c_1 = \sqrt{1-\sum_{k=1}^K\upsilon_k + c^2\sum_{k=1}^K\gamma_k}, \\
        & \sum_{k=1}^K\upsilon_k \sim \msf{Beta}\lp\frac{K}{2},\frac{d-K}{2}\rp,\\
        & \;\sum_{k=1}^K \gamma_k \sim \msf{Beta}\lp\frac{K}{2},\frac{d-K}{2}\rp.
    \end{align*}
\end{lemma}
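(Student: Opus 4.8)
The plan is to reduce the generalized‑Lipschitz bound for the zeroth‑order map $\hat\psi$ to the known Lipschitz bound for its first‑order counterpart $\phi$, by carefully tracking how the zeroth‑order estimator differs from a genuine directional derivative. The key decomposition I would use is
\begin{align*}
\hat\psi(w) - \hat\psi(w') = \Bigl(I - \tfrac{1}{K}\sum_{k=1}^K u_k u_k^\top\Bigr)(w-w') + \tfrac{1}{K}\sum_{k=1}^K u_k u_k^\top\,\bigl(\phi(w)-\phi(w')\bigr) + E,
\end{align*}
where the first two terms would be exactly what one gets if the zeroth‑order gradient along $u_k$ equalled the true directional derivative $\langle \nabla L(w), u_k\rangle u_k$ (note that $w - \frac{\eta}{K}\sum_k \langle \nabla L(w), u_k\rangle u_k = (I - P)(w) + P(\phi(w))$ with $P = \frac{1}{K}\sum_k u_k u_k^\top$), and $E$ collects all the error terms: the finite‑difference bias of the two‑point estimator (controlled by $M$‑smoothness, giving the additive $c_2 = \eta M \xi$) and the effect of clipping. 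For the clipping I would invoke $\Delta$‑Lipschitzness of each $\ell_i$: the true directional derivative has magnitude at most $\Delta$, and the finite‑difference quotient is within $M\xi$ of it, so with $\xi$ small the clip threshold $\Delta$ is essentially never active in the relevant regime — or more robustly, clipping is a $1$‑Lipschitz contraction, so it can only help the Lipschitz constant while contributing at most an $O(M\xi)$ additive term to the bias. I would fold all additive pieces into $c_2 = \eta M\xi$.

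Having isolated the "ideal directional" part, the operator acting on $w-w'$ is $(I - P) + P\,\phi'(\cdot)$‑like; since $\phi$ is $c$‑Lipschitz and $P$ is an orthogonal projection onto the $K$‑dimensional span of $\{u_k\}$, I would bound
\begin{align*}
\|(I-P)(w-w') + P(\phi(w)-\phi(w'))\| \le \sqrt{\|(I-P)(w-w')\|^2 + c^2\|P(w-w')\|^2},
\end{align*}
using orthogonality of the ranges of $I-P$ and $P$ together with $\|P(\phi(w)-\phi(w'))\| \le c\|w-w'\|$ — here I need a little care because $\phi$ is $c$‑Lipschitz as a map, not that $P$ commutes with it, so I would actually bound $\|P(\phi(w)-\phi(w'))\| \le \|\phi(w)-\phi(w')\| \le c\|w-w'\|$ and then argue the combined operator norm. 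Writing $\|(I-P)(w-w')\|^2 = (1 - \|P v\|^2/\|v\|^2)\|v\|^2$ with $v = w-w'$, the multiplicative constant becomes $c_1 = \sqrt{1 - \rho + c^2\rho}$ where $\rho = \|Pv\|^2/\|v\|^2$ is the squared length of the projection of a fixed unit vector onto the random $K$‑plane $\mathrm{span}(u_1,\dots,u_K)$.

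The distributional claim is then the classical fact that the squared norm of the projection of a fixed unit vector onto a uniformly random $K$‑dimensional subspace of $\mathbb{R}^d$ (equivalently, onto the span of the first $K$ columns of a Haar‑random orthogonal matrix, i.e.\ a uniform point on the Stiefel manifold $V_K(\mathbb{R}^d)$) follows $\mathsf{Beta}(K/2, (d-K)/2)$. I would prove this by rotational invariance: fix $v = e_1$, let $O$ be Haar on $O(d)$, then $\|P e_1\|^2 = \sum_{k=1}^K O_{1k}^2$ is the sum of squares of the first $K$ coordinates of a uniform point on $\mathbb{S}^{d-1}$ (the first row of $O$), and the $\mathsf{Beta}(K/2,(d-K)/2)$ law of $\sum_{k=1}^K g_k^2 / \sum_{j=1}^d g_j^2$ for i.i.d.\ Gaussians $g_j$ is standard. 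This yields both $\sum_k \upsilon_k$ (the projection relevant to the $I-P$ part) and $\sum_k \gamma_k$ (the part scaled by $c^2$) as the same $\mathsf{Beta}(K/2,(d-K)/2)$ variable — in the statement they are written separately because in the $c\neq 1$ case the two appear with different coefficients, but each individually is this Beta law.

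The main obstacle I anticipate is the bias/clipping bookkeeping: making fully rigorous that the finite‑difference two‑point estimator along $u_k$, after clipping, differs from the true directional‑derivative update by something bounded \emph{additively} by $\eta M\xi$ and otherwise behaves like a contraction — in particular ensuring the clip operation does not break the clean orthogonal‑decomposition structure and that the $M$‑smoothness Taylor remainder is uniform over all $w, w'$ and all data (hence the "all possible data $x$" hypothesis). A secondary subtlety is that $\hat\psi$ must be generalized Lipschitz for \emph{every} fixed realization of $\{u_k\}$ (the randomness over $u$ is what makes $c_1$ a random variable, to be handled by the high‑probability event $\mathcal{G}$ later in Theorem~\ref{thm:main}), so the decomposition above must hold pointwise in $u$, with only the scalar $\rho = \sum_k \upsilon_k$ carrying the randomness.
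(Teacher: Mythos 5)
Your overall architecture matches the paper's: isolate the finite-difference Taylor remainder (and the clipping, which the $\Delta$-Lipschitz hypothesis disarms) into the additive $c_2 = \eta M\xi$, then decompose the remaining linear part of $\hat\psi(w)-\hat\psi(w')$ into an orthogonal-complement piece and a projected-$\phi$ piece, with the Beta$(K/2,(d-K)/2)$ law for $\sum_k\langle v/\|v\|,u_k\rangle^2$ coming from the Haar/Gaussian characterization you describe. That much is the right proof. Two issues, one minor and one substantive, keep your sketch from actually reaching the stated bound.

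The minor one: $P = \tfrac{1}{K}\sum_k u_k u_k^\top$ is not an orthogonal projection for $K>1$; the ranges of $I-P$ and $P$ are then not orthogonal, so the Pythagorean split you invoke fails. The fix is the one the paper uses implicitly: take $P = \sum_k u_k u_k^\top$ (the genuine projection onto $\mathrm{span}\{u_k\}$) and fold the $1/K$ into the step size of the first-order map $\phi(w) = w - \tfrac{\eta}{K}\nabla L(w)$.

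The substantive gap is in how you bound the projected piece. You acknowledge that $P$ and $\phi$ do not commute and fall back on $\|P(\phi(w)-\phi(w'))\| \le \|\phi(w)-\phi(w')\| \le c\|w-w'\|$. But that yields $c_1^2 = 1-\rho+c^2$, not the claimed $1-\rho+c^2\rho$, and for $c\ge1$ it never goes below $1$ — so it cannot feed the downstream high-probability contraction in Lemma~\ref{lma:high_prob_c12_case1}. The final formula $c_1 = \sqrt{1-\rho+c^2\rho}$ you write down does not follow from either of your stated bounds. The paper's move is to keep the projection of $\Delta\phi := \phi(w)-\phi(w')$ explicit: since $P$ is a projection, $\|P\Delta\phi\|^2 = (\sum_k\gamma_k)\|\Delta\phi\|^2$ \emph{exactly}, with $\gamma_k = \langle \Delta\phi/\|\Delta\phi\|, u_k\rangle^2$, and only then apply $\|\Delta\phi\|\le c\|w-w'\|$. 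This gives $c_1^2 = 1-\sum_k\upsilon_k + c^2\sum_k\gamma_k$, where $\upsilon_k$ and $\gamma_k$ are squared projections of \emph{two different} fixed unit vectors (the directions of $w-w'$ and of $\Delta\phi$) onto the same random frame — each marginally Beta$(K/2,(d-K)/2)$ but not the single $\rho$ you use. Your closing remark that $\upsilon$ and $\gamma$ are "the same variable ... written separately because of different coefficients" misses this: they are genuinely different random quantities, which is why the paper's Lemma~\ref{lma:high_prob_c12_case1} takes a union bound over the lower tail of $\sum\upsilon_k$ and the upper tail of $\sum\gamma_k$. Retaining $\sum_k\gamma_k$ (concentrated near $K/d \ll 1$) rather than upper-bounding it by $1$ is precisely what makes $c_1$ concentrate near or below $1$ and is the crux of the lemma.
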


\begin{lemma}\label{lma:high_prob_c12_case1}
    Let $A,B$ be any unit vector and $u_{k}$ defined in~\eqref{eq:DP_ZOGD-general}. Denoting $\upsilon_k = \lp \ip{A}{u_{k}}\rp^2$, $\gamma_k = \lp\ip{B}{u_{k}}\rp^2$ and $c_1=\sqrt{1-\sum_{k=1}^K\upsilon_k+c^2\sum_{k=1}^K\gamma_k}$. Then for any $\vartheta\geq 0$ and $d\geq 2K$, 
    \begin{align*}
        & \mathbb{P}\lp c_1 \leq \sqrt{1-(1-c^2)\frac{K}{d} + \vartheta(1+c^2)\frac{K}{d}} \rp \\
        & \geq 1-2\exp\lp-\frac{3\vartheta^2 dK}{12(d-K) + 8\vartheta(d-2K)}\rp.
    \end{align*}
\end{lemma}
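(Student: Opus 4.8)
The plan is to reduce the claim to two one-sided tail estimates for Beta random variables and then establish those via a Chernoff bound on a $\chi^2$ representation.

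\textbf{Step 1 (reduction to Beta tails).} Write $X=\sum_{k=1}^K\upsilon_k$ and $Y=\sum_{k=1}^K\gamma_k$. Since $\{u_k\}_{k=1}^K$ spans a uniformly random $K$-dimensional subspace with associated orthogonal projector $P$, we have $X=\|PA\|^2$ and $Y=\|PB\|^2$, so $X\le 1$ (ensuring $c_1$ is real), and—by rotational invariance together with $\|A\|=\|B\|=1$—each is marginally $\mathrm{Beta}(K/2,(d-K)/2)$, as already recorded in Lemma~\ref{lma:c12_fix_u_case_1}; in particular $\E[X]=\E[Y]=K/d$. Using $c_1^2=1-X+c^2Y$ and $c^2\ge 0$, a one-line rearrangement shows that the event $\{X\ge(1-\epsilon)K/d\}\cap\{Y\le(1+\epsilon)K/d\}$ implies
\[
\tfrac{K}{d}-X+c^2\big(Y-\tfrac{K}{d}\big)\le \epsilon\tfrac{K}{d}(1+c^2),
\]
which, after squaring, is exactly the target inequality $c_1\le\sqrt{1-(1-c^2)\tfrac{K}{d}+\epsilon(1+c^2)\tfrac{K}{d}}$. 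Hence by the union bound it suffices to prove that each of $\Pr\big(X<(1-\epsilon)\tfrac{K}{d}\big)$ and $\Pr\big(Y>(1+\epsilon)\tfrac{K}{d}\big)$ is at most $\exp\!\big(-\tfrac{3\epsilon^2 dK}{12(d-K)+8\epsilon(d-2K)}\big)$.

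\textbf{Step 2 (Chernoff on a $\chi^2$ representation).} Represent $X\stackrel{d}{=}W/(W+V)$ with $W\sim\chi^2_K$ and $V\sim\chi^2_{d-K}$ independent, and analogously $Y\stackrel{d}{=}W'/(W'+V')$. For a threshold $s$, the event $\{X<s\}$ equals $\{(1-s)W-sV<0\}$; taking $s=(1-\epsilon)K/d$, the linear combination $(1-s)W-sV$ has mean exactly $\epsilon K>0$, so a left-deviation bound is needed, while for $Y$ with $s'=(1+\epsilon)K/d$ the combination $(1-s')W'-s'V'$ has mean $-\epsilon K$ and a right-deviation bound is needed. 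In both cases apply the exponential Markov inequality with the $\chi^2$ moment generating function $\E[e^{\lambda\chi^2_m}]=(1-2\lambda)^{-m/2}$: the coefficients are arranged so that the terms linear in the Chernoff parameter collapse to $-\lambda\epsilon K$ (producing the Gaussian-type factor $\exp(-\Theta(\epsilon^2 dK/(d-K)))$), and the quadratic remainder—bounded with standard inequalities such as $-\log(1-x)\le x+\tfrac{x^2}{2(1-x)}$ and $-\log(1+x)\le -x+\tfrac{x^2}{2}$, and simplified using $d\ge 2K$—yields the Bernstein-type correction $8\epsilon(d-2K)$ in the denominator. Optimizing over (or choosing a clean near-optimal) Chernoff parameter gives the stated per-tail bound, and the union bound supplies the factor $2$.

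\textbf{Main obstacle.} Step 1 is essentially bookkeeping; the real work is the $\chi^2$ Chernoff estimate of Step 2. Landing precisely on the constants $3,12,8$ and the exact term $d-2K$ (rather than a looser $d$ or $d-K$) requires a careful choice of the Chernoff parameter and tight control of the logarithmic MGF terms, and it is here that the hypothesis $d\ge 2K$ is consumed—ensuring, in particular, that $d-2K\ge 0$ so the denominator stays positive for every $\epsilon\ge 0$. One must also verify that the same closed-form bound dominates on both the sub-Gaussian-like and the sub-exponential-like side of the two tails, so that a single expression survives the union bound.
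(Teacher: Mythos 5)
Your Step 1 reproduces the paper's reduction exactly: you identify $\sum_k\upsilon_k$ and $\sum_k\gamma_k$ as $\mathrm{Beta}(K/2,(d-K)/2)$ random variables (which the paper establishes via the rotational-invariance argument $\sum_k u_k u_k^{\top}\stackrel{d}{=}O\sum_k e_k e_k^{\top}$ with $O$ Haar-uniform, so that $O^{\top}A\sim\mathrm{Unif}(\mathbb{S}^{d-1})$ and the sum of squared coordinates is a ratio of $\chi^2$'s), observe that the target event follows from the lower tail of $X=\sum_k\upsilon_k$ together with the upper tail of $Y=\sum_k\gamma_k$, and finish by a union bound. This is precisely how the paper structures its argument.

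The divergence is in Step 2. The paper does not re-derive a Beta tail inequality from scratch: it invokes a Bernstein-type tail bound for the Beta distribution (Theorem~\ref{thm:beta_tail}, a simplification of Skorski's Theorem~1), which for $X\sim\mathrm{Beta}(\alpha,\beta)$ with $\beta\ge\alpha$ states $\Pr(X>\mathbb{E}X+\epsilon)\wedge\Pr(X<\mathbb{E}X-\epsilon)\le\exp\bigl(-\epsilon^2/(2(v+c\epsilon/3))\bigr)$ with $v=\alpha\beta/((\alpha+\beta)^2(\alpha+\beta+1))$ and $c=2(\beta-\alpha)/((\alpha+\beta)(\alpha+\beta+2))$. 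The constants you identify as the obstacle ($3,12,8$ and the term $d-2K$) then fall out of pure algebra: $v\le 2K(d-K)/d^3$, $c\le 4(d-2K)/d^2$, and the reparametrization $\epsilon\mapsto\epsilon K/d$. The hypothesis $d\ge 2K$ enters simply to ensure $\beta\ge\alpha$ so the cited theorem applies and $c\ge 0$, consistent with your reading.

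Your proposed Step 2---re-deriving the Beta tail from the $\chi^2$ representation via Chernoff---is a legitimate route and is, in spirit, what the cited result does internally. But as written your proposal stops at an outline: the Chernoff parameter is never chosen, the log-MGF control is described but not carried out, and you yourself flag ``landing precisely on the constants'' and ``verifying that the same closed-form bound dominates on both tails'' as obstacles you have not cleared. Since you explicitly name this as ``the real work,'' the proposal identifies a workable strategy but does not constitute a complete proof of the lemma; the paper sidesteps exactly this work by black-boxing the Beta tail bound and then doing the (genuinely short) algebraic simplification.
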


Note that our Lemma~\ref{lma:high_prob_c12_case1} is not a global Lipschitz condition. Thus it cannot be applied to the shifted divergence analysis as in the first-order cases~\citep{feldman2018privacy,altschuler2022privacy,chien2025convergent}. This is the reason why we explicitly construct an auxiliary process that is ``in the middle'' of two adjacent process defined below, which allow us to side step the shifted divergence analysis.

Let us introduce the adjacent process of~\eqref{eq:ZOGD} with respect to adjacent datasets $\mathcal{D},\mathcal{D}^\prime$ are defined as follows
\begin{align}\label{eq:adj_process}
    & w_{t+1}^\prime = \Pi_{\mathcal{B}_R}\Bigg[w_t^\prime - \frac{\eta}{K}\sum_{k=1}^K\hat{g}_{t}(w_t^\prime;u_{t,k}^\prime) \nonumber\\
    & +  \frac{\eta}{\sqrt{K}} \sum_{k=1}^K G_{t,k}^{\prime,(1)} u_{t,k}^\prime + \frac{\eta}{\sqrt{d}} G_t^{\prime,(2)}\Bigg],
\end{align}
where the initializations are identical almost surely $w_0=w_0^\prime$ and the noises are identically and independently distributed as their counterparts in~\eqref{eq:DP_ZOGD-general}. Finally, we will need to bound the Wasserstein distance of adjacent processes. This is similar to the prior literature about hidden-state DP analysis for first-order method~\citep{chien2025convergent,wei2024differentially,chien2024certified}. 

\begin{lemma}[Forward $W_\infty$ tracking]\label{lma:W_tracking}
    Let $w_t,w_t^\prime$ be the process defined in~\eqref{eq:DP_ZOGD-general} and~\eqref{eq:adj_process}.
    \begin{align*}
        W_\infty(w_t,w_t^\prime)\leq \min\lp2R,\frac{2\eta \Delta t}{\sqrt{K}}\rp
    \end{align*}
\end{lemma}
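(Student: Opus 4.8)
The plan is to prove the two bounds separately and then take the minimum. The bound $W_\infty(w_t,w_t')\le 2R$ is immediate: both $w_t$ and $w_t'$ lie in $\mathcal{B}_R$ (they are outputs of the projection $\Pi_{\mathcal{B}_R}$, and also $w_0=w_0'\in\mathcal{B}_R$ a.s.), so under the trivial coupling that uses the same randomness $\{u_{t,k}\}$, $\{G_{t,k}^{(1)}\}$, $\{G_t^{(2)}\}$ for both processes, we have $\|w_t-w_t'\|\le \|w_t\|+\|w_t'\|\le 2R$ pointwise, hence the essential supremum is at most $2R$.

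For the second bound $W_\infty(w_t,w_t')\le \frac{2\eta\Delta t}{\sqrt K}$, I would again use the synchronous coupling: run \eqref{eq:DP_ZOGD-general} and \eqref{eq:adj_process} with the exact same direction vectors $u_{t,k}=u_{t,k}'$ and the exact same noise realizations. Then it suffices to bound $\|w_t-w_t'\|$ deterministically under this coupling and induct on $t$. Since the noise terms and the projection are identical in both updates, and $\Pi_{\mathcal{B}_R}$ is $1$-Lipschitz (projection onto a convex set is nonexpansive), we get
\begin{align*}
\|w_{t+1}-w_{t+1}'\| \le \Big\|\,w_t - \tfrac{\eta}{K}\!\sum_{k=1}^K \hat g_t(w_t;u_{t,k}) - \Big(w_t' - \tfrac{\eta}{K}\!\sum_{k=1}^K \hat g_t(w_t';u_{t,k})\Big)\Big\|.
\end{align*}
The key point is that this differs from the first-order analysis: here $w_t$ and $w_t'$ are governed by \emph{different} datasets $\mathcal{D},\mathcal{D}'$ that differ in a single sample, so the two zeroth-order gradient maps $\hat g_t(\cdot;u_{t,k})$ are themselves different functions. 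I would split the difference into (i) a ``same-map'' part, $\hat g_t(w_t;u_{t,k})$ versus $\hat g_t(w_t';u_{t,k})$ with the \emph{same} dataset-induced map, and (ii) a ``sensitivity'' part capturing the replaced sample. For part (i), using that the clipped directional finite-difference map $w\mapsto w-\frac{\eta}{K}\sum_k \hat g_t(w;u_{t,k})$ is nonexpansive along each $u_{t,k}$-subspace when the first-order counterpart is $1$-Lipschitz (this is essentially the content of Lemma~\ref{lma:c12_fix_u_case_1} with $c=1$, $\xi=0$ giving $c_1\le 1$, $c_2=0$) — actually, rather than invoking $c_1$, the cleanest route is to observe that with synchronous coupling the contribution of part (i) does not increase the distance beyond $\|w_t-w_t'\|$. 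For part (ii), the replaced coordinate contributes at most $\frac{\eta}{K}\sum_{k=1}^K \big|\mathsf{clip}(\cdot;\Delta)-\mathsf{clip}(\cdot;\Delta)\big|\cdot\frac{1}{n}\cdot\|u_{t,k}\|$ — more precisely, the single differing term $\ell_i$ vs $\ell_i'$ enters with weight $1/n$ in each $\hat g_t$, each clipped difference is bounded by $\Delta$, and the $K$ directions $u_{t,k}$ are orthonormal so $\|\sum_k c_k u_{t,k}\| = \sqrt{\sum_k c_k^2}\le \sqrt K\cdot\max_k|c_k|$, yielding a per-step increment of at most $\frac{\eta}{K}\cdot\frac{1}{n}\cdot 2\Delta\cdot\frac{1}{?}$... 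I need to track the $\xi$-denominator and the factor-of-$2$ carefully: the clipped quantity is already bounded by $\Delta$ in absolute value, the two differing terms (from $w_t+\xi u$ and $w_t-\xi u$ evaluations) each differ by at most... Let me instead bound the whole sensitivity contribution of the differing sample as $\frac{\eta}{K}\big\|\sum_{k=1}^K \frac{1}{n}\big(\mathsf{clip}(q_{i,k};\Delta)-\mathsf{clip}(q_{i,k}';\Delta)\big)u_{t,k}\big\| \le \frac{\eta}{Kn}\sqrt{\sum_{k=1}^K(2\Delta)^2} = \frac{2\eta\Delta}{n\sqrt K}\cdot\frac{1}{?}$ — and since we also want uniformity over $n$, and actually the stated bound has no $n$, I should drop the $1/n$ and bound by $\frac{2\eta\Delta}{\sqrt K}$ using that $1/n\le 1$... but cleaner: the $\hat g_t$ involves $\frac{1}{n}\sum_i$, and only one summand differs, so the sensitivity term is $\le \frac{\eta}{K}\cdot\frac{2\Delta}{n}\cdot\sqrt K\le\frac{2\eta\Delta}{\sqrt K}$ (dropping $1/n\le1$). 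Thus by induction $\|w_{t+1}-w_{t+1}'\|\le \|w_t-w_t'\| + \frac{2\eta\Delta}{\sqrt K}$, and with $\|w_0-w_0'\|=0$ we obtain $\|w_t-w_t'\|\le \frac{2\eta\Delta t}{\sqrt K}$ surely under the coupling, hence $W_\infty(w_t,w_t')\le \frac{2\eta\Delta t}{\sqrt K}$.

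The main obstacle I anticipate is \textbf{step (i)}: rigorously justifying that under the synchronous coupling the ``same-dataset'' part of the map is nonexpansive in the $W_\infty$ sense, i.e., that $\big\|\,(w_t - \tfrac{\eta}{K}\sum_k \hat g_t(w_t;u_{t,k})) - (w_t' - \tfrac{\eta}{K}\sum_k \hat g_t(w_t';u_{t,k}))\big\|\le \|w_t-w_t'\|$ when the first-order counterpart is $1$-Lipschitz. The subtlety is that $\hat g_t(\cdot;u_{t,k})$ acts only along $u_{t,k}$ via a clipped finite difference of the smooth loss, so one must decompose $w_t-w_t'$ into its components along the (orthonormal) $u_{t,k}$'s and the orthogonal complement, and argue nonexpansiveness coordinatewise — in the orthogonal complement the map is the identity (contributes exactly), while along each $u_{t,k}$ one uses that the clipped directional-derivative step inherits nonexpansiveness from smoothness/convexity plus the step-size condition (the $c\le 1$ regime of Theorem~\ref{thm:main}). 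This is exactly where convexity/smoothness and the choice $\eta\le 2K/M$ enter; for the general $M$-smooth nonconvex case one would instead pick up the factor $c=1+\eta M/K>1$ and the bound would degrade, which is presumably why the lemma as stated is the ``nice'' version. I would handle this by citing Lemma~\ref{lma:c12_fix_u_case_1} specialized to $\xi=0$ together with the fact that, conditioned on the $u_{t,k}$'s, the realized generalized-Lipschitz constant satisfies $c_1\le 1$ when $c\le 1$, so that the noiseless update is genuinely $1$-Lipschitz on the high-probability event — but for $W_\infty$ tracking we actually want a \emph{sure} bound, so I would argue directly: along $u_{t,k}$, $x\mapsto x - \frac{\eta}{K}\mathsf{clip}(\text{finite diff})$ is monotone nondecreasing and $1$-Lipschitz as a scalar map (when $\eta\le 2K/M$ and $\ell$ convex, or in general with the smoothness-only argument of \citet{hardt2016train} adapted to clipped finite differences), hence nonexpansive; combining over the orthonormal frame and the orthogonal complement gives the full nonexpansiveness. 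This monotone-$1$-Lipschitz scalar claim for the \emph{clipped} finite difference is the one genuinely new technical wrinkle relative to the first-order literature, and I would isolate it as the crux of the argument.
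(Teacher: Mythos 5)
Your proposal diverges from the paper's proof and, more importantly, step~(i) has a genuine gap that the paper sidesteps. The paper's argument is much cruder and does \emph{not} attempt to show that the noiseless zeroth-order update is (generalized-)nonexpansive along the trajectory. Under the synchronous coupling it simply applies the triangle inequality
\begin{align*}
\|w_{t}-w_{t}'\| \leq \|w_{t-1}-w_{t-1}'\| + \Big\|\tfrac{\eta}{K}\sum_{k}\hat g_t(w_{t-1};u_{t,k}) - \tfrac{\eta}{K}\sum_{k}\hat g_t(w_{t-1}';u_{t,k})\Big\|,
\end{align*}
and then bounds the second term uniformly: each clipped scalar coefficient lies in $[-\Delta,\Delta]$ regardless of the argument or the dataset, so the difference of the two (averaged over $n$) is at most $2\Delta$, and orthonormality of $\{u_{t,k}\}$ gives $\|\sum_k c_k u_{t,k}\| = \sqrt{\sum_k c_k^2} \le 2\Delta\sqrt{K}$. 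This yields the clean, unconditional, per-step increment $\frac{2\eta\Delta}{\sqrt K}$ with no appeal to Lipschitzness of $\phi$, convexity, or a step-size restriction. There is no ``same-map vs.\ sensitivity'' split.

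The gap in your step~(i): you want the ``same-dataset'' map $w\mapsto w-\tfrac{\eta}{K}\sum_k\hat g_t(w;u_{t,k})$ to be nonexpansive \emph{surely}, since $W_\infty$ requires an essential-supremum bound. But Lemma~\ref{lma:c12_fix_u_case_1} shows the realized Lipschitz constant is $c_1 = \sqrt{1-\sum_k\upsilon_k + c^2\sum_k\gamma_k}$ where $\upsilon_k$ and $\gamma_k$ are squared projections onto the \emph{different} unit vectors $\frac{x-y}{\|x-y\|}$ and $\frac{\phi(x)-\phi(y)}{\|\phi(x)-\phi(y)\|}$; even when $c=1$ this exceeds $1$ whenever $\sum_k\gamma_k > \sum_k\upsilon_k$, which occurs with positive probability. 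Your proposed fix --- treating the update as a product of scalar maps along each $u_{t,k}$ and arguing monotone $1$-Lipschitzness coordinate-by-coordinate --- does not go through, because the clipped finite difference $\frac{\ell_i(w+\xi u_{t,k})-\ell_i(w-\xi u_{t,k})}{2\xi}$ depends on \emph{all} of $w$, not just its component along $u_{t,k}$; the update therefore does not factor into independent one-dimensional maps. (It is telling that when you carry out the bookkeeping anyway, you arrive at $\|w_{t+1}-w_{t+1}'\|\le\|w_t-w_t'\|+\frac{2\eta\Delta}{\sqrt K}$, the same additive recursion the paper obtains directly --- which indicates the nonexpansiveness step you were struggling to justify buys you nothing for this particular lemma, and the $\frac{2\eta\Delta}{\sqrt K}$ per-step growth is the real content.) The $2R$ bound you give is correct and matches the paper's (implicit) argument via $\Pi_{\mathcal{B}_R}$.
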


Now we are ready to provide the proof of Theorem~\ref{thm:main}.

\begin{proof}
    Let us define the $\hat{\psi}_t(w) = w-\frac{\eta}{K}\sum_{k=1}^K\hat{g}_t(w;u_{t,k})$ to be the zeroth-order update map and $\phi_t(w) = w-\frac{\eta}{nK}\sum_{i=1}^n \nabla \ell_i(w)$ be the corresponding first order update. Consider any two adjacent processes $W_t,W_t^\prime$ defined in~\eqref{eq:DP_ZOGD-general},~\eqref{eq:adj_process}, we can construct a specific coupling as follows.
    \begin{align*}
        &W_{t+1} \stackrel{d}{=} \Pi_{\mathcal{B}_R}\left[\hat{\psi}_t(W_t) + Y_t + Z_t\right],\\
        &W_{t+1}^\prime \stackrel{d}{=} \Pi_{\mathcal{B}_R}\left[\hat{\psi}^\prime_t(W_t^\prime) + Y_t + Z_t\right],
    \end{align*}
    where $Z_t\sim N(0,\sigma_{2,t}^2I_d)$, $Y_t \stackrel{d}{=}\sum_{k=1}^K G_{t,k}u_{t,k},\;G_{t,k}\sim N(0,\sigma_{1,t}^2)$. Now we can construct a specific auxiliary process $\widetilde{W}_t$ depending on $\mathcal{D},\mathcal{D}^\prime$ as follows: for all $t>\tau$,
    \begin{align*}
        &\widetilde W_{t+1} \stackrel{d}{=} \Pi_{\mathcal{B}_R}\left[\hat{\psi}_t^\prime(\widetilde W_t) + Y_t + b_t + Z_t + v_t\right],
    \end{align*}
    where $\widetilde W_t = W^\prime_t$ for all $t\leq \tau$, and $b_t,v_t$ are defined as follows:
    \begin{align*}
        & b_t := \hat{\psi}_t(\widetilde W_t) - \hat{\psi}_t^\prime(\widetilde W_t),\;d_t:=\hat{\psi}_t(W_t) - \hat{\psi}_t(\widetilde W_t)\\
        & v_t:=\min(a_t,(\|d_t\|-z_{t+1})_+)\frac{d_t}{\|d_t\|},\;\text{for }d_t\neq 0,
    \end{align*}
    $v_t=0$ otherwise. Note that we have $\sigma_{1,t}^2=\frac{\eta^2}{K}\beta_t\sigma^2$ and $\sigma_{2,t}^2=\frac{\eta^2}{d}(1-\beta_t)\sigma^2$ to match the original Noisy-ZOGD update~\eqref{eq:DP_ZOGD-general}. Also note that $\ell_i = \ell_i^\prime$ except for one index according for any pair of adjacent processes. Several remarks for $\widetilde W_{t}$. First, $b_t$ contains scalar differences on directions $u_{t,k}$. This part will be privatized by the non-isotropic noise $Y_t$. Second, $v_t$ is a general vector differences, but our contruction ensures that $\|v_t\|\leq a_t$ always. 

    First, note that by Lemma~\ref{lma:W_tracking}, we know that at time $t=\tau$ we have
    \begin{align*}
        \|W_\tau - W_\tau^\prime\| \leq \min\lp2R,\frac{2\eta \Delta \tau}{\sqrt{K}}\rp \leq z_\tau.
    \end{align*}

    Next we will bound the TV distance between $W_T,\widetilde W_T$. Note that by definition of $b_t$, we have
    \begin{align*}
        &\widetilde W_{t+1} \stackrel{d}{=} \Pi_{\mathcal{B}_R}\left[\hat{\psi}_t^\prime(\widetilde W_t) + Y_t + b_t + Z_t + v_t\right]\\
        & = \Pi_{\mathcal{B}_R}\left[\hat{\psi}_t(\widetilde W_t) + Y_t + Z_t + v_t\right],
    \end{align*}
    which is the same update as $W_t$ except for the shift $v_t$. Next we define the event
    \begin{align}
        \mathcal{G}_t = \{\|\hat{\psi}_t(W_t)-\hat{\psi}_t(\widetilde W_t)\|\leq \bar{c}_1 \|W_t-\widetilde W_t\|+c_2\}.
    \end{align}
    By Lemma~\ref{lma:high_prob_c12_case1}, we know that
    \begin{align*}
        \mathbb{P}(\mathcal{G}_t|\mathcal{F}_t)\leq \frac{\delta_f}{T-\tau},
    \end{align*}
    where $\mathcal{F}_t$ is the sigma-field generated by all randomness up to time $t$ in our coupled construction. Then we also define
    \begin{align}
        \mathcal{H}_t = \{\|W_t - \widetilde W_t\| \leq z_t\}.
    \end{align}
    We already know that $\mathcal{H}_\tau$ holds almost surely by Lemma~\ref{lma:W_tracking} and our constuction $\widetilde W_\tau = W_\tau^\prime$. Then under $\mathcal{H}_t\cap \mathcal{G}_t$,
    \begin{align}
        \|d_t\| \leq \bar{c}_1 z_t + c_2 \leq z_{t+1} + a_t
    \end{align}
    by the feasibility condition of $z_t,a_t$. Then by construction of $v_t$ and the projection $\Pi_{\mathcal{B}_R}$ is $1$-Lipschitz, this implies
    \begin{align}
        \|W_{t+1} - \widetilde W_{t+1}\|\leq \|d_t - v_t\| \leq z_{t+1}.
    \end{align}
    Thus $\mathcal{H}_t\cap \mathcal{G}_t \subseteq \mathcal{H}_{t+1}$. Inductively, $\cap_{s=\tau}^t \mathcal{G}_s \subseteq \mathcal{H}_{t+1}$ for all $t\geq \tau$. In particular, on $\mathcal{G}:=\cap_{s=\tau}^{T-1} \mathcal{G}_s$, we have $\mathcal{H}_T$. Since $z_T=0$, this implies $W_T = \widetilde W_T \text{ on }\mathcal G.$ Moreover, by union bound we have
    \begin{align}
        \mathbb{P}(\mathcal{G}^c)\leq \sum_{t=\tau}^{T-1}\mathbb{P}(\mathcal{G}^c_t) \leq \delta_f.
    \end{align}
    Therefore, by the coupling inequality, 
    \begin{align}\label{eq:TV_eq1}
        TV(W_T,\widetilde W_T)\leq \mathbb{P}(W_T\neq \widetilde W_T) \leq \delta_f.
    \end{align}

    This complete the first half of the proof. Next we will bound the R\'enyi divergence among $\widetilde W_T, W_T^\prime$. 

    At time $t\geq \tau$, condition on the past, we know that $\widetilde W_t, W_t^\prime$ differs only at shifts $b_t,v_t$ at each step. For $b_t$ part, it is handled by the scalar Gaussian in the $K$-dimensional basis coordinate $u_{t,k}$ ($Y_t$), where the corresponding sensitivity is $\|b_t\|$. Similarly, $v_t$ is handled by the isotropic Gaussian $Z_t$, where the corresponding sensitivity is $\|v_t\|$.

    For the $b_t,Y_t$ part, condition on the past we have
    \begin{align}
        & \|b_t\|^2 = \|\hat{\psi}_t(w) - \hat{\psi}_t^\prime(w)\|^2 \leq \frac{\eta^2}{K}(\frac{2\Delta}{n})^2,
    \end{align}
    since at each coordinate $u_{t,k}$ the sensitivity is $\frac{2\Delta}{n}$ and the $K$ coordinate basis $u_{t,k}$ are orthonormal. As a result, the corresponding R\'enyi privacy cost is
    \begin{align}
        \frac{\alpha \|b_t\|^2}{2(\eta^2 \beta_t\sigma^2/K)} \leq \frac{\alpha}{2\beta_t \sigma^2}(\frac{2\Delta}{n})^2
    \end{align}

    For the $v_t,Z_t$ part, condition on the past we have
    \begin{align}
        \|v_t\|^2 \leq a_t^2
    \end{align}
    by construction of $v_t$. Thus, the corresponding R\'enyi privacy cost is
    \begin{align}
        \frac{\alpha \|v_t\|^2}{2(\eta^2 (1-\beta_t)\sigma^2/d)} \leq \frac{\alpha d a_t^2}{2\eta^2(1-\beta_t) \sigma^2}.
    \end{align}
    Moreover, $\widetilde W_\tau = W_\tau^\prime$ by our construction. Thus, by strong composition theorem (Lemma~\ref{lma:Renyi_strong_comp}) to the full adaptive Gaussian transcript, and then post-processing property (Lemma~\ref{lma:Renyi_post_process}), we have
    \begin{align}\label{eq:RDP_eq1}
        D_\alpha(\widetilde W_t||W_t^\prime)\leq \rho_\alpha.
    \end{align}
    Note that the same argument hold for $D_\alpha(W_t^\prime||\widetilde W_t)$ by symmetry. Finally, we can convert it to DP bound by RDP-DP conversion~\citep{mironov2017renyi}. Specifically, for fixed $\delta_p\in (0,1)$, and every measurable output set $S$,~\eqref{eq:RDP_eq1} implies
    \begin{align*}
        \mathbb{P}(\widetilde W_T \in S) \leq \exp(\rho_\alpha + \frac{\log(1/\delta_p)}{\alpha-1})\mathbb{P}(W_T^\prime \in S) + \delta_p.
    \end{align*}
    Also, the TV distance result~\eqref{eq:TV_eq1} implies
    \begin{align}
       \mathbb{P}(W_T \in S)  \leq \mathbb{P}(\widetilde W_T \in S) +\delta_f.
    \end{align}
    Combining the inequalities gives the desired DP inequality. 
    Note that the same argument holds with $D,D^\prime$ exchanged, giving the reverse DP inequality by symmetry of replacement DP. Together we complete the proof.
\end{proof}

\paragraph{Further Remarks.} There are several important observations regarding our results and analysis. First, although our main analysis focuses on the full-batch setting, Theorem~\ref{thm:main} naturally extends to the mini-batch regime. In particular, we present results for the case of sampling without replacement, where we replace the R\'enyi privacy cost of Gaussian mechanism for the $b_t,Y_t$ part with Sampled Gaussian mechanism~\citep{mironov2019r}. The corresponding privacy bounds and analysis are provided in Appendix~\ref{apx:minibatch}. While it is also possible to extend our results to other mini-batch schemes, such as shuffled cyclic mini-batches~\citep{ye2022differentially, chien2025convergent}, we leave such generalizations for future work.

Second, a key analysis-driven design choice in our method is the selection of update directions $\{u_{t,k}\}$. While standard zeroth-order optimization methods~\citep{duchi2015optimal, liu2024general} often sample directions independently (i.i.d.), such a choice would significantly complicate the analysis, particularly the concentration bounds involving the term $c_1$ in Lemma~\ref{lma:high_prob_c12_case1}, and result in weaker privacy guarantees. We discuss the implications of the i.i.d. sampling scheme in Appendix~\ref{apx:iid_direction}. Our use of orthonormal directions enables the application of tight quantile bounds from the beta distribution, which are critical to our privacy analysis. Notably, the privacy benefit of using orthonormal $u_{t,k}$ directions appears to be novel in the literature; under the standard composition-based analysis, orthogonality does not influence the privacy guarantee and is therefore overlooked.

Third, our Lemma~\ref{lma:high_prob_c12_case1} provides a point-wise Lipschitz bound on a high probability event. One may attempt to combine it with the shifted-divergence analysis, specifically the shifted-reduction lemma~\citep{feldman2018privacy,altschuler2022privacy}. Unfortunately, such a combination is invalid as the shifted-reduction lemma requires a global Lipschitz bound (i.e., almost surely). To alleviate the issue, we instead directly construct an auxiliary process to bridge $W_T, W_T^\prime$ and side-step the need for shift-divergence analysis. Our novel construction may be of independent interest. 

Finally, some of the assumptions in our analysis can likely be relaxed. For instance, the requirement that each loss function $\ell_i$ is both Lipschitz and $M$-smooth simplifies the analysis but is not strictly necessary. Relaxing the Lipschitz assumption, for example, mainly affects the proof that the (noiseless) gradient update map $\phi$ is Lipschitz, a property that is crucial for the shifted divergence analysis. Recent work by~\citet{kong2024privacy} has shown that the clipped first-order gradient map remains Lipschitz even without assuming Lipschitz continuity of the loss itself. Additionally, \citet{chien2025convergent} demonstrated that the smoothness requirement can be weakened to H\"older continuity of the gradient in the context of shifted divergence for first-order methods. We leave the extension of our analysis to these broader settings as an important direction for future work.

\section{Related Works}
\textbf{Hidden-state Noisy-SGD. }The hidden-state privacy analysis for the first-order optimization method, specifically SGD, has raised great attention recently.~\cite{chourasia2021differential,ye2022differentially} leverage the analysis of Langevin dynamics for DP guarantees but require strong convexity on the loss.~\cite{chien2024langevin} adopts a similar analysis but for the machine unlearning problem.~\cite{asoodeh2023privacy} establishes the convergent DP bound by establishing the contraction of the hockey stick divergence of the Noisy-SGD process.~\cite{feldman2018privacy,altschuler2022privacy,altschuler2024privacy} derive convergent DP bound for Noisy-SGD based on formalizing shifted R\'enyi divergence analysis, albeit the loss is required to be convex.~\cite{chien2024certified} adopted their analysis for establishing privacy guarantees for the machine unlearning problem and~\cite{chien2025convergent} relaxed the convex and smoothness assumption in these works. Recently, many works have also studied the hidden-state privacy for Noisy-SGD by either establishing DP guarantees or performing privacy auditing~\citep{kong2024privacy,annamalai2024s,cebere2024tighter,steinke2024last}. All of these works focus on the first-order optimization method, while we study the hidden-state DP guarantee for the zeroth-order method. Moreover, as we discussed a direct generalization of the shifted-divergence analysis fails. Our analysis is based on a novel construction of an auxiliary process that side-steps the need for shifted-divergence analysis.

\textbf{Differentially Private Zeroth-order Optimization. }Zeroth-order optimization under DP constraint has only been studied very recently but gain increasing attention.~\citet{zhang2024private} is the first to study DP zeroth-order optimization.~\citet{zhang2023dpzero} improves their results by observing the benefit of scalar Gaussian noise (see our discussion around Theorem~\ref{thm:standard_result}) and proposed the state-of-the-art algorithm DPZero, albeit they focus on the case $K=1$. Concurrently,~\citet{tang2024private} proposed a method similar to DPZero but with the update direction $u$ being drawn from standard normal. They demonstrate the benefit of DP zeroth-order optimization in fine-tuning the Large Languague Model (LLM) but without providing utility analysis.~\citet{liu2024differentially} propose a stage-wise zeroth-order optimization DP-ZOSO building upon DPZero and demonstrates empirical gain over DPZero. All of these works do not establish the convergent privacy loss under hidden-state assumption as in our work. Our work provides a non-trivial convergent privacy loss for zeroth-order optimization under DP constraint over the bounded domain. We also elucidate the privacy benefit of leveraging $K\geq 1$ zeroth-order gradient and choosing orthonormal update direction $u$, which was not studied in the prior literature as well to the best of our knowledge. Very recently,~\citet{bao2025unlocking} also studied the benefit of leveraging $K$ (but i.i.d.) update directions in zeroth-order optimization under a public state setting. It is interesting to see if their analysis can be combined with our results.

\citet{gupta2024inherent} investigates whether zeroth-order methods can offer inherent privacy purely from the randomness of the update directions, \textbf{without any additive noise.} Their main result shows that this is not sufficient to guarantee differential privacy, highlighting the necessity of explicit noise addition. This observation is fully consistent with our work and, in fact, further reinforces the relevance of our contribution. Indeed,~\citet{gupta2024inherent} also stated the following open problem: “Although zeroth-order estimators do not offer inherent privacy on their own, can they amplify the privacy of other (additive) noisy methods?”. Interestingly, our work provides a positive answer to this open question, where we show that the randomness in the update directions is indeed helpful when combined with additive Gaussian noise (under hidden-state setting). 


\section*{Acknowledgements}

EC and PL are partially supported by NSF awards PHY-2117997, IIS-223956, CCF-2402816, and JPMC faculty award. EC is also partially supported by the Yushan Young Fellow Program (115V1070-1) from the Ministry of Education,
Taiwan and NTU AI-CoRE (114M7069-01).


\section*{Impact Statement}
This paper presents work whose goal is to advance the field of machine
learning privacy. There are many potential societal consequences of our work, none of which we feel must be specifically highlighted here.

\bibliography{Ref}
\bibliographystyle{icml2026}

\newpage
\appendix
\onecolumn

\section{Conclusion and future works}
We provide the first convergent DP bound under the hidden-state assumption for smooth losses. Our analysis elucidates several privacy-beneficial designs in the zeroth-order method that cannot be demonstrated via the standard DP composition theorem. Our work paves the way to advanced DP zeroth-order approaches, which we believe to achieve a better privacy-utility trade-off in large language model fine-tuning under DP and memory constraints in the future. 

\textbf{Future directions. }While we have established the first convergent DP guarantee for the ZOGD method, it is not ready to be directly applied to LLM fine-tuning. It is also unclear at this moment that, in practice, what type of learning task does a $\varepsilon\leq 1$ lead to a parameter setting with reasonable complexity and utility. This theory-practice gap is shared by prior theoretical works on hidden-state analysis on first-order methods~\citep{altschuler2022privacy,altschuler2024privacy,ye2022differentially,chien2025convergent}, as we all need several structural assumptions (such as smoothness) to establish such results. Furthermore, as discussed in the main text, we left several generalizations for future work. This includes using shuffled cyclic minibatch~\citep{ye2022differentially}, relaxing the Lipschitz loss assumption using the analysis of~\cite{kong2024privacy}, and relaxing the smoothness assumption using the analysis of~\cite{chien2025convergent}. Improvements along these directions will reduce the aforementioned gap of DP zeroth-order methods' applicability to modern LLM training. 

\section{Details about numerical results}\label{apx:exp_results}

\textbf{Setting for Figure 1.} We report the privacy loss of Noisy-ZOGD~\eqref{eq:DP_ZOGD-general} under different privacy analysis. We set the radius of the projected set to be $R=1$, smooth constant $M=1$, strongly convex constant $m=0.9$, and the clipped norm $\Delta = 1$. For a fair comparison, we first compute the RDP budget for each analysis and then convert it to DP by the standard conversion~\citep{mironov2017renyi}. For output perturbation, the RDP loss under this setting is $\frac{\alpha (2R)^2 d}{2 \eta^2 \sigma^2}$. Note that since $\eta = K/M$, for simplicity, we use the latest tested $\eta$, which gives the smallest privacy loss for output perturbation. For standard composition theorem, after RDP to DP conversion, it is basically the bound stated in Theorem~\ref{thm:standard_result} with $\beta_t=1$. For our approach, we adopted the same simplification as in the proof of Corollary~\ref{cor:main} that applies to Theorem~\ref{thm:main} for strongly convex problems and kept all constants explicit. We also provide results for the nonconvex case in Figure~\ref{fig:nonconvex}.

\begin{figure*}
    \centering
    \includegraphics[width=0.95\linewidth]{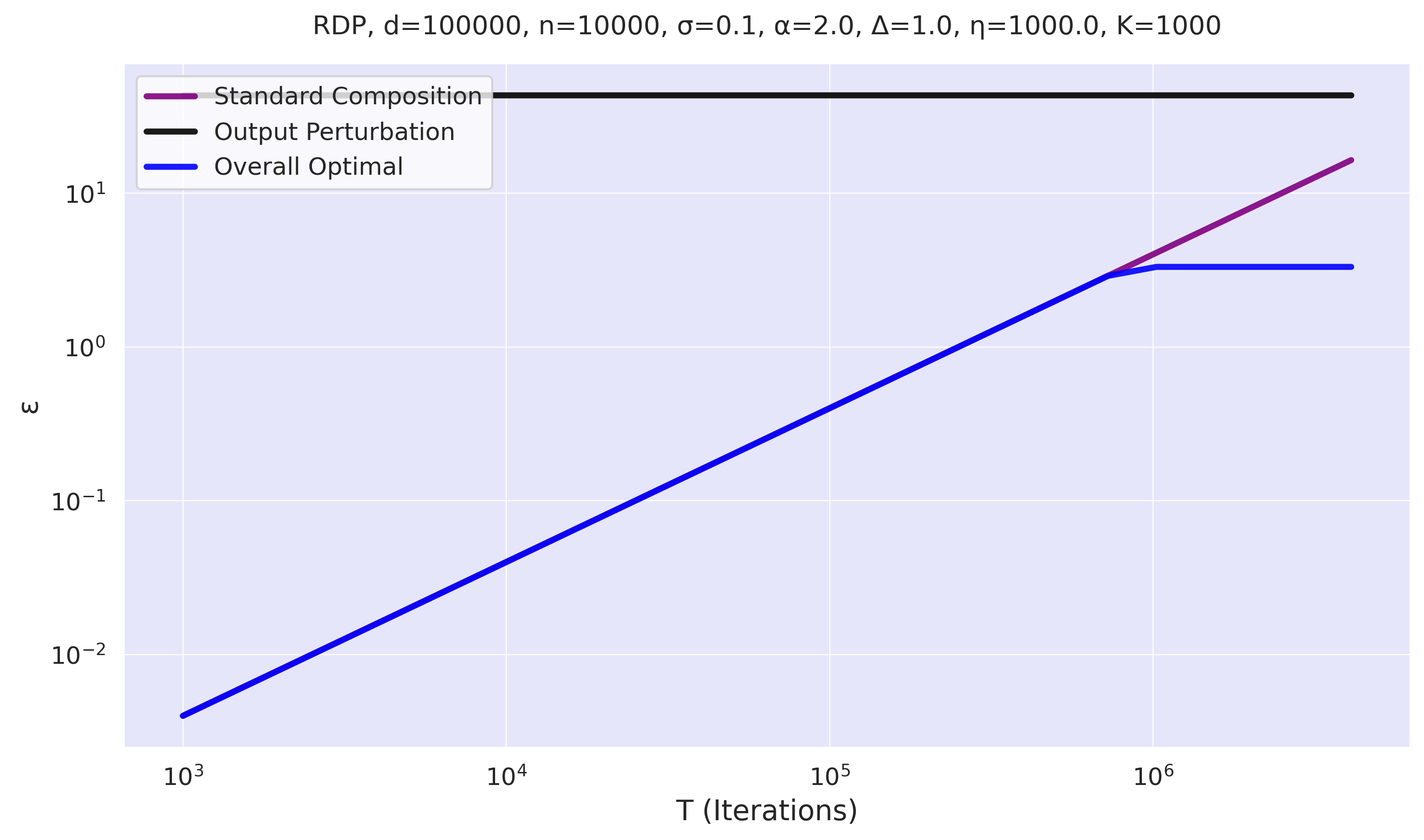}
    \includegraphics[width=0.95\linewidth]{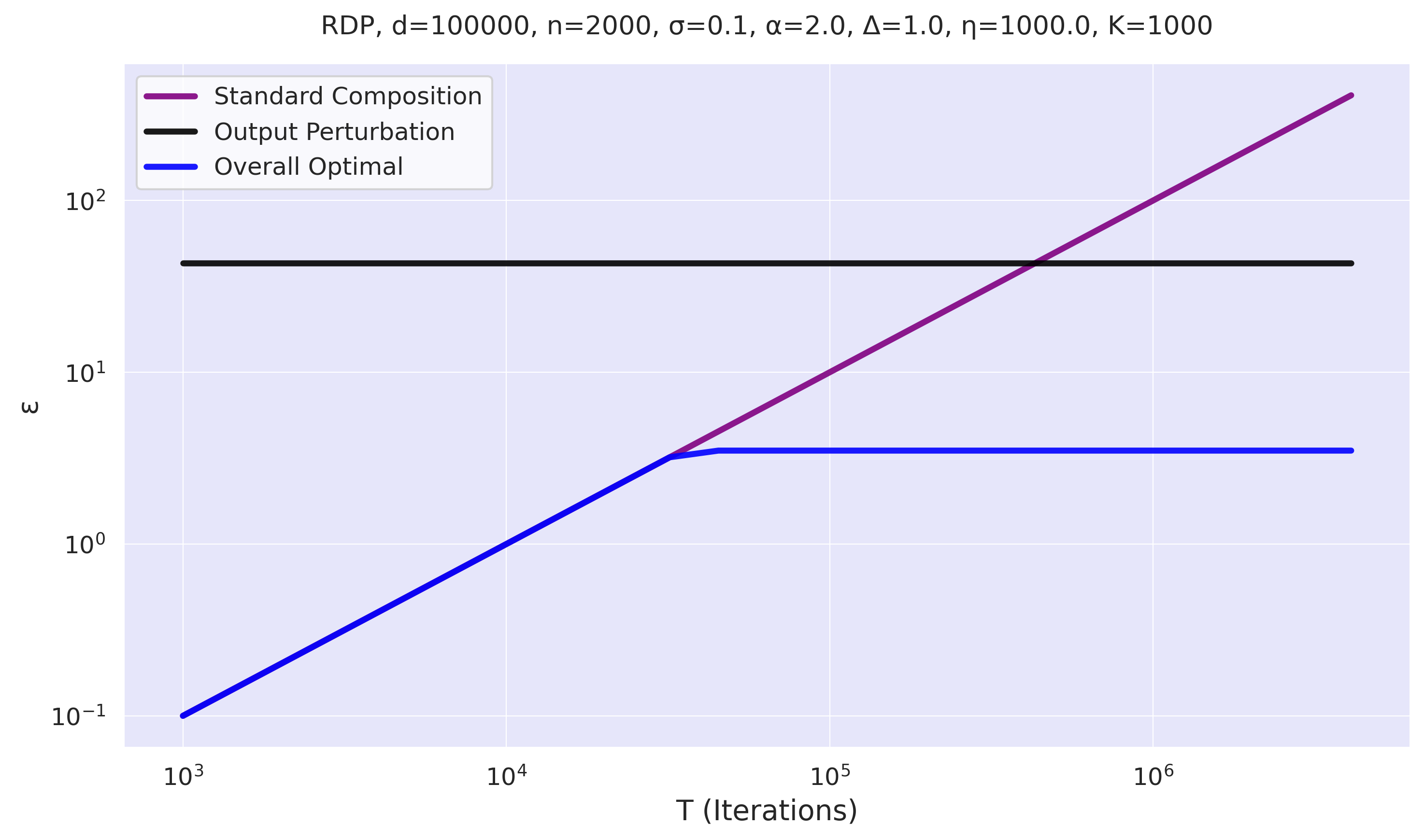}
    \caption{The DP guarantees for smooth (and not necessarily convex) losses over the bounded domain versus training iteration $T$. We compared two different sample sizes $n=1000$ and $n=2000$ under the fixed noise variance $\sigma^2$, model size $d$, zeroth order dim $K$. See Theorem~\ref{thm:main} of the explicit bound.}\label{fig:nonconvex}
\end{figure*} 

\section{Proof of Theorem~\ref{thm:standard_result}}
The proof is quite standard in the DP literature (i.e., based on the analysis of~\cite{mironov2017renyi}). We state it here for completeness.
\begin{proof}
    \textbf{Case $\beta_t = 1$:} Note that in this case, on all direction $u_{t,k}$ we apply the Gaussian mechanism result for each step $t$. The sensitivity of each direction $u_{t,k}$ is
    \begin{align}
        \|\frac{1}{K}\hat{g}_t(w;u_{t,k})\| = \frac{\Delta}{nK}.
    \end{align}
    This is due to the clipping operation. As a result, for each direction Gaussian mechanism gives the privacy loss (in RDP) of
    \begin{align}
        \frac{\alpha \Delta^2/(nK)^2}{2 \sigma^2/K} = \frac{\alpha \Delta^2}{2 n^2 K \sigma^2}. 
    \end{align}
    Then we apply the composition theorem (Lemma~\ref{lma:Renyi_strong_comp}) over $K$ direction and $T$ iterations, followed by the RDP-DP conversion~\citep{mironov2017renyi}, we have the desired result.

    \textbf{Case $\beta_t = 0$:} In this case, the one-step sensitivity is the norm of the full zeroth-order gradient, which is
    \begin{align}
        \|\frac{1}{K}\sum_k \hat{g}_t(w;u_{t,k})\|^2 \stackrel{(a)}{=} \frac{1}{K}\|\frac{\Delta}{n}\|^2 = \frac{\Delta^2}{n^2K},
    \end{align}
    where (a) is due to the orthonormal choice of $u_{t,k}$. Apparently, using a non-orthonormal choice (i.e., i.i.d. directions) only leads to worse sensitivity. As a result, the Gaussian mechanism gives the privacy loss (in RDP) of
    \begin{align}
        \frac{\alpha \Delta^2/(n^2K)}{2 \sigma^2/d} = \frac{\alpha \Delta^2d}{2 n^2 K \sigma^2}. 
    \end{align}
    Then we apply the composition theorem (Lemma~\ref{lma:Renyi_strong_comp}) over $T$ iterations, followed by the RDP-DP conversion~\citep{mironov2017renyi}, we have the desired result. Together we complete the proof.
\end{proof}

\section{Proof of Corollary~\ref{cor:main}}\label{apx:pf_cor_main}
\begin{corollary*}
    Assume $\ell(\cdot,x)$ are $M$-smooth, $m$-strongly convex and $\Delta$-Lipschitz for the first argument and all possible data $x$. The Noisy-ZOGD update~\eqref{eq:DP_ZOGD-general} is $(\varepsilon,\delta)$-DP with the choice \\$\eta = K/M$, $ \max(\frac{20 (1+c^2)^2}{3 (1-c^2)^2}\log(\frac{4}{\delta}\lceil \frac{Rn\sqrt{2d}}{\eta \Delta}\rceil),1) \leq K \leq d/2$ and $\xi \leq \frac{2\Delta}{n\eta M \sqrt{2d}}$, where
    \begin{align}
        \varepsilon = O\lp \sqrt{ \frac{\Delta^2\log(1/\delta)}{n^2 \sigma^2}\min(T,\frac{M R n \sqrt{d}}{K \Delta})}\rp,\;c = 1-\frac{m}{M}.
    \end{align}
\end{corollary*}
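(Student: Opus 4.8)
The plan is to obtain Corollary~\ref{cor:main} from Theorem~\ref{thm:main} by specializing to the $m$-strongly convex regime and exhibiting one explicit feasible point of the constrained minimization that already achieves the target rate. Taking $\eta = K/M$ gives $c = 1 - m/M \in (0,1)$, hence $1-c^2>0$, so the contraction factor $c_1 = \sqrt{1-(1-c^2)\tfrac Kd + \epsilon(1+c^2)\tfrac Kd}$ can be forced strictly below $1$ by choosing $\epsilon$ a fixed fraction of the threshold $\tfrac{1-c^2}{1+c^2}$, e.g.\ $\epsilon = \tfrac{1-c^2}{2(1+c^2)}$. For such $\epsilon$, and using $K\le d/2$, one gets $\log(1/c_1)\asymp \tfrac{(1-c^2)K}{d}$, so $c_1^{-1}>1$ and the shift recursion $z_{t-1} = c_1^{-1}(z_t + a_t - c_2)$ is geometrically expanding as $t$ decreases.

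Next I would take the remaining free parameters constant in $t$: $\beta_t\equiv\beta$ and $a_t\equiv a$. Unrolling the recursion from $z_T=0$ gives $z_\tau = (a-c_2)\sum_{j=1}^{T-\tau}c_1^{-j}$, which forces $a\ge c_2$; the hypothesis $\xi\le \tfrac{2\Delta}{n\eta M\sqrt{2d}}$ is exactly what keeps $c_2 = \eta M\xi$ (hence $a$) small enough that, after $\eta = K/M$, the $a$-dependent cost $\tfrac{\alpha a^2 d}{2\eta^2(1-\beta)\sigma^2}$ stays of the same order as the sensitivity cost $\tfrac{2\alpha\Delta^2}{\beta n^2\sigma^2}$, so a fixed $\beta=\Theta(1)$ makes the per-iteration RDP cost $O\!\big(\tfrac{\alpha\Delta^2}{n^2\sigma^2}\big)$. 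It then remains to split on the size of $T$ relative to the threshold $\Theta\!\big(\tfrac{MRn\sqrt d}{K\Delta}\big)$: (i) if $T$ is below the threshold, set $\tau=0$ and $a=c_2$, so $z_t\equiv 0$ and the constraint $z_0\ge\min(2R,0)=0$ is vacuous, yielding $\varepsilon(\alpha)=O\!\big(\tfrac{\alpha\Delta^2 T}{n^2\sigma^2}\big)$; (ii) if $T$ is above the threshold, set $\tau = T-N$ with $N\asymp \tfrac{MRn\sqrt d}{K\Delta}$ and pick $a-c_2>0$ just large enough that $z_\tau = (a-c_2)\sum_{j=1}^{N}c_1^{-j}\ge 2R$ — since this sum grows like $c_1^{-N}$, the needed $a-c_2$ is exponentially small in $N\log(1/c_1)\asymp\tfrac{(1-c^2)MRn}{\Delta\sqrt d}$, so $a\approx c_2$ remains negligible — and check $\tau=T-N\ge\tfrac{MR}{\sqrt K\Delta}$ (which holds because $n^2 d\ge K$) so that the $\min$ in the constraint is indeed $2R$. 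This gives $\varepsilon(\alpha)\le N\cdot O\!\big(\tfrac{\alpha\Delta^2}{n^2\sigma^2}\big)$, and combining the two cases $\varepsilon(\alpha) = O\!\big(\tfrac{\alpha\Delta^2}{n^2\sigma^2}\min\!\big(T,\tfrac{MRn\sqrt d}{K\Delta}\big)\big)$.

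To finish, I would convert this RDP guarantee into $(\varepsilon,\delta)$-DP. First I absorb the failure of the event $\mathcal G$: in both cases $T-\tau\le\lceil\tfrac{MRn\sqrt{2d}}{K\Delta}\rceil$, and bounding the denominator in $\mathbb P(\mathcal G^c)\le 2(T-\tau)\exp\!\big(-\tfrac{3\epsilon^2 dK}{12(d-K)+8\epsilon(d-2K)}\big)$ by $(12+8\epsilon)d$ and substituting $\epsilon\asymp\tfrac{1-c^2}{1+c^2}$ shows this is at most $\delta/2$ precisely when $K\ge\Omega\!\big(\tfrac{(1+c^2)^2}{(1-c^2)^2}\log\tfrac1\delta\lceil\tfrac{MRn\sqrt{2d}}{K\Delta}\rceil\big)$, which is the stated lower bound on $K$. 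Second, since $\varepsilon(\alpha)$ from Theorem~\ref{thm:main} is exactly linear in $\alpha$, write $\varepsilon(\alpha)=\alpha\rho$ with $\rho = O\!\big(\tfrac{\Delta^2}{n^2\sigma^2}\min(T,\tfrac{MRn\sqrt d}{K\Delta})\big)$; the standard RDP-to-DP conversion~\citep{mironov2017renyi} gives $\big(\alpha\rho+\tfrac{\log(2/\delta)}{\alpha-1},\,\delta/2\big)$-DP for every $\alpha>1$, and optimizing $\alpha\asymp 1+\sqrt{\log(1/\delta)/\rho}$ yields $\varepsilon = O(\sqrt{\rho\log(1/\delta)})$, which is the claimed bound; the two $\delta/2$ contributions sum to $\delta$.

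I expect the crux to be case (ii): checking that a single tuple $(\tau,N,\beta,a)$ simultaneously meets all the constraints of Theorem~\ref{thm:main} ($a\ge c_2$, $z_t\ge 0$ for all $t\ge\tau$, $\beta$ in the open unit interval, $z_\tau\ge\min(2R,\tfrac{2\eta\Delta\tau}{\sqrt K})$), keeps $N\cdot(\text{per-step cost})$ at the target order $\tfrac{\alpha MR\sqrt d\,\Delta}{Kn\sigma^2}$, and keeps $\mathbb P(\mathcal G^c)\le\delta/2$ at the same time. It is the tension between wanting $N$ small (cheap) and $z_\tau$ large (needing many expanding steps or a large, costly $a$) that the hypotheses $\xi\le\tfrac{2\Delta}{n\eta M\sqrt{2d}}$ and the lower bound on $K$ are tailored to resolve, and the only other mild nuisance is verifying throughout that the perturbation-bias constant $c_2$ never dominates the $\Delta/n$ sensitivity scale — which is again exactly what the bound on $\xi$ guarantees.
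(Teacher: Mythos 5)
Your plan is basically sound and reaches the target rate, but it takes a genuinely different route from the paper's proof, and one of the intermediate claims you rely on is not correct in all parameter regimes.

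The paper's proof sets $\epsilon = \tfrac{1-c^2}{1+c^2}$ exactly, so that $c_1 = 1$; the shift recursion then becomes the additive constraint $\sum_{t=\tau}^{T-1}(a_t - c_2) \ge 2R$, and the per-step $a_t$'s are balanced against the Gaussian cost by a Cauchy--Schwarz step (with $\beta_t = 1/2$). Choosing $\epsilon$ at the threshold is also what makes the high-probability event as likely as possible, which is why the constant $\tfrac{20}{3}$ in the $K$ lower bound works out. You instead take $\epsilon$ to be a strict fraction of the threshold (e.g.\ $\tfrac{1-c^2}{2(1+c^2)}$) to make $c_1<1$ and the recursion geometrically expanding, with constant $a,\beta$. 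That is a legitimate alternative, but it worsens the concentration exponent by a constant factor, so you would need $K$ roughly four times as large as the $\tfrac{20}{3}\tfrac{(1+c^2)^2}{(1-c^2)^2}\log(\cdot)$ stated in the corollary; you obtain the right $\Omega(\cdot)$ shape, not the stated constant.

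The more substantive issue is your claim that, for $N \asymp \tfrac{MRn\sqrt d}{K\Delta}$, the required $a - c_2$ is ``exponentially small in $N\log(1/c_1)$'' and hence $a\approx c_2$. With $\log(1/c_1) \asymp \tfrac{(1-c^2)K}{d}$, the exponent $N\log(1/c_1) \asymp \tfrac{(1-c^2)MRn}{\Delta\sqrt d}$ is a parameter-dependent constant and need not be $\gg 1$ (indeed, for large $d$ and moderate $n$ it is $o(1)$). When it is $O(1)$, the geometric sum $\sum_{j\le N}c_1^{-j}$ behaves like $N$, so the required $a-c_2 \asymp 2R/N$, which with the chosen $N$ is of the \emph{same} order as (or larger than) $c_2 \lesssim \tfrac{\Delta}{n\sqrt d}$, not negligible. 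The proof can still be salvaged: $(a)^2 \lesssim c_2^2 + (2R/N)^2$, the $c_2^2$ term is absorbed by the $\xi$ constraint, and the $(2R/N)^2$ contribution summed over $N$ steps gives $\tfrac{\alpha R^2 d}{\eta^2\sigma^2 N}$, which at $N=N^*$ matches the sensitivity term $N^*\tfrac{\alpha\Delta^2}{n^2\sigma^2}$ — this balance is exactly what the paper's Cauchy--Schwarz step makes explicit, and it, not the geometric expansion, is what pins down $N^*$. So you land on the right bound, but the claim ``$a\approx c_2$'' that you use to justify the per-step cost is false in the regime of interest and should be replaced by the two-term decomposition above. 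Your handling of case (i) ($\tau=0$, $a=c_2$, $\beta<1$) and the RDP-to-DP conversion step are fine.
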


\begin{proof}
    This corollary is a direct application of our main Theorem~\ref{thm:main}. While the optimization stated in Theorem~\ref{thm:main} does not have a close form solution in general, we can still obtain a simplified bound in the special case. Specifically, we hope to have $c_1 = 1$, where the sufficient condition is 
    \begin{align}
        1-(1-c^2)\frac{K}{d} + \vartheta(1+c^2)\frac{K}{d}=1 \Leftrightarrow 0\leq \vartheta = \frac{1-c^2}{1+c^2}.
    \end{align}
    As a result, we will choose $\vartheta = \frac{1-c^2}{1+c^2}$ so that $c_1 = 1$. Note that in order to have $\vartheta > 0$, we need $c<1$. This is important since it will directly impact the error probability $\delta_f$ in the exponent. In the meanwhile, the Lipschitz constant $c$ of the first order gradient update map $\phi$ is as follows: If $\ell_i$ are $M$-smooth and $m$-strongly convex, then if $\frac{\eta}{K}\leq \frac{1}{M}$ we have $c = 1-\frac{\eta m}{K}$. With these choices and setup, the bound in Theorem~\ref{thm:main} for any $\tau,\beta_t,a_t$ becomes

    \begin{align}
        & \rho_\alpha =  \sum_{t=\tau}^{T-1}\frac{\alpha}{2\beta_t \sigma^2}(\frac{2\Delta}{n})^2 + \sum_{t=\tau}^{T-1}\frac{\alpha a_{t}^2 d}{2\eta^2(1-\beta_t)\sigma^2},\\
        & \text{where }a_t,z_t\geq 0\;\forall t\geq \tau,\; z_{t-1} = z_t + a_t-c_2,\;z_{T} = 0,\;z_{\tau}\geq \min(2R,\frac{2C\tau}{\sqrt{K}}),\\
        & c_1 = 1,\;c_2 = \eta M \xi,\;\mathbb{P}\lp\mathcal{G} \rp \geq 1-\delta_f.
    \end{align}

By rolling out the recursion $z_{t-1} = z_t + a_t-c_2$ and combining the constraint on $z_t$ with those of $a_t$, we have

\begin{align}
    & \rho_\alpha \leq \sum_{t=\tau}^{T-1}\frac{\alpha}{2\beta_t \sigma^2}(\frac{2\Delta}{n})^2 + \sum_{t=\tau}^{T-1}\frac{\alpha a_{t}^2 d}{2\eta^2(1-\beta_t)\sigma^2},\\
    & \text{where }a_t\geq c_2\;\forall t\geq \tau,\; \sum_{t=\tau}^{T-1}(a_t-c_2) \geq \min(2R,\frac{2C\tau}{\sqrt{K}}).
\end{align}

Now let us set $a_t^\prime = a_t - c_2$, then we have 

\begin{align}
    & \rho_\alpha \leq \sum_{t=\tau}^{T-1}\frac{\alpha}{2\beta_t \sigma^2}(\frac{2\Delta}{n})^2 + \sum_{t=\tau}^{T-1}\frac{\alpha (a_{t}^\prime+c_2)^2 d}{2\eta^2(1-\beta_t)\sigma^2},\\
    & \text{where }a_t^\prime\geq 0\;\forall t\geq \tau,\; \sum_{t=\tau}^{T-1}a_t^\prime \geq \min(2R,\frac{2C\tau}{\sqrt{K}}).
\end{align}
Then we can adopt the elementary inequality $(a_t^\prime-c_2)^2 \leq 2(a_t^\prime)^2 + 2c_2^2$, which leads to 

\begin{align}
    & \rho_\alpha \leq \sum_{t=\tau}^{T-1}\frac{\alpha}{2\beta_t \sigma^2}(\frac{2\Delta}{n})^2 + \sum_{t=\tau}^{T-1}\frac{2\alpha ((a_{t}^\prime)^2+c_2^2) d}{2\eta^2(1-\beta_t)\sigma^2},\\
    & \text{where }a_t^\prime\geq 0\;\forall t\geq \tau,\; \sum_{t=\tau}^{T-1}a_t^\prime \geq \min(2R,\frac{2C\tau}{\sqrt{K}}).
\end{align}

By Cauchy-Schwartz as in~\cite{chien2025convergent}, we have that
\begin{align}
    \lp\sum_{t=\tau}^{T-1}\frac{(a_t^\prime)^2}{1-\beta_t}\rp \cdot \lp\sum_{t=\tau}^{T-1}(1-\beta_t)\rp \geq \lp\sum_{t=\tau}^{T-1}a_t^\prime\rp^2 \geq \lp\min(2R,\frac{2C\tau}{\sqrt{K}})\rp^2,
\end{align}
where the last inequality is by our constraint $\sum_{t=\tau}^{T-1}a_t^\prime \geq \min(2R,\frac{2C\tau}{\sqrt{K}})$. Apparently, both of the equalities are attainable for some $a_t^\prime\geq 0$. We can choose such a solution of $a_t^\prime$ and plug them in our bound. Additionally, for simplicity we choose $\beta_t = 1/2$. Together we have

\begin{align}
    & \rho_\alpha \leq (T-\tau)\lp\frac{\alpha}{\sigma^2}(\frac{2\Delta}{n})^2 + \frac{2\alpha c_2^2 d}{\eta^2\sigma^2}\rp + \frac{4\alpha \lp\min(2R,\frac{2C\tau}{\sqrt{K}})\rp^2 d}{(T-\tau)\eta^2\sigma^2}.
\end{align}

Recall that $c_2 = \eta M \xi$. When we choose $\xi \leq \frac{2\Delta}{n\eta M \sqrt{2d}}$, we have $\lp\frac{\alpha}{\sigma^2}(\frac{2\Delta}{n})^2 + \frac{2\alpha c_2^2 d}{\eta^2\sigma^2}\rp\leq \frac{2\alpha}{\sigma^2}(\frac{2\Delta}{n})^2$. Thus the bound can be further simplified as

\begin{align}
    & \rho_\alpha \leq (T-\tau)\frac{2\alpha}{\sigma^2}(\frac{2\Delta}{n})^2 + \frac{4\alpha \lp\min(2R,\frac{2C\tau}{\sqrt{K}})\rp^2 d}{(T-\tau)\eta^2\sigma^2}\\
    & \leq (T-\tau)\frac{2\alpha}{\sigma^2}(\frac{2\Delta}{n})^2 + \frac{4\alpha \lp 2R\rp^2 d}{(T-\tau)\eta^2\sigma^2}.
\end{align}

Now we can try to choose the best $\tau$ to optimize the bound. The optimal integer $\tau$ should satisfy

\begin{align}
    T-\tau = \lceil\frac{nR\sqrt{2d}}{\Delta\eta}\rceil
\end{align}

Apparently, this choice is possible for sufficiently large $T\geq \lceil \frac{nR\sqrt{2d}}{\Delta\eta} \rceil$. We will discuss the bound for $T< \lceil \frac{nR\sqrt{2d}}{\Delta\eta} \rceil$ later. By plugging in this choice and assuming $\lceil\frac{nR\sqrt{2d}}{\Delta\eta}\rceil = \frac{nR\sqrt{2d}}{\Delta\eta}$, our bound becomes

\begin{align}
    & \frac{8\alpha \Delta R\sqrt{2d}}{\eta n \sigma^2}
\end{align}

By RDP-DP conversion~\citep{mironov2017renyi}, we can optimize $\alpha$ for the $(\varepsilon,\frac{\delta}{2})$-DP guarantee, where we reserve the other $\delta/2$ for the error probability $\delta_f$. The resulting $\varepsilon$ will be

\begin{align}\label{eq:st}
    \varepsilon = O\lp\sqrt{\frac{M\Delta R\sqrt{2d}\log(1/\delta)}{K\eta n \sigma^2}}\rp,
\end{align}
where we recall that $\eta = K/M$. On the other hand, note that we still have to deal with the error probability regarding $c_1$. With the choice of $\eta = \frac{K}{M}$, thus we have $c = 1-\frac{m}{M}$. This implies that we should choose $\vartheta = \frac{1-c^2}{1+c^2}$ to ensure $c_1 = 1$ while keeping $\vartheta$ the smallest possible. The corresponding error probability bound can be simplified as
\begin{align}
    2(T-\tau)\exp\lp -\frac{3\vartheta^2 K}{20}\rp = 2\lceil \frac{Rn\sqrt{2d}}{\eta \Delta}\rceil \exp\lp -(\frac{1-c^2}{1+c^2})^2\frac{3K}{20}\rp,
\end{align}
We need this error probability to be upper bounded by $\delta/2$. This is possible when we choose $K$ satisfying
\begin{align}
    K\geq (\frac{1-c^2}{1+c^2})^{-2}\frac{20}{3}\log(\frac{4}{\delta}\lceil \frac{Rn\sqrt{2d}}{\eta \Delta}\rceil),\;\eta = \frac{K}{M}.
\end{align}
This completes the proof for the convergent part. For the case $T<\lceil \frac{Rn\sqrt{2d}}{\eta \Delta}\rceil$, we can simply choose $\tau = 0$ and then $\beta_t = 1$, where the bound in stated in the Theorem~\ref{thm:main} becomes

\begin{align}
    \frac{\alpha T}{2\sigma^2}(\frac{2\Delta}{n})^2.
\end{align}

Together we complete the proof.
\end{proof}

\section{Proof of Lemma~\ref{lma:c12_fix_u_case_1}}\label{apx:pf_c12_fix_u_case_1}
\begin{lemma*}
    Consider the setting $\hat{\psi}_t,u_{t,k}$ defined in this section. Assume the corresponding first order counterpart $\phi$ is $c$-Lipschitz for $\ell_i,\ell_i^\prime$ being $\Delta$-Lipschitz and $M$-smooth. Then $\hat{\psi}_t$ is $(c_1,c_2)$-generalized Lipschitz, where
    \begin{align}
        & c_1 = \sqrt{1-\sum_{k=1}^K\upsilon_k + c^2\sum_{k=1}^K\gamma_k},\;c_2 = \eta M \xi,\\
        & \sum_{k=1}^K\upsilon_k \sim Beta(\frac{K}{2},\frac{d-K}{2}),\;\sum_{k=1}^K \gamma_k \sim Beta(\frac{K}{2},\frac{d-K}{2}).
    \end{align}
\end{lemma*}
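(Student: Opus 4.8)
The plan is to decompose the zeroth-order update map into a component that acts \emph{trivially} on the subspace orthogonal to $U:=\mathrm{span}(u_1,\dots,u_K)$ and a component that inherits the contraction of the first-order map $\phi$ on $U$, and then to absorb the finite-difference bias into the additive constant $c_2$ via smoothness. Writing $\hat g(w;u_k)=s_k(w)u_k$ with scalar $s_k(w)=\tfrac1n\sum_i\mathrm{clip}(D_{i,k}(w);\Delta)$ and $D_{i,k}(w)=\tfrac{\ell_i(w+\xi u_k)-\ell_i(w-\xi u_k)}{2\xi}$, one sees that $\hat\psi(w)=w-\tfrac\eta K\sum_k s_k(w)u_k$ only ever moves $w$ inside $U$. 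Introduce the idealized map $\tilde\psi(w)=w-\tfrac\eta K\sum_k\langle\nabla L(w),u_k\rangle u_k$. Since $\{u_k\}$ is orthonormal, $\sum_k\langle\nabla L(w),u_k\rangle u_k=P_U\nabla L(w)$ with $P_U=\sum_k u_ku_k^\top$ the orthogonal projector onto $U$, whence the key identity $\tilde\psi(w)=P_{U^\perp}w+P_U\phi(w)$, where $\phi(w)=w-\tfrac\eta K\nabla L(w)$ is exactly the assumed $c$-Lipschitz first-order map.

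First I would control $\hat\psi-\tilde\psi$. Because each $\ell_i$ is $\Delta$-Lipschitz we have $\|\nabla\ell_i(w)\|\le\Delta$, so $\langle\nabla\ell_i(w),u_k\rangle$ lies in the clipping range and is fixed by $\mathrm{clip}(\cdot;\Delta)$; combining $1$-Lipschitzness of $\mathrm{clip}$ with the second-order Taylor estimate $|D_{i,k}(w)-\langle\nabla\ell_i(w),u_k\rangle|\le M\xi/2$ (from $M$-smoothness) and averaging over $i$ gives $|s_k(w)-\langle\nabla L(w),u_k\rangle|\le M\xi/2$ for every $k$ and $w$. Since $\{u_k\}$ is orthonormal this yields $\|\hat\psi(w)-\tilde\psi(w)\|=\tfrac\eta K\sqrt{\sum_k(s_k(w)-\langle\nabla L(w),u_k\rangle)^2}\le \tfrac{\eta M\xi}{2\sqrt K}$, and then by the triangle inequality $\|\hat\psi(w)-\hat\psi(w')\|\le\|\tilde\psi(w)-\tilde\psi(w')\|+\tfrac{\eta M\xi}{\sqrt K}\le\|\tilde\psi(w)-\tilde\psi(w')\|+\eta M\xi$, using $K\ge1$. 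This identifies $c_2=\eta M\xi$.

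Next I would bound $\|\tilde\psi(w)-\tilde\psi(w')\|$ via the orthogonal decomposition. With $v=w-w'$ and $z=\phi(w)-\phi(w')$, the identity above gives $\tilde\psi(w)-\tilde\psi(w')=P_{U^\perp}v+P_U z$, an orthogonal sum, so $\|\tilde\psi(w)-\tilde\psi(w')\|^2=\|P_{U^\perp}v\|^2+\|P_U z\|^2$. Using $\|P_{U^\perp}v\|^2=\|v\|^2(1-\sum_k\upsilon_k)$ with $\upsilon_k=\langle u_k,v/\|v\|\rangle^2$, the bound $\|z\|\le c\|v\|$, and $\|P_U z\|^2=\|z\|^2\sum_k\gamma_k$ with $\gamma_k=\langle u_k,z/\|z\|\rangle^2$, one gets $\|\tilde\psi(w)-\tilde\psi(w')\|^2\le\|v\|^2\big(1-\sum_k\upsilon_k+c^2\sum_k\gamma_k\big)$, i.e.\ the advertised $c_1$ (the degenerate cases $v=0$ or $z=0$ are immediate). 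Finally, since the $u_k$ form a uniformly random orthonormal $K$-frame while $v/\|v\|$ and $z/\|z\|$ are fixed unit vectors, $\sum_k\upsilon_k=\|P_U A\|^2$ and $\sum_k\gamma_k=\|P_U B\|^2$ are the squared lengths of fixed unit vectors projected onto a Haar-random $K$-dimensional subspace; by rotational invariance each equals in law $\chi^2_K/(\chi^2_K+\chi^2_{d-K})\sim\mathrm{Beta}(K/2,(d-K)/2)$, giving the stated marginal distributions.

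The main obstacle is really the single conceptual step: recognizing the exact identity $\tilde\psi=P_{U^\perp}(\cdot)+P_U\phi(\cdot)$, which is what lets the ``free'' coordinates in $U^\perp$ contribute the factor $1-\sum_k\upsilon_k$ while the ``active'' coordinates in $U$ inherit the first-order contraction $c$ through the independent-direction factor $\sum_k\gamma_k$; the finite-difference/clipping estimate, the Pythagoras step, and the Beta identity are all routine. One should also be careful that $c_1$ genuinely depends on $w,w'$ through the directions of $v$ and $z$, so the statement is a per-pair generalized-Lipschitz bound whose marginal law over the random frame is $\mathrm{Beta}(K/2,(d-K)/2)$ in each of $\sum_k\upsilon_k$ and $\sum_k\gamma_k$ — precisely the form consumed by Lemma~\ref{lma:high_prob_c12_case1}.
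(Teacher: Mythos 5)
Your proof is correct and follows essentially the same route as the paper: your identity $\tilde\psi = P_{U^\perp}(\cdot) + P_U\,\phi(\cdot)$ is exactly the inline Pythagoras decomposition the paper carries out, and the smoothness-to-$c_2$ step and the Beta marginals (via rotational invariance, as in Lemma~\ref{lma:high_prob_c12_case1}) are identical. The only minor divergence is your clipping treatment via $1$-Lipschitzness of $\mathrm{clip}$, where the paper instead observes that $\Delta$-Lipschitzness of each $\ell_i$ already forces $|D_{i,k}(w)|\le\Delta$ so the clip never binds -- both are valid.
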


\begin{proof}
    Note if $\ell$ is $M$-smooth, then for any $w,u$ we have
    \begin{align*}
        \ell(w + \xi u) &= \ell(w) + \xi \cdot \lan \nabla \ell\lp w\rp, u\ran + \underbrace{\lp \ell(w + \xi u) - \lp \ell(w) + \xi \cdot \lan \nabla \ell\lp w\rp, u\ran \rp \rp}_{\eqDef \zeta\lp w, u, \xi \rp \leq \frac{1}{2}M\xi^2}.
    \end{align*}
    By the assumption that $\ell_i,\ell_i^\prime$ are $\Delta$-Lipschitz, we can safely remove the clipping. These imply that  
    \begin{align}
        & \hat{g}_t(w) = \frac{1}{nK}\sum_{i=1}^n\sum_{k=1}^K\frac{ \ell_i(w + \xi u_{t,k})-\ell_i(w - \xi u_{t,k})}{2\xi}u_{t,k} \\
        & = \frac{1}{nK}\sum_{i=1}^n\sum_{k=1}^K\frac{2\xi\ip{\nabla \ell_i(w)}{u_{t,k}} + \zeta\lp w, u_{t,k}, \xi \rp - \zeta\lp w, -u_{t,k}, \xi \rp}{2\xi}u_{t,k}\\
        & = \frac{1}{nK}\sum_{i=1}^n\sum_{k=1}^K\ip{\nabla \ell_i(w)}{u_{t,k}}u_{t,k}+\frac{\zeta\lp w, u_{t,k}, \xi \rp - \zeta\lp w, -u_{t,k}, \xi \rp}{2\xi}u_{t,k}
    \end{align}
    As a result, we have
    \begin{align}
        & \|\hat{\psi}_t(x)-\hat{\psi}_t(y)\| = \|(x-y) - \eta (\hat{g}_t(x)-\hat{g}_t(y))\| \\
        & \stackrel{(a)}{\leq} \|(x-y) - \eta \lp \frac{1}{nK}\sum_{i=1}^n\sum_{k=1}^K\ip{\nabla \ell_i(x)-\nabla \ell_i(y)}{u_{t,k}}u_{t,k} \rp\| + \eta M \xi.
    \end{align}
    where (a) is due to $|\zeta\lp w, u, \xi \rp| \leq \frac{1}{2}M\xi^2$ by $M$-smoothness of $\ell_i$, $u_{t,k}$ being unit norm and triangle inequality. Next we denote $\frac{1}{n}\sum_{i=1}^n \ell_i(w) = \bar{\ell}(w)$ and denote $\tilde{\eta} = \frac{\eta}{K}$. Then we can proceed as follows
    \begin{align}\label{eq:c12_eq47}
        & \|(x-y) - \tilde{\eta}\sum_{k=1}^K\ip{\nabla \bar{\ell}(x)-\nabla \bar{\ell}(y)}{u_{t,k}}u_{t,k}\|^2 \\
        & \stackrel{(a)}{=} \|(x-y) - \sum_{k=1}^K\ip{x-y}{u_{t,k}}u_{t,k}\|^2 + \|\sum_{k=1}^K\ip{x-y}{u_{t,k}}u_{t,k} - \tilde{\eta}\sum_{k=1}^K\ip{\nabla \bar{\ell}(x)-\nabla \bar{\ell}(y)}{u_{t,k}}u_{t,k}\|^2 \nonumber\\
        & = \|(x-y) - \sum_{k=1}^K\ip{x-y}{u_{t,k}}u_{t,k}\|^2 + \|\sum_{k=1}^K\ip{\phi(x)-\phi(y)}{u_{t,k}}u_{t,k}\|^2 \nonumber\\
        & \stackrel{(b)}{=} (1-\sum_{k=1}^K\upsilon_k)\|x-y\|^2 + (\sum_{k=1}^K\gamma_k)\|\phi(x)-\phi(y)\|^2 \nonumber\\
        & \stackrel{(c)}{\leq} (1-\sum_{k=1}^K\upsilon_k+c^2\sum_{k=1}^K\gamma_k)\|x-y\|^2, \nonumber
    \end{align}
    where $\upsilon_k = (\ip{\frac{x-y}{\|x-y\|}}{u_{t,k}})^2$, $\gamma_k = (\ip{\frac{\phi(x)-\phi(y)}{\|\phi(x)-\phi(y)\|}}{u_{t,k}})^2$. (a) is due to the subspace of the first term being orthogonal to the subspace of the second term. (b) is due to the fact that $u_{t,k}$ are orthonormal. (c) is due to the condition that $\phi$ is $c$-Lipschitz. Finally, by the argument in Lemma~\ref{lma:high_prob_c12_case1}, we know that the sum of $\upsilon_k,\gamma_k$ follows beta distribution $Beta(\frac{K}{2},\frac{d-K}{2})$. Together we complete the proof.
\end{proof}

\section{Proof of Lemma~\ref{lma:high_prob_c12_case1}}\label{apx:pf_high_prob_c12_case1}

\begin{lemma*}
    Let $A,B$ be any unit vector and $\sum_{k=1}^K u_{k}u_{k}^T\sim \msf{Unif}(V_k({\mathbb{R}^d}))$. Denoting $\upsilon_k = (\ip{A}{u_{k}})^2$ and $\gamma_k = (\ip{B}{u_{k}})^2$. Then for any $\epsilon\geq 0$ and $d\geq 2K$, 
    \begin{align}
        &\mathbb{P}\lp \sqrt{1-\sum_{k=1}^K\upsilon_k+c^2\sum_{k=1}^K\gamma_k} \leq \sqrt{1-(1-c^2)\frac{K}{d} + \epsilon(1+c^2)\frac{K}{d}} \rp \\
        &\geq 1-2\exp(-\frac{3\epsilon^2 dK}{12(d-K) + 8\epsilon(d-2K)}).
    \end{align}
\end{lemma*}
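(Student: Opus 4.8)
The plan is to reduce the claimed deviation bound to two one‑sided tail bounds for a $\mathrm{Beta}$‑distributed random variable, and then to establish those via the chi‑squared representation together with a Chernoff argument.

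\emph{Reduction of the event.} First I would square both sides: the event whose probability we want to lower bound is exactly $\sum_{k=1}^K\upsilon_k - c^2\sum_{k=1}^K\gamma_k \ge (1-c^2)\frac{K}{d} - \epsilon(1+c^2)\frac{K}{d}$. Since (as established in the next step) each of $\sum_k\upsilon_k$ and $\sum_k\gamma_k$ has mean $K/d$, the right‑hand side equals $\E[\sum_k\upsilon_k - c^2\sum_k\gamma_k] - \epsilon(1+c^2)\frac{K}{d}$. The event $\{\sum_k\upsilon_k \ge (1-\epsilon)\frac Kd\}\cap\{\sum_k\gamma_k \le (1+\epsilon)\frac Kd\}$ implies the target inequality (using $c^2\ge 0$), so by a union bound $\Pr(\text{failure}) \le \Pr(\sum_k\upsilon_k < (1-\epsilon)\frac Kd) + \Pr(\sum_k\gamma_k > (1+\epsilon)\frac Kd)$, and it suffices to bound each term by $\exp\big(-\frac{3\epsilon^2 dK}{12(d-K)+8\epsilon(d-2K)}\big)$. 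When $\epsilon\ge 1$ or $(1+\epsilon)K/d>1$ the relevant event is empty, so the deviation targets may be assumed to lie in $(0,1)$.

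\emph{Identifying the marginal law.} Writing $P_U = \sum_{k=1}^K u_k u_k^\top$ for the orthogonal projection onto $U=\mathrm{span}\{u_1,\dots,u_K\}$, one has $\sum_k\upsilon_k = A^\top P_U A = \|P_U A\|^2$ and $\sum_k\gamma_k = \|P_U B\|^2$. This is precisely where orthonormality of the directions is used: $U$ is a uniformly random $K$‑dimensional subspace, so by rotational invariance $\|P_U A\|^2 \stackrel{d}{=} \sum_{i=1}^K Z_i^2$ with $(Z_1,\dots,Z_d)\sim\Unif(\mathbb{S}^{d-1})$, which has law $\mathrm{Beta}(K/2,(d-K)/2)$, equivalently $\chi^2_K/(\chi^2_K+\chi^2_{d-K})$ for independent chi‑squares; in particular its mean is $K/d$, and likewise for $\|P_U B\|^2$. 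With i.i.d.\ (non‑orthonormal) directions $P_U$ would fail to be a projection and this identification would break down, which is exactly why the i.i.d.\ case is harder.

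\emph{Beta tail bounds.} For the lower tail put $q=(1-\epsilon)\frac Kd$; the event $\{\chi^2_K/(\chi^2_K+\chi^2_{d-K}) < q\}$ is the linear inequality $(1-q)\chi^2_K - q\,\chi^2_{d-K} < 0$. A Chernoff bound with tilt $\lambda>0$ and $\E e^{s\chi^2_m}=(1-2s)^{-m/2}$ gives, for $2\lambda q<1$,
\[
\Pr\big((1-q)\chi^2_K - q\,\chi^2_{d-K}<0\big) \le \big(1+2\lambda(1-q)\big)^{-K/2}\big(1-2\lambda q\big)^{-(d-K)/2}.
\]
Optimizing over $\lambda$ gives $2\lambda=\frac{\epsilon}{(1-\epsilon)(1-q)}$ (which satisfies $2\lambda q<1$ since $K\le d/2$ forces $q+\epsilon<1$), and substituting back reduces the exponent to $\frac K2\log(1-\epsilon)+\frac{d-K}{2}\log\big(1+\frac{\epsilon K}{d-K}\big)$; I would then bound this above by $-\frac{3\epsilon^2 dK}{12(d-K)+8\epsilon(d-2K)}$ using elementary logarithm estimates (e.g.\ $\log(1-\epsilon)\le -\epsilon-\tfrac{\epsilon^2}{2}$, $\log(1+x)\le x-\tfrac{x^2}{2}+\tfrac{x^3}{3}$), with $d\ge 2K$ ensuring the $\epsilon(d-2K)$ term is nonnegative. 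For the upper tail I would rewrite $\{\sum_k\gamma_k>(1+\epsilon)\frac Kd\}$ using $1-\|P_U B\|^2\sim\mathrm{Beta}(\tfrac{d-K}{2},\tfrac K2)$ as a lower‑tail event of the same shape and repeat. Summing the two bounds (and the factor $2$) finishes the proof.

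\emph{Main obstacle.} The conceptual structure above is routine; the hard part is the last estimate — picking the logarithm inequalities, or a slightly suboptimal but clean $\lambda$, so that the Chernoff exponent collapses to exactly $\frac{3\epsilon^2 dK}{12(d-K)+8\epsilon(d-2K)}$ uniformly over $\epsilon\ge 0$ and $d\ge 2K$, and checking that the upper‑tail exponent is at least as favorable. An alternative that avoids this hand‑tuning is to invoke a Bernstein/Bennett‑type inequality for $\mathrm{Beta}$ (or Dirichlet) random variables, whose variance proxy here is $\frac{2K(d-K)}{d^2(d+2)}$, and absorb the discrepancy into the constants.
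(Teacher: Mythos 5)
Your reduction is exactly the paper's: rotational invariance gives $\sum_k\upsilon_k,\sum_k\gamma_k\sim\mathrm{Beta}(K/2,(d-K)/2)$ with mean $K/d$, the squared inequality follows from the intersection $\{\sum_k\upsilon_k\ge(1-\epsilon)K/d\}\cap\{\sum_k\gamma_k\le(1+\epsilon)K/d\}$, and a union bound over the two one-sided tails yields the factor $2$. Where you diverge is in how you propose to establish the one-sided Beta tail. Your primary route is a Chernoff bound on $(1-q)\chi^2_K-q\,\chi^2_{d-K}$ followed by elementary logarithm estimates; the paper instead invokes a ready-made Bernstein-type tail inequality for the Beta distribution (Theorem~1 of Skorski 2023, restated as Theorem~\ref{thm:beta_tail}), computes the variance proxy $v=\frac{2K(d-K)}{d^2(d+2)}$ and the skewness-type constant $c=\frac{4(d-2K)}{d(d+4)}$, reparameterizes $\epsilon\leftarrow\epsilon K/d$, and reads off the stated exponent directly. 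Your "alternative" in the last paragraph --- a Bernstein/Bennett inequality with variance proxy $\frac{2K(d-K)}{d^2(d+2)}$ --- is in fact the paper's actual proof, and the constant you flag as unclear ("absorb the discrepancy into the constants") is precisely what the $c$-term of the cited Bernstein bound supplies: plugging $v\le\frac{2K(d-K)}{d^3}$ and $c\le\frac{4(d-2K)}{d^2}$ into $\exp(-\epsilon^2/(2(v+c\epsilon/3)))$ after the reparameterization gives the exponent $\frac{3\epsilon^2 dK}{12(d-K)+8\epsilon(d-2K)}$ exactly.

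So: the skeleton of your proof matches the paper's, and your Chernoff sketch is a plausible but unfinished substitute for the cited Beta Bernstein bound. The gap you yourself identify --- reducing the optimized Chernoff exponent $\frac K2\log(1-\epsilon)+\frac{d-K}{2}\log(1+\frac{\epsilon K}{d-K})$ down to the stated rational expression uniformly in $\epsilon\ge 0$ and $d\ge2K$ --- is genuine and nontrivial; the logarithm estimates you suggest would need to be chosen with care to recover the specific coefficients $12$ and $8$, and you would also have to verify that the upper-tail exponent is at least as good. The paper avoids all of this by citing a bound that already has the Bernstein form; if you intend to self-contain the argument via Chernoff, you must actually carry out that last reduction, since the claimed constants do not follow from the generic inequalities $\log(1-\epsilon)\le-\epsilon-\epsilon^2/2$ and $\log(1+x)\le x-x^2/2+x^3/3$ without additional algebraic work.
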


\begin{proof}
    First observe that $\sum_{k=1}^K u_{k}u_{k}^T \stackrel{d}{=} \mathbf{O}\sum_{k=1}^K e_{k}e_{k}^T$, where $\mathbf{O}\sim \msf{Unif}(V_d({\mathbb{R}^d}))$ follows the uniform distribution over all possible rotation matrices in $\mathbb{R}^d$. Then for any unit vector $A$, we have
    \begin{align}
        [\ip{A}{u_{1}},\ldots,\ip{A}{u_{1}}] \stackrel{d}{=} [\ip{A}{Oe_{1}},\ldots,\ip{A}{Oe_{K}}] = [\ip{O^TA}{e_{1}},\ldots,\ip{O^TA}{e_{K}}].
    \end{align}
    Note that $O^T A$ follows the distribution of $\msf{Unif}(\mathbb{S}^{d-1})$, which implies that we have the following characterization.
    \begin{align}
        O^T A \stackrel{d}{=} \frac{Z}{\|Z\|},
    \end{align}
    where $Z\sim N(0,I_d)$ is a standard normal vector. As a result, we have
    \begin{align}
        \sum_{k=1}^K \upsilon_k = \sum_{k=1}^K(\ip{A}{u_{k}})^2 \stackrel{d}{=} \sum_{k=1}^K(\ip{O^TA}{e_{k}})^2 \stackrel{d}{=} \frac{\sum_{k=1}^K Z_k^2}{\|Z\|^2} \sim Beta(\frac{K}{2},\frac{d-K}{2}),
    \end{align}
    where $Z_k = \ip{Z}{e_k}$. The same arguement hold for $\gamma_k$, where we also have $\sum_{k=1}^K \gamma_k \sim Beta(\frac{K}{2},\frac{d-K}{2})$. Next we introduce the following tail bound for the beta distribution.
    \begin{theorem}[Tail bound for beta distribution, simplification of Theorem 1 in~\cite{skorski2023bernstein}]\label{thm:beta_tail}
        Let $X\sim Beta(\alpha,\beta)$ and $\beta \geq \alpha$. Define $v = \frac{\alpha \beta}{(\alpha+\beta)^2(\alpha + \beta +1)}$ and $c = \frac{2(\beta-\alpha)}{(\alpha+\beta)(\alpha+\beta+2)}$. Then we have
        \begin{align}
            \mathbb{P}\lp X > \mathbb{E}X + \epsilon \rp \wedge \mathbb{P}\lp X < \mathbb{E}X - \epsilon \rp \leq \exp\lp - \frac{\epsilon^2}{2(v+\frac{c}{3}\epsilon)}\rp.
        \end{align}
    \end{theorem}
    Note that $\mathbb{E}X = \frac{\alpha}{\alpha+\beta}$ for $X\sim Beta(\alpha,\beta)$. As a result, $\mathbb{E}\sum_{k=1}^K \upsilon_k = \frac{K}{d}$. First we have
    \begin{align}
        v = \frac{\frac{K(d-K)}{4}}{(\frac{d}{2})^2(\frac{d}{2}+1)} = \frac{2K(d-K)}{d^2(d+2)}\leq \frac{2K(d-K)}{d^3},\;c = \frac{d-2K}{\frac{d}{2}(\frac{d}{2}+2)} = \frac{4(d-2K)}{d(d+4)}\leq\frac{4(d-2K)}{d^2}.
    \end{align}
    By reparametrize $\epsilon \leftarrow \epsilon\frac{K}{d}$ in Theorem~\ref{thm:beta_tail}, we can simplify the bound as follows
    \begin{align}
        & \mathbb{P}\lp \sum_{k=1}^K \upsilon_k > (1+\epsilon)\frac{K}{d} \rp \wedge \mathbb{P}\lp \sum_{k=1}^K \upsilon_k < (1-\epsilon)\frac{K}{d} \rp \leq \exp\lp - \frac{\epsilon^2\frac{K^2}{d^2}}{2(v+\frac{cK}{3d}\epsilon)}\rp\\
        & \leq \exp\lp - \frac{\epsilon^2\frac{K^2}{d^2}}{2(\frac{2K(d-K)}{d^3}+\frac{K}{3d}\frac{4(d-2K)}{d^2}\epsilon)}\rp = \exp\lp - \frac{3\epsilon^2 K d}{12(d-K)+8(d-2K)\epsilon)}\rp.
    \end{align}
    Note that the same bound hold for $\gamma$ part. Finally, by leveraging the lower tail bound for $\sum_{k=1}^K \upsilon_k$, upper tail bound for $\sum_{k=1}^K \gamma_k$ and applying Boole's inequality we complete the proof.
\end{proof}

\section{Proof of Lemma~\ref{lma:W_tracking}}
\begin{lemma*}[Forward Wasserstein distance tracking]
    Let $w_t,w_t^\prime$ be the process defined in~\eqref{eq:DP_ZOGD-general} and~\eqref{eq:adj_process}.
    \begin{align}
        W_\infty(w_t,w_t^\prime)\leq \min(2R,\frac{2\eta \Delta t}{\sqrt{K}})
    \end{align}
\end{lemma*}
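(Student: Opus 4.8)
The plan is to establish the bound via an explicit \emph{synchronous coupling} of the two chains~\eqref{eq:DP_ZOGD-general} and~\eqref{eq:adj_process} and to show that the claimed inequality holds \emph{deterministically} along it (which suffices, since $W_\infty$ is an essential supremum over couplings). Concretely, I would run both processes with the same orthonormal frame $u_{t,k}=u_{t,k}^\prime$ and the same noise realizations $G_{t,k}^{(1)}=G_{t,k}^{\prime,(1)}$, $G_t^{(2)}=G_t^{\prime,(2)}$ at every step $t$. Under this coupling the injected-noise contributions in the two updates are identical and cancel, and since both $\Pi_{\mathcal{B}_R}$ and a constant shift are $1$-Lipschitz maps, one obtains
\[
\|w_{t+1}-w_{t+1}^\prime\| \;\le\; \|w_t-w_t^\prime\| \;+\; \eta\Big\|\tfrac{1}{K}\sum_{k=1}^K\hat{g}_t(w_t;u_{t,k}) - \tfrac{1}{K}\sum_{k=1}^K\hat{g}_t(w_t^\prime;u_{t,k})\Big\|,
\]
where the second zeroth-order gradient is understood to be the one formed on the neighbouring dataset $\mathcal{D}^\prime$.

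The key step is to bound this zeroth-order gradient difference using orthonormality together with clipping. Each $\hat{g}_t(w;u_{t,k})=b_{t,k}(w)\,u_{t,k}$ for a scalar $b_{t,k}(w)$ with $|b_{t,k}(w)|\le\Delta$ (the clipping threshold), so orthonormality of $\{u_{t,k}\}_{k=1}^K$ gives $\|\tfrac{1}{K}\sum_k\hat{g}_t(w;u_{t,k})\| = \tfrac{1}{K}\sqrt{\sum_k b_{t,k}(w)^2}\le \Delta/\sqrt{K}$, and the same for the $\mathcal{D}^\prime$-version. By the triangle inequality the bracketed term above is at most $2\Delta/\sqrt{K}$, hence $\|w_{t+1}-w_{t+1}^\prime\|\le\|w_t-w_t^\prime\|+\tfrac{2\eta\Delta}{\sqrt{K}}$. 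Telescoping from the identical initialization $w_0=w_0^\prime$ yields $\|w_t-w_t^\prime\|\le\tfrac{2\eta\Delta t}{\sqrt{K}}$ almost surely; combining with the trivial bound $\|w_t-w_t^\prime\|\le\operatorname{diam}(\mathcal{B}_R)=2R$ (both iterates lie in $\mathcal{B}_R$) and taking the essential supremum over this coupling gives $W_\infty(w_t,w_t^\prime)\le\min(2R,\tfrac{2\eta\Delta t}{\sqrt{K}})$.

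The argument is largely routine once the coupling is fixed; the only point requiring care is the $1/\sqrt{K}$ (rather than $1/K$) scaling of the per-step drift, which is exactly where orthonormality of the update directions enters — for i.i.d.\ directions one would only have the deterministic per-step increment $2\eta\Delta$. I would also remark that, unlike the divergence computations elsewhere in the paper, here we do \emph{not} exploit that $\mathcal{D}$ and $\mathcal{D}^\prime$ differ in a single sample: because the coupled iterates $w_t,w_t^\prime$ themselves differ, all $n$ summands defining $\hat{g}_t$ change between the two chains, so the crude per-direction bound $|b_{t,k}|\le\Delta$ from clipping is the appropriate one for tracking $W_\infty$.
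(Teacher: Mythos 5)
Your proposal is correct and matches the paper's own proof: both use the synchronous coupling (same directions $u_{t,k}$ and same noise realizations), both invoke $1$-Lipschitzness of the projection, and both obtain the per-step increment $2\eta\Delta/\sqrt{K}$ from orthonormality of $\{u_{t,k}\}$ together with the clipping bound $|\cdot|\le\Delta$. The only cosmetic difference is that the paper bounds the difference of the two clipped scalars by $2\Delta$ inside a single Pythagorean sum, while you bound each $\lV\frac{1}{K}\sum_k\hat g_t\rV$ by $\Delta/\sqrt{K}$ and then apply the triangle inequality; these give the identical constant.
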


\begin{proof}
    We will start with the definition of Wasserstein distance. Let $\nu_t,\nu_t^\prime$ be the distribution of $w_t,w_t^\prime$ respectively.
    \begin{align}
        W_\infty(w_t,w_t^\prime) = \inf_{\gamma \in \Gamma(\nu_t,\nu_t^\prime)} \esssup_{(w_t,w_t^\prime)\sim \gamma} \|w_t-w_t^\prime\|. 
    \end{align}
    Note that the infimum is taking over all possible coupling between two stochastic processes $w_t,w_t^\prime$. Hence, choosing any specific coupling will lead to an upper bound. We choose a specific coupling such that all the noise $G_t$ and $G_{t,k}$ are identical to $G_t^\prime$ and $G_{t,k}^\prime$, as well as the $u_{t,k}=u_{t,k}^\prime$. Let us denote such a coupling as $\gamma^\star$, then we have
    \begin{align}
        & W_\infty(w_t,w_t^\prime)\leq \esssup_{(w_t,w_t^\prime)\sim \gamma^\star} \|w_t-w_t^\prime\| \\
        & \leq \esssup_{(w_t,w_t^\prime)\sim \gamma^\star} \|w_{t-1} - \frac{\eta}{K}\sum_{k=1}^K\hat{g}_t(w_{t-1};u_{t,k}) - w_{t-1}^\prime + \frac{\eta}{K}\sum_{k=1}^K\hat{g}_t^\prime(w_{t-1}^\prime;u_{t,k})\| \\
        & \leq \esssup_{(w_t,w_t^\prime)\sim \gamma^\star} \|w_{t-1} - w_{t-1}^\prime\| + \esssup_{(w_t,w_t^\prime)\sim \gamma^\star} \|\frac{\eta}{K}\sum_{k=1}^K\hat{g}_t(w_{t-1};u_{t,k})-\frac{\eta}{K}\sum_{k=1}^K\hat{g}_t^\prime(w_{t-1}^\prime;u_{t,k})\|.
    \end{align}
    The first term will contribute to a recursive argument. The second term can be further upper bounded as follows:

    \begin{align}
        & \|\frac{\eta}{K}\sum_{k=1}^K\hat{g}_t(w_{t-1};u_{t,k})-\frac{\eta}{K}\sum_{k=1}^K\hat{g}_t^\prime(w_{t-1}^\prime;u_{t,k})\|^2 \\
        & = (\frac{\eta}{K})^2 \sum_{k=1}^K\| \frac{1}{n}\sum_{i=1}^n\msf{clip}\lp\frac{  \ell_i(w_{t-1} + \xi u_{t,k})-\ell_i(w_{t-1} - \xi u_{t,k})}{2 \xi }; \Delta\rp \\
        & -\msf{clip}\lp\frac{  \ell_i^\prime(w_{t-1}^\prime + \xi u_{t,k})-\ell_i^\prime(w_{t-1}^\prime - \xi u_{t,k})}{2 \xi }; \Delta\rp  u_{t,k}\|^2\\
        & \leq (\frac{\eta}{K})^2 \sum_{k=1}^K(2\Delta)^2 = \frac{\eta^2 (2\Delta)^2}{K}.
    \end{align}
    Note that the second equality and last inequality are due to our choice that $u_{t,k}$ are orthonormal. As a result, we have
    \begin{align}
        & W_\infty(w_t,w_t^\prime)\leq \esssup_{(w_t,w_t^\prime)\sim \gamma^\star} \|w_t-w_t^\prime\| \\
        & \leq \esssup_{(w_t,w_t^\prime)\sim \gamma^\star} \|w_{t-1} - w_{t-1}^\prime\| + \frac{2\eta \Delta}{\sqrt{K}}\\
        & \cdots \leq \esssup_{(w_t,w_t^\prime)\sim \gamma^\star} \|w_{0} - w_{0}^\prime\| + \frac{2\eta \Delta t}{\sqrt{K}} = \frac{2\eta \Delta t}{\sqrt{K}},
    \end{align}
    where the last step is due to the fact that both processes $w_t, w_t^\prime$ have the same initialization $w_0=w_0^\prime$. Together we complete the proof.
\end{proof}

\section{Minibatch extensions}\label{apx:minibatch}
Now, we discuss the minibatch generalizations of Noisy-ZOGD (Theorem~\ref{thm:main}), which we called the corresponding algorithm Noisy-ZOSGD (under subsampling without replacement). 

\textbf{Subsampling without replacement. }For this strategy, the Noisy-ZOSGD update is defined as follows.
\begin{align}\label{eq:DP_ZOSGD-swor}
    & w_{t+1} = \Pi_{\mathcal{B}_R}\left[w_t - \frac{\eta}{K}\sum_{k=1}^K\hat{g}_{t}(w_t;u_{t,k},B_t) +  \frac{\eta}{\sqrt{K}} \sum_{k=1}^K G_{t,k}^{(1)} u_{t,k} + \frac{\eta}{\sqrt{d}} G_t^{(2)}\right],\\
    & \text{where }\hat{g}_{t}(w;u,B) = \frac{1}{b}\sum_{i\in B}\msf{clip}\lp\frac{  \ell_i(w + \xi u)-\ell_i(w - \xi u)}{2 \xi }; \Delta\rp u.
\end{align}
Here, the minibatch $B$ is of size $b$ and is sampled from the full index set $[n]$ without replacement independently for each time step. The rest setting is the same as the full batch cases, which can be viewed as the special case of $B = [n]$. Before we introduce the corresponding privacy guarantee, we first need to introduce the Sampled Gaussian Mechanism~\citep{mironov2019r}.

\begin{definition}[R\'enyi divergence of Sampled Gaussian Mechanism]\label{def:SGM}
    For any $\alpha > 1$, mixing probability $q\in (0,1)$ and noise parameter $\sigma>0$, define
    \begin{align}
        & S_\alpha(q,\sigma) = D_\alpha(N(0,\sigma^2)||(1-q)N(0,\sigma^2) + qN(1,\sigma^2)).
    \end{align}
\end{definition}
Note that $S_\alpha$ can be computed in practice with a numerically stable procedure for precise computation~\cite{mironov2017renyi}. Now we are ready to state the result for subsampling without replacement. 

\begin{theorem}[DP guarantee of Noisy-ZOSGD]\label{thm:DP_ZOSGD-swor}
    Assume $\ell(\cdot,x)$ are $M$-smooth and $\Delta$-Lipschitz for the first argument and all possible data $x$. The Noisy-ZOSGD under subsampling without replacement update~\eqref{eq:DP_ZOSGD-swor} is $(\rho_\alpha^\star+\frac{\log(1/\delta_p)}{\alpha-1},\delta_p+\delta_f)$-DP for any $\alpha>1,d\geq 2K,\vartheta\geq 0, \delta_p\in (0,1)$, where
    \begin{align*}
        & \rho_\alpha^\star = \min_{\tau,\beta_t,a_t} \sum_{t=\tau}^{T-1}\lp K S_\alpha(\frac{b}{n},\frac{\sqrt{K \beta_t}\sigma b }{2\Delta}) + \frac{\alpha a_{t}^2 d}{2\eta^2(1-\beta_t)\sigma^2}\rp,\;s.t.\;\tau\in\{0,\ldots,T-1\},\;\beta_t\in [0,1],\;\\
        & a_t,z_t\geq 0\;\forall t\geq \tau,\; z_{t-1} = c_1^{-1}(z_t + a_t-c_2),\;z_{T} = 0,\;z_{\tau}\geq \min(2R,\frac{2\eta \Delta \tau}{\sqrt{K}}),\;c=1+\frac{\eta M}{K},\\
        & c_1 = \sqrt{1-(1-c^2)\frac{K}{d} + (1+c^2)\frac{K}{d}},\;c_2 = \eta M \xi,\\
        & \delta_f = 2(T-\tau^\star)\exp\lp -\frac{3^2 dK}{12(d-K) + 8(d-2K)} \rp,
    \end{align*}
    where $\tau^\star$ is the resulting $\tau$ of the optimization above. If $\ell$ is also convex and choose $\eta \leq 2K/M$, we have $c=1$. If $\ell$ is also $m$-strongly convex and choose $\eta \leq K/M$, we have $c = 1-\frac{\eta m}{K}$.
\end{theorem}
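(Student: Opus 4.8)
The plan is to transcribe the proof of Theorem~\ref{thm:main} almost line for line; the sole genuinely new element is that, once a fresh minibatch $B_t\subseteq[n]$ of size $b$ is drawn without replacement at each step, the directional-noise term becomes a $K$-fold Sampled Gaussian Mechanism rather than a plain Gaussian mechanism. Fix adjacent datasets $\mathcal{D}\simeq\mathcal{D}'$ differing only at index $i^\star$. As in the full-batch proof, I would use the coupling $W_{t+1}\stackrel{d}{=}\Pi_{\mathcal{B}_R}[\hat\psi_t(W_t)+Y_t+Z_t]$, $W_{t+1}'\stackrel{d}{=}\Pi_{\mathcal{B}_R}[\hat\psi_t(W_t')+Y_t'+Z_t]$, where $\hat\psi_t(w)=w-\tfrac{\eta}{K}\sum_{k=1}^K\hat g_t(w;u_{t,k},B_t)$ is the noiseless minibatch update, $Z_t\sim N(0,\sigma_{2,t}^2 I_d)$ with $\sigma_{2,t}^2=\tfrac{\eta^2}{d}(1-\beta_t)\sigma^2$, $Y_t=\sum_k G_{t,k}u_{t,k}$ with $G_{t,k}\sim N(0,\sigma_{1,t}^2)$, $\sigma_{1,t}^2=\tfrac{\eta^2}{K}\beta_t\sigma^2$, and $Y_t'=\sum_k G_{t,k}'u_{t,k}$ where the mean of $G_{t,k}'$ is shifted by $\langle\hat\psi_t'(W_t')-\hat\psi_t(W_t'),u_{t,k}\rangle$, which is legitimate because the dataset discrepancy $\hat\psi_t'(W_t')-\hat\psi_t(W_t')$ lies in $\mathrm{span}\{u_{t,k}\}$ by orthonormality. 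Conditioning on $G_{\tau:T-1,1:K}$ and applying strong composition (Lemma~\ref{lma:Renyi_strong_comp}) splits $D_\alpha(W_T\|W_T')$ into $D_\alpha(G_{\tau:T-1,1:K}\|G_{\tau:T-1,1:K}')$ plus a conditional term that is handled by shifted-divergence analysis.

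For the first term, the mean shift $\langle\hat\psi_t'(W_t')-\hat\psi_t(W_t'),u_{t,k}\rangle$ vanishes unless $i^\star\in B_t$, an event of probability $q=b/n$ occurring independently over $t$; when it occurs its magnitude is at most $\tfrac{\eta}{K}\cdot\tfrac{2\Delta}{b}$, since clipping caps each per-sample term by $\Delta$ and the minibatch gradient averages $b$ of them. Thus, per coordinate $k$ and step $t$, the conditional law of $G_{t,k}'$ versus $G_{t,k}$ is exactly a Sampled Gaussian Mechanism with mixing probability $q=b/n$ and noise-to-sensitivity ratio $\sigma_{1,t}/(\tfrac{2\Delta\eta}{bK})=\tfrac{\sqrt{K\beta_t}\,\sigma b}{2\Delta}$, whose R\'enyi divergence is $S_\alpha\bigl(\tfrac{b}{n},\tfrac{\sqrt{K\beta_t}\,\sigma b}{2\Delta}\bigr)$ by Definition~\ref{def:SGM}; by replacement symmetry the reverse divergence $D_\alpha(G_{t,k}'\|G_{t,k})$ is of the same form. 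Since the minibatch draws and the $K$ orthonormal directions are mutually independent across $(t,k)$, strong composition over the $K$ coordinates and the steps $\tau,\dots,T-1$ gives $D_\alpha(G_{\tau:T-1,1:K}\|G_{\tau:T-1,1:K}')\le\sum_{t=\tau}^{T-1}K\,S_\alpha\bigl(\tfrac{b}{n},\tfrac{\sqrt{K\beta_t}\,\sigma b}{2\Delta}\bigr)$, the first summand in the claim; taking $b=n$ (so $q=1$, $S_\alpha(1,s)=\tfrac{\alpha}{2s^2}$) recovers the Gaussian bound $\tfrac{\alpha}{2\beta_t\sigma^2}\bigl(\tfrac{2\Delta}{n}\bigr)^2$ of Theorem~\ref{thm:main}, consistent with Theorem~\ref{thm:standard_result}.

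The second term is word-for-word the full-batch argument. After conditioning, the common shift $\sum_k g_{t,k}u_{t,k}$ cancels and, $\Pi_{\mathcal{B}_R}$ and constant translations being $1$-Lipschitz, we are left controlling $D_\alpha(\hat\psi_t(W_t)+Z_t\|\hat\psi_t(W_t')+Z_t)$ for the \emph{same} map $\hat\psi_t$, so the only thing propagated is the already-bounded gap between the two processes. The minibatch-averaged loss $\tfrac1b\sum_{i\in B_t}\ell_i$ is still $M$-smooth and $m$-strongly convex (resp.\ convex), so its gradient-descent map keeps the Lipschitz constant $c$ ($c=1+\tfrac{\eta M}{K}$ in general, $c=1$ if convex with $\eta\le 2K/M$, $c=1-\tfrac{\eta m}{K}$ if $m$-strongly convex with $\eta\le K/M$), and the two-point discretization error still contributes $c_2=\eta M\xi$; hence Lemma~\ref{lma:c12_fix_u_case_1} and Lemma~\ref{lma:high_prob_c12_case1} apply verbatim, making $\hat\psi_t$ $(c_1,c_2)$-generalized Lipschitz with $c_1=\sqrt{1-(1-c^2)\tfrac{K}{d}+\epsilon(1+c^2)\tfrac{K}{d}}$ on the event $\mathcal{G}=\bigcap_{t=\tau^\star}^{T-1}\mathcal{G}_t$. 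Alternating the shift-reduction lemma (Lemma~\ref{lma:shift_reduce}, per-step cost $\tfrac{\alpha a_t^2}{2\sigma_{2,t}^2}=\tfrac{\alpha a_t^2 d}{2\eta^2(1-\beta_t)\sigma^2}$) with the generalized-Lipschitz-reduction lemma (Lemma~\ref{lma:gen_Lip_reduce}) yields the recursion $z_{t-1}=c_1^{-1}(z_t+a_t-c_2)$, $z_T=0$, $a_t,z_t\ge 0$, rolled back to step $\tau$, where the shifted divergence vanishes once $z_\tau\ge W_\infty(W_\tau,W_\tau')$; and Lemma~\ref{lma:W_tracking} extends unchanged to the minibatch process — under the coupling that also equates the minibatches, the coupled difference of clipped minibatch gradients is still at most $2\Delta$ per orthonormal direction regardless of whether $i^\star\in B_t$ — giving $W_\infty(W_\tau,W_\tau')\le\min(2R,\tfrac{2\eta\Delta\tau}{\sqrt K})$. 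Summing the two terms, minimizing over $\tau,\beta_t,a_t$ under the stated constraints, and bounding $\mathbb{P}(\mathcal{G})$ by Boole's inequality over the $T-\tau^\star$ events of Lemma~\ref{lma:high_prob_c12_case1} finishes the proof.

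I expect the only real obstacle to be the bookkeeping around the Sampled Gaussian Mechanism: getting the per-direction sensitivity right in the presence of both the $\tfrac1b$ minibatch average and the orthonormality normalization so the effective noise parameter comes out exactly $\tfrac{\sqrt{K\beta_t}\,\sigma b}{2\Delta}$, and confirming that conditioning on the directional Gaussians really absorbs \emph{all} of the dataset discrepancy, so that the shifted-divergence half of the proof — in particular the forward $W_\infty$ tracking — is genuinely untouched by subsampling. Everything else is a faithful transcription of the full-batch Theorem~\ref{thm:main}.
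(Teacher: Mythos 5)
Your proposal matches the paper's proof in both structure and every substantive step: you reuse the two-term decomposition and coupling from Theorem~\ref{thm:main}, observe that subsampling only changes the first (conditional-on-iterate) term, correctly compute the per-direction sensitivity $\tfrac{2\eta\Delta}{bK}$ to obtain the noise parameter $\tfrac{\sqrt{K\beta_t}\,\sigma b}{2\Delta}$ in the Sampled Gaussian Mechanism, compose over the $K$ directions and $T-\tau$ steps, and keep the shifted-divergence half (Lemmas~\ref{lma:c12_fix_u_case_1},~\ref{lma:high_prob_c12_case1},~\ref{lma:shift_reduce},~\ref{lma:gen_Lip_reduce},~\ref{lma:W_tracking}) verbatim. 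The only cosmetic difference is that you apply the scalar SGM per coordinate and then compose, whereas the paper writes the $K$-dimensional mixture divergence and invokes Lemma~\ref{lma:extrema_SGM}; the resulting bound is identical.
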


The proof mainly combines the proof of the first-order PABI analysis~\cite{chien2025convergent,altschuler2022privacy} and the proof of our Theorem~\ref{thm:main}. Before we state our proof, let us first introduce a technical lemma before we introduce our proof. Note that the original Sampled Gaussian Mechanism is defined in one dimension.~\cite{altschuler2022privacy} extends this notion to a higher dimension by identifying the worst-case scenario therein. Alternatively, one can directly work with high-dimensional Sampled Gaussian Mechanism as in~\cite{altschuler2024privacy}.

\begin{lemma}[Extrema of Sampled Gaussian Mechanism, Lemma 2.11 in~\cite{altschuler2022privacy}]\label{lma:extrema_SGM}
    For any $\alpha>1$, $q\in (0,1)$ and the noise parameter $\sigma>0$, dimension $d\in \mathbb{N}$ and radius $R>0$,
    \begin{align}
        \sup_{\mu \in \mathcal{P}(B_R)} D_\alpha (N(0,\sigma^2 I_d)|| (1-q)N(0,\sigma^2 I_d) + q(N(0,\sigma^2 I_d) \ast \mu)) = S_\alpha(q,\sigma/R),
    \end{align}
    where $\mathcal{P}(B_R)$ denotes set of all Borel probability distributions over the $\ell_2$ ball of radius $R$ in $\mathbb{R}^d$.
\end{lemma}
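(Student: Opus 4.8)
The plan is to prove the stated equality by establishing the two inequalities separately, with essentially all the work on the ``$\le$'' direction. For the easy direction, fix $v\in\mathbb R^d$, set $P:=N(0,\sigma^2 I_d)$ and $Q_v:=(1-q)N(0,\sigma^2 I_d)+q\big(N(0,\sigma^2 I_d)\ast\delta_v\big)$, so that the choice $\mu=\delta_v$ in the lemma yields exactly $D_\alpha(P\|Q_v)$. Since $P$ is rotationally invariant and $D_\alpha$ is preserved under invertible maps --- apply the post-processing inequality (Lemma~\ref{lma:Renyi_post_process}) to an orthogonal map sending $v\mapsto\|v\|e_1$ and to its inverse --- we may take $v=\|v\|e_1$. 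Then $P$ and $Q_v$ both factor over the $d$ coordinates and agree on coordinates $2,\dots,d$, so additivity of R\'enyi divergence over product measures collapses the divergence to the first coordinate; rescaling by $1/\sigma$ and unfolding Definition~\ref{def:SGM} gives $D_\alpha(P\|Q_v)=D_\alpha\big(N(0,1)\,\big\|\,(1-q)N(0,1)+qN(\|v\|/\sigma,1)\big)=S_\alpha(q,\sigma/\|v\|)$. Taking $\|v\|=R$ shows the supremum is at least $S_\alpha(q,\sigma/R)$.

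\textbf{Upper bound, Step 1 (reduce to point masses).} Write the second argument as a mixture, $(1-q)N(0,\sigma^2 I_d)+q\big(N(0,\sigma^2 I_d)\ast\mu\big)=\int_{B_R}Q_v\,d\mu(v)$, and work with the Hellinger integral $H_\alpha(P\|Q):=\int p^\alpha q^{1-\alpha}$, so that $D_\alpha(P\|Q)=\tfrac1{\alpha-1}\log H_\alpha(P\|Q)$ and all quantities are finite since every density in sight is strictly positive. Because $\alpha>1$, the map $t\mapsto t^{1-\alpha}$ is convex on $(0,\infty)$; Jensen's inequality applied pointwise in $x$ together with Tonelli's theorem yields $H_\alpha\big(P\,\big\|\,\int Q_v\,d\mu(v)\big)\le\int H_\alpha(P\|Q_v)\,d\mu(v)\le\sup_{v\in B_R}H_\alpha(P\|Q_v)$, and applying the increasing map $\tfrac1{\alpha-1}\log(\cdot)$ gives $D_\alpha(P\|Q_\mu)\le\sup_{v\in B_R}D_\alpha(P\|Q_v)$. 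Thus the supremum over $\mathcal P(B_R)$ is attained among point masses.

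\textbf{Upper bound, Step 2 (the worst point mass lies on the sphere).} By the one-dimensional reduction, $D_\alpha(P\|Q_v)=S_\alpha(q,\sigma/\|v\|)$ depends on $v$ only through $r=\|v\|$, so it suffices to show $r\mapsto S_\alpha(q,\sigma/r)$ is non-decreasing on $(0,R]$; after rescaling this amounts to showing $g(c):=D_\alpha\big(N(0,1)\,\big\|\,(1-q)N(0,1)+qN(c,1)\big)$ is non-decreasing in $c\ge0$. With $u(x,c):=(1-q)+qe^{cx-c^2/2}$, a short computation gives $\exp\big((\alpha-1)g(c)\big)=\mathbb E_{X\sim N(0,1)}\big[u(X,c)^{1-\alpha}\big]=:H(c)$; differentiating under the integral sign, re-centering via $e^{cx-c^2/2}\phi(x)=\phi(x-c)$, and invoking Gaussian integration by parts (Stein's lemma) gives $H'(c)=(1-\alpha)q\,\mathbb E_{Y\sim N(0,1)}\big[\partial_y\big(u(Y+c,c)^{-\alpha}\big)\big]$, where $\partial_y\big(u(y+c,c)^{-\alpha}\big)=-\alpha q c\,u(y+c,c)^{-\alpha-1}e^{c(y+c)-c^2/2}<0$ for $c>0$; since $1-\alpha<0$ this forces $H'(c)>0$. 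Hence $g$ is non-decreasing, so $\sup_{v\in B_R}D_\alpha(P\|Q_v)=S_\alpha(q,\sigma/R)$, which together with Step 1 and the lower bound proves the claimed equality.

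\textbf{Main obstacle.} Step 1 and the one-dimensional reduction are clean applications of convexity of $t\mapsto t^{1-\alpha}$ and of rotational invariance plus tensorization of $D_\alpha$. The delicate point is Step 2: one must justify differentiating $H(c)$ under the integral (dominated convergence, using $u(x,c)\ge 1-q>0$ and Gaussian tails), verify the integrability and decay hypotheses that legitimize Stein's lemma for $f(y)=u(y+c,c)^{-\alpha}$ (which is bounded, with $f$ and $f'$ both decaying at $\pm\infty$), and keep the signs consistent given $1-\alpha<0$. This monotonicity of the one-dimensional Sampled Gaussian Mechanism divergence in the shift magnitude is the crux of the argument.
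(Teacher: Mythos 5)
Your proof is correct. Note, however, that the paper does not prove this lemma at all: it is imported verbatim as Lemma~2.11 of \citet{altschuler2022privacy}, so there is no in-paper argument to compare against, and what you have produced is a self-contained derivation of a quoted result. Your structure — (i) reduce from general $\mu\in\mathcal{P}(B_R)$ to Dirac masses via convexity of $t\mapsto t^{1-\alpha}$ applied to the Hellinger integral, (ii) reduce to one dimension by rotational invariance and tensorization of $D_\alpha$, and (iii) show the one-dimensional divergence $g(c)=D_\alpha\big(N(0,1)\,\big\|\,(1-q)N(0,1)+qN(c,1)\big)$ is non-decreasing in the shift $c$ — matches the standard route in the cited source; the distinctive part of your write-up is step (iii), where the monotonicity is obtained by differentiating $H(c)=\mathbb{E}[u(X,c)^{1-\alpha}]$, re-centering, and applying Stein's lemma, which is a clean and verifiable alternative to the argument in \citet{altschuler2022privacy}. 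I checked the computation: $H'(c)=(1-\alpha)q\,\mathbb{E}[Y f(Y)]$ with $f(y)=u(y+c,c)^{-\alpha}$, Stein's lemma applies since $u\geq 1-q>0$ makes $f$ and $f'$ bounded, and the sign bookkeeping ($1-\alpha<0$ against $f'<0$ for $c>0$) indeed gives $H'(c)\geq 0$; the lower bound via $\mu=\delta_v$, $\|v\|=R$, and additivity over the product structure is also right. Two harmless remarks: the supremum being attained at $\|v\|=R$ presumes the closed ball (otherwise continuity of $g$ still gives the stated value of the supremum), and your Step~1 conclusion should be phrased as ``the supremum over $\mathcal{P}(B_R)$ equals the supremum over point masses'' rather than ``is attained,'' but neither affects the result.
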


In practice, $S_\alpha(q,\sigma)$ is computed via numerical integral for the tightest possible privacy accounting. Nevertheless, to have a better understanding of the quantity, Lemma 2.12 in~\cite{altschuler2022privacy} shows that it is upper bounded by $2\alpha q^2/\sigma^2$ for some regime of $(\alpha,\sigma,q)$. In what follows, we keep the notation of $S_\alpha(q,\sigma)$ but is useful to keep this simplified upper bound in mind.

Now we are ready to state our proof.

\begin{proof}
    We repeat the same proof as in those of Theorem~\ref{thm:main} except for the R\'enyi cost of regarding $b_t,Y_t$. The second terms of $\rho_\alpha$ will be bounded the same way as in Theorem~\ref{thm:main}, and the only different part is the way of upper bounding the first term in $\rho_\alpha$.
    
    For the first term, we first observe that the sensitivity $m_{t,k}$ of each direction $u_{t,k}$ is
    \begin{align}
       & m_{t,k} :=\\
       & \|\frac{\eta}{bK}\lp \msf{clip}\lp\frac{\ell_i^\prime(w + \xi u_{t,k})-\ell_i^\prime(w - \xi u_{t,k})}{2\xi}; \Delta\rp - \msf{clip}\lp\frac{\ell_i^\prime(w + \xi u_{t,k})-\ell_i^\prime(w - \xi u_{t,k})}{2\xi}; \Delta\rp \rp u_{t,k}\| \nonumber \\
       & \leq \frac{2\eta\Delta}{bK}.
    \end{align}
    
    As a result, we can bound the first term in $\rho_\alpha$ as follows (let $G_{t,k}^\prime$ be $G_{t,k}$ with shift $m_{t,k}$)
    \begin{align}
        & D_\alpha(G_{\tau:T-1,1:K}||G_{\tau:T-1,1:K}^\prime) \\
        & \leq \sum_{t=\tau}^{T-1} \sup_{g_{\tau:t-1,1:K}}D_\alpha(G_{t,1:K}\vert_{G_{\tau:t-1,1:K}=g_{\tau:t-1,1:K}}||G_{t,1:K}^\prime\vert_{G^\prime_{\tau:t-1,1:K}=g_{\tau:t-1,1:K}}),\\
        & \stackrel{(a)}{\leq} K \sum_{t=\tau}^{T-1} \sup_{g_{\tau:t-1,1}}D_\alpha(G_{t,1:K}\vert_{G_{\tau:t-1,1:K}=g_{\tau:t-1,1:K}}||G_{t,1:K}^\prime\vert_{G^\prime_{\tau:t-1,1}=g_{\tau:t-1,1}}),\\
        & = K \sum_{t=\tau}^{T-1}D_\alpha(N(0,\frac{\eta^2}{K}\beta_t\sigma^2 I_K)|| (1-\frac{b}{n}) N(0,\frac{\eta^2}{K}\beta_t\sigma^2 I_K) + \frac{b}{n} N(m_{t,1},\frac{\eta^2}{K}\beta_t\sigma^2 I_K))\\
        & \leq \sum_{t=\tau}^{T-1} K S_\alpha(\frac{b}{n},\frac{\sqrt{K \beta_t}\sigma b }{2\Delta}),
    \end{align}
    
    where (a) is again by composition theorem and by the i.i.d. property of the Gaussian noise $G_{t,k}$; the rest analysis we use the fact that $\|m_{t,1}\|\leq \frac{2\eta \Delta}{bK}$ almost surely. Hence, one can apply Lemma~\ref{lma:extrema_SGM} for the last inequality. Together we complete the proof.
\end{proof}

\textbf{Remark.} The analysis above can be slightly simplified to eliminate the effect of $K$. Thanks to our choice of orthonormal $u_{t,k}$, we can instead leverage the standard Gaussian mechanism result in the subspace spanned by $u_{t,k}$. For each time step $t$, the corresponding sensitivity is $\frac{2\eta \Delta}{b \sqrt{K}}$ (converting an $\ell_\infty$ norm to $\ell_2$ norm will loose a $\sqrt{K}$ factor for $K$-dimensional subspace), and thus, following the similar analysis as above leads to a bound

\begin{align}
    & D_\alpha(G_{\tau:T-1,1:K}||G_{\tau:T-1,1:K}^\prime) \leq \sum_{t=\tau}^{T-1} S_\alpha(\frac{b}{n},\frac{\sqrt{\beta_t}\sigma b }{2\Delta}).
\end{align}

In practice, one can choose the smaller bound for computing the final privacy loss. Note that when $b=n$, $S_\alpha(1,\frac{\sqrt{\beta_t}\sigma b }{2\Delta}) = \frac{\alpha}{2}(\frac{2\Delta}{\sqrt{\beta_t}\sigma b})^2 = K \frac{\alpha}{2}(\frac{2\Delta}{\sqrt{K\beta_t}\sigma b})^2 = K S_\alpha(1,\frac{\sqrt{K\beta_t}\sigma b }{2\Delta})$, where the two approach coincide. For simplicity, we just choose the first approach in the theorems.

\section{Utility analysis}\label{apx:utility}

The analysis will mainly follow the proof of DPZero~\citep{zhang2023dpzero}. The purpose of this section is to identify the fair tradeoff scaling between scalar and vector Gaussian noise in Noisy-ZOGD~\eqref{eq:DP_ZOGD-general}. We repeat the proof here merely for self-containedness. As a direct consequence, we also report the resulting privacy-utility tradeoff at the end of the section. 

We start by introducing the necessary assumptions for the analysis of DPZero. 1) The loss function is $L$-Lipschitz and $M$-smooth. The averaged loss function is twice differentiable with $-H \preceq \nabla^2 L(w; \mathcal{D}) \preceq H$ for any $w\in \mathbb{R}^d$, and its minimum is finite. Here, $0 \preceq H$ is a real-valued $d$ by $d$ matrix such that $\|H\|_2\leq M$ and $Tr(H)\leq r \|H\|_2$ for for some $r$ as the effective rank or the intrinsic dimension of the problem. These assumptions are exactly the Assumption 3.5 in~\cite{zhang2023dpzero}.

For simplicity, let us denote $\bar{\ell}(x) = \frac{1}{n}\sum_{i=1}^n\ell_i(x)$ be the total loss function that we study. By $M$-smoothness of $\bar{\ell}(x)$, we know that for any unit vector $u$,

\begin{align}
    & \frac{|\bar{\ell}(x+\xi u)-\bar{\ell}(x-\xi u)|}{2 \xi} \leq |\ip{\nabla \bar{\ell}(x)}{u}| + \frac{1}{2}M\xi. 
\end{align}

Next, the Lemma C.1 in~\cite{zhang2023dpzero} provides a high probability bound for $|\ip{\nabla \bar{\ell}(x)}{u}|$, where $u$ is a unit vector uniformly distributed on $\mathbb{S}^{d-1}$. Note that one can also use bound for beta distribution as in Lemma~\ref{lma:high_prob_c12_case1}, but we choose to align as much as possible to DPZero for simplicity here. The results show that for any $C_0 \geq 0$ we have 

\begin{align}
    \mathbb{P}\lp|\ip{u}{\nabla \bar{\ell}(x)}|\geq C_0 \rp \leq 2\sqrt{2\pi}\exp(-\frac{dC_0^2}{8\|\nabla \bar{\ell}(x)\|^2}).
\end{align}

As a result, by choosing the clipping $\Delta = C_0 + \frac{1}{2}M\xi$ we can safely remove the clipping operator with high probability after applying Boole's inequality over $nTK$ terms. Let us denote $Q_t$ the event that clipping does not happen at iteration $t$ and $Q$ the event that clipping does not happen at all $T$ iteration. This will be dealt in the end.

Now let us continue with the analysis. By Taylor's theorem and the low-rank assumption that $\forall x, -H \preccurlyeq\nabla^2 \bar{\ell}(x) \preccurlyeq H$ for some positive semi-definite matrix with $Tr(H)\leq rM$ and $\|H\|_2\leq M$, we have
\begin{align}
    & \bar{\ell}(w_{t+1}) \leq \bar{\ell}(w_t) + \ip{\nabla\bar{\ell}(x_t)}{w_{t+1}-w_t} + \frac{1}{2}(w_{t+1}-w_t)^TH(w_{t+1}-w_t).
\end{align}
By taking expectation across all randomness before iteration $t$ (including the random direction $u$ and the gaussian noise $G$), we have

\begin{align}
    & \mathbb{E}_{\leq t}[\bar{\ell}(w_{t+1})|Q_t] \leq \mathbb{E}_{\leq (t-1)}[\bar{\ell}(w_t)|Q_t] - \eta \mathbb{E}_{\leq t}[\nabla \bar{\ell}(w_t)^T\hat{g}_t(w_t)|Q_t] + \frac{\eta^2}{2}\mathbb{E}_{\leq t}[\hat{g}_t(w_t)^T H \hat{g}_t(w_t)|Q_t] \\
    & + \frac{\sigma_{1,t}^2}{2}\sum_{k=1}^K\mathbb{E}_{\leq t}[u_{t,k}^THu_{t,k}|Q_t] + \frac{\sigma_{2,t}^2}{2}Tr(H).
\end{align}

Note that all the cross terms are $0$ as $G_{t,k},G_t$ are zero mean Gaussian independent of all the other randomness. Now, we analyze each term separately. Let us analyze the quadratic terms first.

\begin{align}
    \mathbb{E}_{\leq t}[\hat{g}_t(w_t)^T H \hat{g}_t(w_t)|Q_t] = \frac{1}{K}\sum_{k=1}^K\mathbb{E}_{\leq t}[\lp \frac{\bar{\ell}(w_t+\xi u_{t,k}) - \bar{\ell}(w_t-\xi u_{t,k})}{2\xi} \rp^2 u_{t,k}^T H u_{t,k}|Q_t]
\end{align}
This is due to the fact that $u_{t,k} \perp u_{t,k^\prime}$ by definition. Then since $0\preccurlyeq H$, by the law of total probability we have
\begin{align}
    & \frac{1}{K}\sum_{k=1}^K\mathbb{E}_{\leq t}[\lp \frac{\bar{\ell}(w_t+\xi u_{t,k}) - \bar{\ell}(w_t-\xi u_{t,k})}{2\xi} \rp^2 u_{t,k}^T H u_{t,k}] \\
    & \geq \frac{1}{K}\sum_{k=1}^K\mathbb{E}_{\leq t}[\lp \frac{\bar{\ell}(w_t+\xi u_{t,k}) - \bar{\ell}(w_t-\xi u_{t,k})}{2\xi} \rp^2 u_{t,k}^T H u_{t,k}|Q_t]\mathbb{P}(Q_t).
\end{align}
Therefore we have

\begin{align}
    & \mathbb{E}_{\leq t}[\hat{g}_t(w_t)^T H \hat{g}_t(w_t)|Q_t] \leq \frac{1}{\mathbb{P}(Q_t) K}\sum_{k=1}^K\mathbb{E}_{\leq t}[\lp \frac{\bar{\ell}(w_t+\xi u_{t,k}) - \bar{\ell}(w_t-\xi u_{t,k})}{2\xi} \rp^2 u_{t,k}^T H u_{t,k}] \\
    & \stackrel{(a)}{\leq} \frac{1}{\mathbb{P}(Q_t) K}\sum_{k=1}^K \mathbb{E}_{\leq t}[\lp2(\ip{\nabla \bar{\ell}(w_t)}{u_{t,k}})^2+\frac{1}{2}M^2\xi^2 \rp u_{t,k}^T H u_{t,k}]\\
    & \stackrel{(b)}{\leq }\frac{1}{\mathbb{P}(Q_t) K}\sum_{k=1}^K \lp \mathbb{E}_{\leq t}[\lp2(\ip{\nabla \bar{\ell}(w_t)}{u_{t,k}})^2\rp u_{t,k}^T H u_{t,k}] +\frac{1}{2d}M^2\xi^2 Tr(H)\rp \\
    & \stackrel{(c)}{\leq} \frac{1}{\mathbb{P}(Q_t) K}\sum_{k=1}^K \lp \frac{2d}{d^2(d+2)}\mathbb{E}_{\leq t}(2 \nabla \bar{\ell}(w_t)^T H \nabla \bar{\ell}(w_t) + \|\nabla \bar{\ell}(w_t)\|^2Tr(H)) + \frac{1}{2d}M^2\xi^2 Tr(H)\rp \\
    & \leq \frac{1}{\mathbb{P}(Q_t) K}\sum_{k=1}^K \lp \frac{2(2 + r)M}{d(d+2)}\mathbb{E}_{\leq t}[\|\nabla \bar{\ell}(w_t)\|^2] + \frac{1}{2d}M^3\xi^2 r\rp \\
    & = \frac{2(2 + r)M}{d(d+2) \mathbb{P}(Q_t)}\mathbb{E}_{\leq t}[\|\nabla \bar{\ell}(w_t)\|^2] + \frac{M^3\xi^2 r}{2d \mathbb{P}(Q_t)}.
\end{align}
where (a) is due to $M$-smoothness and the elementary inequality $(a+b)^2\leq 2a^2+2b^2$. (b) and (c) are due to Lemma C.1 in~\cite{zhang2023dpzero}.

Similarly we have
\begin{align}
    \sum_{k=1}^K\mathbb{E}_{\leq t}[u_{t,k}^THu_{t,k}|Q_t] \leq \frac{rM K}{d\mathbb{P}(Q_t)}.
\end{align}

For the inner-product term, let us denote $u^\prime = \sqrt{d}u$ and $\xi^\prime = \frac{\xi}{\sqrt{d}}$. 
\begin{align}
    & \mathbb{E}_{\leq t}[\nabla \bar{\ell}(w_t)^T\hat{g}_t(w_t)|Q_t] = \frac{1}{K \sqrt{d}} \sum_{k=1}^K\mathbb{E}_{\leq t}[\nabla \bar{\ell}(w_t)^T \frac{\bar{\ell}(w_t+\xi u_{t,k}) -\bar{\ell}(w_t-\xi u_{t,k})}{2\xi}u_{t,k}^\prime|Q_t] \\
    & = \frac{1}{K d} \sum_{k=1}^K\mathbb{E}_{\leq t}[\nabla \bar{\ell}(w_t)^T \frac{\bar{\ell}(w_t+\xi^\prime u_{t,k}^\prime) -\bar{\ell}(w_t-\xi^\prime u_{t,k}^\prime)}{2\xi^\prime}u_{t,k}^\prime|Q_t].
\end{align}
Following the similar analysis as in~\cite{zhang2023dpzero}, we have

\begin{align}
    & \mathbb{E}_{\leq t}[\nabla \bar{\ell}(w_t)^T \frac{\bar{\ell}(w_t+\xi^\prime u_{t,k}^\prime) -\bar{\ell}(w_t-\xi^\prime u_{t,k}^\prime)}{2\xi^\prime}u_{t,k}^\prime|Q_t] \\
    & \geq \frac{\mathbb{E}_{\leq t}[\|\nabla \bar{\ell}(w_t)\|^2]}{2\mathbb{P}(Q_t)} - \frac{M^2\xi^2d^3}{8\mathbb{P}(Q_t)}-\frac{\mathbb{E}_{\leq t}[\nabla \bar{\ell}(w_t)^T \frac{\bar{\ell}(w_t+\xi^\prime u_{t,k}^\prime) -\bar{\ell}(w_t-\xi^\prime u_{t,k}^\prime)}{2\xi^\prime}u_{t,k}^\prime|\bar{Q}_t]\mathbb{P}(\bar{Q}_t)}{\mathbb{P}(Q_t)}\\
    & \geq \frac{\mathbb{E}_{\leq t}[\|\nabla \bar{\ell}(w_t)\|^2]}{2\mathbb{P}(Q_t)} - \frac{M^2\xi^2d^3}{8\mathbb{P}(Q_t)}-\frac{L^2d\mathbb{P}(\bar{Q}_t)}{\mathbb{P}(Q_t)},
\end{align}
where the last inequality is due to the following analysis. Note that by Cauchy-Schwartz inequality and the fact that $\bar{\ell}$ is $L$-Lipschitz, we have
\begin{align}
    \nabla \bar{\ell}(w_t)^T \frac{\bar{\ell}(w_t+\xi^\prime u_{t,k}^\prime) -\bar{\ell}(w_t-\xi^\prime u_{t,k}^\prime)}{2\xi^\prime}u_{t,k}^\prime \leq L^2\|u_{t,k}^\prime\|^2 = L^2d.
\end{align}
Combining all we have so far leads to

\begin{align}
    & \mathbb{E}_{\leq t}[\bar{\ell}(w_{t+1})|Q_t] \leq \mathbb{E}_{\leq (t-1)}[\bar{\ell}(w_t)|Q_t] - \frac{\eta}{d}\lp  \frac{\mathbb{E}_{\leq t}[\|\nabla \bar{\ell}(w_t)\|^2]}{2\mathbb{P}(Q_t)} - \frac{M^2\xi^2d^3}{8\mathbb{P}(Q_t)}-\frac{L^2d\mathbb{P}(\bar{Q}_t)}{\mathbb{P}(Q_t)} \rp \\
    & + \frac{\eta^2}{2} \lp \frac{2(2 + r)M}{d(d+2) \mathbb{P}(Q_t)}\mathbb{E}_{\leq t}[\|\nabla \bar{\ell}(w_t)\|^2] + \frac{M^3\xi^2 r}{2d \mathbb{P}(Q_t)} \rp + \frac{\sigma_{1,t}^2 rMK}{2d\mathbb{P}(Q_t)}+\frac{\sigma_{2,t}^2 rM}{2} \\
    & \leq \mathbb{E}_{\leq (t-1)}[\bar{\ell}(w_t)|Q_t] - \frac{\eta}{2d}(1-\frac{\eta 2(2+r)M}{d})\frac{\mathbb{E}_{\leq t}[\|\nabla \bar{\ell}(w_t)\|^2]}{\mathbb{P}(Q_t)} + \frac{\eta M^2\xi^2d^2}{8\mathbb{P}(Q_t)} + \frac{\eta L^2\mathbb{P}(\bar{Q}_t)}{\mathbb{P}(Q_t)}\\
    & + \frac{\eta^2 M^3\xi^2 r}{4d \mathbb{P}(Q_t)} + \frac{\sigma_{1,t}^2 rMK}{2d\mathbb{P}(Q_t)}+\frac{\sigma_{2,t}^2 rM}{2}
\end{align}

Now we choose $\eta = \frac{K}{M} \leq \frac{d}{4(2+r)M}$, where the inequality hold if $K \leq \frac{d}{4(2+r)}$. This results in $(1-\frac{\eta 2(2+r)M}{d})\geq 1/2$ and $\frac{2\eta Mr}{d^2}<1$. Then we have
\begin{align}
    & \mathbb{E}_{\leq t}[\|\nabla \bar{\ell}(w_t)\|^2] \leq \frac{4d \mathbb{P}(Q_t)}{\eta}\lp \mathbb{E}_{\leq t}[\bar{\ell}(w_{t})-\bar{\ell}(w_{t+1})|Q_t]\rp + M^2\xi^2d^3 + 4L^2d\mathbb{P}(\bar{Q}_t)\\
    & + \frac{2drM}{\eta}\lp \sigma_{1,t}^2\frac{K}{d} + \sigma_{2,t}^2 \mathbb{P}(Q_t)\rp.
\end{align}

If we further assume that $|\bar{\ell}(w)|\leq B$ for any $w$. Then following the same analysis of~\cite{zhang2023dpzero} we have
\begin{align}
    \mathbb{E}_{\leq t}[\bar{\ell}(w_{t})-\bar{\ell}(w_{t+1})|Q_t]\mathbb{P}(Q_t) \leq \mathbb{E}_{\leq t}[\bar{\ell}(w_{t})-\bar{\ell}(w_{t+1})|Q]\mathbb{P}(Q) + 2B \mathbb{P}(\bar{Q}).
\end{align}
As a result we have

\begin{align}
    & \mathbb{E}_{\leq t}[\|\nabla \bar{\ell}(w_t)\|^2] \leq \frac{4d \mathbb{P}(Q)}{\eta}\lp \mathbb{E}_{\leq t}[\bar{\ell}(w_{t})-\bar{\ell}(w_{t+1})|Q]\rp + M^2\xi^2d^3 + (4L^2d+\frac{8Bd}{\eta})\mathbb{P}(\bar{Q})\\
    & + \frac{2drM}{\eta}\lp \sigma_{1,t}^2\frac{K}{d} + \sigma_{2,t}^2\rp.
\end{align}

By averaging over $T$ iteration, we have

\begin{align}
    & \mathbb{E}[\|\nabla \bar{\ell}(w_\tau)\|^2] = \frac{1}{T}\sum_{t=0}^{T-1}\mathbb{E}_{\leq t}[\|\nabla \bar{\ell}(w_t)\|^2]\\
    & \leq \frac{4d \mathbb{P}(Q)}{T\eta}\lp \mathbb{E}_{\leq t}[\bar{\ell}(w_{0})-\bar{\ell}(w_{T})|Q]\rp + M^2\xi^2d^3 + (4L^2d+\frac{8Bd}{\eta})\mathbb{P}(\bar{Q}) + \frac{2drM}{\eta}\lp \sigma_{1,t}^2\frac{K}{d} + \sigma_{2,t}^2\rp,
\end{align}

where

\begin{align}
    \mathbb{P}(\bar{Q}) \leq 2\sqrt{2\pi}nKT\exp(-\frac{dC_0^2}{8\|\nabla \bar{\ell}(x)\|^2}).
\end{align}

If we plug in our parametrization of $\sigma_1^2=\frac{\eta^2}{K}\beta\sigma^2,\sigma_2^2=\frac{\eta^2}{d}(1-\beta) \sigma^2$ for any $\beta \in[0,1]$ as in the privacy analysis, we further have

\begin{align}
    & \mathbb{E}[\|\nabla \bar{\ell}(w_\tau)\|^2] = \frac{1}{T}\sum_{t=0}^{T-1}\mathbb{E}_{\leq t}[\|\nabla \bar{\ell}(w_t)\|^2]\\
    & \leq \frac{4d \mathbb{P}(Q)}{T\eta}\lp \mathbb{E}_{\leq t}[\bar{\ell}(w_{0})-\bar{\ell}(w_{T})|Q]\rp + M^2\xi^2d^3 + (4L^2d+\frac{8Bd}{\eta})\mathbb{P}(\bar{Q}) + 2\eta rM\sigma^2,
\end{align}

Apparently, this bound remains unchanged for any $\beta\in [0,1]$ and $K\geq 1$, and this bound exactly matches the bound derived in~\cite{zhang2023dpzero} for DPZero with the scalar noise case. Thus our parametrization $\sigma_1^2=\frac{\eta^2}{K}\beta\sigma^2,\sigma_2^2=\frac{\eta^2}{d}(1-\beta) \sigma^2$ indeed provides the same utility bound.

\subsubsection{The privacy-utility trade-off in close-form}

We may plug in the value of $\sigma^2$ according to our privacy bound to achieve $(\varepsilon,\delta)$-DP guarantee, which leads to

\begin{align}
    & \frac{4d \mathbb{P}(Q)}{T\eta}\lp \mathbb{E}_{\leq t}[\bar{\ell}(w_{0})-\bar{\ell}(w_{T})|Q]\rp + (4L^2d+\frac{8Bd}{\eta})\mathbb{P}(\bar{Q}) + 2\eta rM\sigma^2 \\
    & = \frac{4d \mathbb{P}(Q)}{T\eta}\lp \mathbb{E}_{\leq t}[\bar{\ell}(w_{0})-\bar{\ell}(w_{T})|Q]\rp + (4L^2d+\frac{8Bd}{\eta})\mathbb{P}(\bar{Q}) + \frac{64rM  R\sqrt{d}\Delta\log(2/\delta)}{n \varepsilon^2}.
\end{align}

Next we plug in the expression of $\mathbb{P}(\bar{Q})$, which gives the following upper bound

\begin{align}
    & \frac{4d \mathbb{P}(Q)}{T\eta}\lp \mathbb{E}_{\leq t}[\bar{\ell}(w_{0})-\bar{\ell}(w_{T})|Q]\rp + (4L^2d+\frac{8Bd}{\eta})2\sqrt{2\pi}nKT\exp(-\frac{d\Delta^2}{8\|\nabla \bar{\ell}(x)\|^2}) \\
    & + \frac{64rM  R\sqrt{d}\Delta\log(2/\delta)}{n \varepsilon^2}.
\end{align}

Recall that under the case $\xi\rightarrow 0$, we have $C_0 = \Delta$, the zeroth order clipping value. Now we can optimize the bound with respect to $\Delta$, where the tightest analysis is related to W Lambert function and does not have close-form. We instead choose a sub-optimal $\Delta = \frac{1}{\sqrt{d}}\sqrt{\log((4L^2d+\frac{8Bd}{\eta})2\sqrt{2\pi}nKT)}$, where we recall that $\|\nabla \bar{\ell}(x)\|\leq L$ due to the $L$-Lipschitz assumption made by~\cite{zhang2023dpzero}. This choice makes the second term above at scale $O(1)$. So, the dominant term will be the last one, which is

\begin{align}
    \frac{64rM  R\sqrt{d}\log(2/\delta)}{n \varepsilon^2}\frac{1}{\sqrt{d}}\sqrt{\log((4L^2d+\frac{8Bd}{\eta})2\sqrt{2\pi}nKT)} = O(\frac{rR\log(2/\delta)}{n\varepsilon^2}\sqrt{\log(\frac{dnKT}{\eta})}).
\end{align}

Finally, note that we can choose $\eta = \frac{K}{M}$ whenever $K \leq \frac{d}{8(2+r)}$. On the other hand, we also need $K = \Omega(\log(T/\delta))$. This implies that choosing $T = O(d)$ is valid. Plug-in the choice $K = \Theta(\log(T/\delta)) = \Theta(\log(d/\delta))$ and $T = \Theta(d)$, we now have

\begin{align}
    \frac{64rM  R\sqrt{d}\log(2/\delta)}{n \varepsilon^2}\frac{1}{\sqrt{d}}\sqrt{\log((4L^2d+\frac{8Bd}{\eta})2\sqrt{2\pi}nKT)} = O(\frac{rR\log(2/\delta)}{n\varepsilon^2}\sqrt{\log(d^2n)}).
\end{align}
Recall that now the first term will be
\begin{align}
    \frac{4d \mathbb{P}(Q)}{T\eta}\lp \mathbb{E}_{\leq t}[\bar{\ell}(w_{0})-\bar{\ell}(w_{T})|Q]\rp = O(\frac{1}{\log(d/\delta)}\lp \bar{\ell}(w_{0})-\bar{\ell}(w^\star)\rp).
\end{align}
We can see that the third term is the dominant one, where it is of scale

\begin{align}
    O\lp\frac{rR\log(2/\delta)}{n\varepsilon^2}\sqrt{\log(nd)}\rp = O(r\sqrt{\log(d)}),
\end{align}
where the RHS only cares about the asymptotic of $r$ and $d$. This bound is better than DPZero's bound $O\lp \log(d)\sqrt{r} \rp$ whenever $r = o(\sqrt{\log(d)})$. Also note that we will use roughly $K = O(\log(d/\delta))$ zeroth-order gradients per iteration. It is possible to derive a better dependency with respect to $r,d$ if one jointly optimizes the choice of $K,T$ for all three terms simultaneously. We leave such an effort as an interesting future work.

\section{Comparison with PABI bound for Noisy-GD}
Here, we provide a rough comparison of the privacy bound obtained by our paper for Noisy-ZOGD and the privacy bound for Noisy-GD in~\cite{altschuler2022privacy}. Let us first recap on the Noisy-ZOGD and Noisy-GD updates.

\begin{align*}
    & \text{Noisy-ZOGD: } w_{t+1} = \Pi_{\mathcal{B}_R}\big[w_t - \frac{\eta}{K}\sum_{k=1}^K\hat{g}_{t}(w_t;u_{t,k}) \nonumber\\
    & +  \frac{\eta}{\sqrt{K}} \sum_{k=1}^K G_{t,k}^{(1)} u_{t,k} + \frac{\eta}{\sqrt{d}} G_t^{(2)}\big],\\
    & \text{Noisy-GD: }w_{t+1} = \Pi_{\mathcal{B}_R}\big[w_t - \eta g_t +\eta G_t\big],
\end{align*}
where $\mathcal{B}_R$ is the $\ell_2$ ball of radius $R$ centered at the origin, $G_{t,k}^{(1)} \sim \mathcal{N}(0, \beta_t\sigma^2)$, $G_t^{(2)} \sim \mathcal{N}(0, (1-\beta_t) \sigma^2 I_d)$, and $G_t \sim \mathcal{N}(0, \sigma^2 I_d)$. Note that $g_t$ is the gradient evaluation at $w_t$. The vectors $\{u_{t,k}\}_{k=1}^K$ are orthonormal and drawn uniformly from the Stiefel manifold $V_K(\mathbb{R}^d)$; when $K = 1$, this reduces to the uniform distribution on the sphere, i.e., $u_{t,k} \sim \textsf{Unif}(\mathbb{S}^{d-1})$.

First, note that our noise scaling for Noisy-ZOGD and Noisy-GD stated in~\cite{altschuler2022privacy} is different. To make a fair comparison, we know that Noisy-ZOGD should match Noisy-GD for $\xi\rightarrow 0$, $K=d$, and $\beta_t = 0$. In this case, $\sum_{k=1}^d \widehat{g}_t(w_t;u_{t,k}) = g_t$, which is a simple change of basis. As a result, we can see that the noise variance $\sigma^2$ in Noisy-ZOGD should be replaced by $\frac{\sigma^2}{\sqrt{d}}$ for a fair comparison. In this setting, our Corollary~\ref{cor:main} gives the following privacy bound:

\begin{align}
    O\lp \sqrt{ \frac{\Delta^2\log(1/\delta)}{n^2 \sigma^2}\min(\sqrt{d} T,\frac{M R n d}{K \Delta})}\rp
\end{align}

In the meantime, Remark 1.4 of~\cite{altschuler2022privacy} gives
\begin{align}
    O\lp \sqrt{ \frac{\Delta^2\log(1/\delta)}{n^2 \sigma^2}\min( T,\frac{M R n }{ \Delta})}\rp
\end{align}

There are several remarks. First, note that it is still hard to make a direct comparison for these two bounds, as we have not calibrated the noise rigorously based on some utility bound analysis. Second, the step size used in Noisy-ZOGD is $\eta = K/M$ while Noisy-GD use $1/M$. Third, the $\ell_2$ sensitivity for $\widehat{g}_t$ is $\sqrt{K}$ times $\Delta$, which is different from the standard clipping in Noisy-GD. Nevertheless, we can see that for $T\rightarrow\infty$, the privacy bound for Noisy-ZOGD at best matches the one provided by Noisy-GD, where it happens when $K=d$. 

\section{Discussion on i.i.d. updates}\label{apx:iid_direction}
In this section, we discuss and analyze the scenario that the update directions $\lbp u_{t, k} \rbp_{t \in [T], k \in [K]}$ in \eqref{eq:DP_ZOGD-general} are chosen i.i.d. from $\msf{Unif}\lp \mbb{S}^d \rp$. We prove concentration lemmas similar but looser to Lemma~\ref{lma:c12_fix_u_case_1} and Lemma~\ref{lma:high_prob_c12_case1}, which can then be plugged in the proof of Theorem~\ref{thm:main} and obtain a convergent R\'enyi DP bound. We begin with a few concentration bounds that would be useful in the proof.
\begin{definition}[\citet{jin2019short}]
    A random vector $u \in \mbb{R}^d$ is $\sigma$-\emph{norm-SubGaussian}, if
    $$ \Pr\lp \lV u - \E\lb u\rb \rV \geq t \rp \leq 2 e ^{-\frac{t^2}{2\sigma^2}}.  $$
\end{definition}
\begin{lemma}\label{lemma:nSubG-bd}[Corollary~7 of \citet{jin2019short}]
    Let $X_1, X_2, ..., X_K$ be i.i.d. random vectors in $\mbb{R}^d$ satisfying $\sigma$-norm-subGaussian. Then with probability at least $1-\delta$, it holds that
    \begin{equation}
        \lV \sum_{k=1}^K X_k \rV \leq \sqrt{K\sigma^2\log\lp \frac{2d}{\delta} \rp}.
    \end{equation}
\end{lemma}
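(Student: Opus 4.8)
The plan is to reduce this multivariate tail bound to a family of one-dimensional sub-Gaussian estimates, one per direction, and then to control the Euclidean norm of the sum while being careful not to pay a polynomial-in-$d$ price; throughout I take the $X_k$ to be centered, as in Corollary~7 of \citet{jin2019short}.

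First I would record the standard fact that a centered $\sigma$-norm-SubGaussian vector is sub-Gaussian along every fixed direction with essentially the same proxy: for a unit vector $v\in\mathbb R^d$ and each $k$, the scalar $Y_{v,k}:=\langle v,X_k\rangle$ obeys $|Y_{v,k}|\le\|X_k\|$, so $\Pr(|Y_{v,k}|\ge t)\le\Pr(\|X_k\|\ge t)\le 2e^{-t^2/(2\sigma^2)}$, and together with $\E[Y_{v,k}]=0$ the usual tail-to-MGF conversion gives $\E\exp(\lambda Y_{v,k})\le\exp(c\lambda^2\sigma^2/2)$ for a universal constant $c$ and all $\lambda\in\mathbb R$ (this is the content of Lemma~6 of \citet{jin2019short}). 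Since the $X_k$ are i.i.d.\ (independence is all that is used; a martingale-difference version follows by iterating the tower property instead), the moment generating functions multiply: writing $S_K:=\sum_{k=1}^K X_k$, for every unit $v$ one has $\E\exp(\lambda\langle v,S_K\rangle)=\prod_{k=1}^K\E\exp(\lambda Y_{v,k})\le\exp(cK\sigma^2\lambda^2/2)$, so $\langle v,S_K\rangle$ is sub-Gaussian with proxy $cK\sigma^2$ uniformly over unit $v$.

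Next I would pass from these directional bounds to the norm. Applying the previous estimate to $v=e_j$ gives $\Pr(|\langle e_j,S_K\rangle|\ge s)\le 2\exp(-s^2/(2cK\sigma^2))$ for each $j\in[d]$, and a Boole union bound over the $d$ coordinates with $s=\sqrt{2cK\sigma^2\log(2d/\delta)}$ already yields the stated bound in the $\ell_\infty$ sense. To obtain it for the Euclidean norm one instead controls $\|S_K\|^2=\sum_{j=1}^d\langle e_j,S_K\rangle^2$ as a single scalar: its mean satisfies $\E\|S_K\|^2=\sum_{k=1}^K\E\|X_k\|^2\le 3K\sigma^2$ (crucially with no factor of $d$, by the norm tail), and a martingale recursion on $n\mapsto\E[\exp(\lambda\|S_n\|^2)\mid\mathcal F_{n-1}]$---expanding $\|S_n\|^2=\|S_{n-1}\|^2+2\langle S_{n-1},X_n\rangle+\|X_n\|^2$, bounding the cross term by directional sub-Gaussianity in the direction $S_{n-1}/\|S_{n-1}\|$ and the last term by sub-exponentiality of $\|X_n\|^2$, then combining by Cauchy--Schwarz---shows $\|S_K\|$ concentrates at the scale $\sqrt K\sigma$ with a sub-Gaussian tail, hence $\|S_K\|\le\sqrt{cK\sigma^2\log(2/\delta)}\le\sqrt{cK\sigma^2\log(2d/\delta)}$.

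The step I expect to be the main obstacle is precisely this last passage from directional control to the Euclidean norm. Directional sub-Gaussianity with proxy $K\sigma^2$ is also satisfied by an isotropic Gaussian of covariance $K\sigma^2 I_d$, whose norm is of order $\sigma\sqrt{Kd}$, so a naive $\varepsilon$-net over the sphere (of cardinality $e^{\Theta(d)}$) would inflate the bound by $\sqrt d$. Avoiding this forces one to exploit that a $\sigma$-norm-SubGaussian vector carries only $O(\sigma^2)$ of total second moment, not $O(d\sigma^2)$, and hence to track $\|S_K\|$ (equivalently $\|S_K\|^2$) as a one-dimensional object with its own exponential-moment bound rather than direction by direction.
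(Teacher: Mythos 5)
The paper does not actually prove this lemma; it is imported from Corollary~7 of \citet{jin2019short}, together with the assertion (attributed to Lemma~5.5 of \citet{vershynin2010introduction}) that the absolute constant there may be taken to equal~$1$. Your proposal is therefore a genuinely self-contained alternative argument rather than a reconstruction of the paper's route. The route in \citet{jin2019short} dilates each $X_k$ to a symmetric $(d{+}1)\times(d{+}1)$ matrix whose spectral norm is $\|X_k\|$ and applies a matrix moment bound, which is where the $\log d$ factor naturally arises; you instead run a scalar moment-generating-function recursion on $\|S_n\|^2$, splitting $\|S_n\|^2=\|S_{n-1}\|^2+2\langle S_{n-1},X_n\rangle+\|X_n\|^2$ and controlling the cross term by directional sub-Gaussianity and the diagonal by sub-exponentiality of $\|X_n\|^2$. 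Your identification of the key obstacle (a naive $\varepsilon$-net over the sphere would inflate the bound by $\sqrt d$) and of the resource that avoids it ($\mathbb E\|X_k\|^2 = O(\sigma^2)$ rather than $O(d\sigma^2)$) is exactly right, and the recursion, run with $\lambda\lesssim 1/(K\sigma^2)$ so that the effective parameter $\lambda\mapsto\lambda(1+O(\lambda\sigma^2))$ stays bounded over $K$ steps, does deliver $\|S_K\|\lesssim\sqrt{K\sigma^2\log(2/\delta)}$. As a by-product your route removes the $\log d$ entirely, which is strictly stronger than what the lemma states.

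The gap is precisely the thing the paper calls attention to: the lemma carries an implicit absolute constant of~$1$ in front of $K\sigma^2\log(2d/\delta)$, and the remark after the lemma stresses that this constant was pinned down ``in order to facilitate our non-asymptotic bounds.'' Your argument ends with $\sqrt{cK\sigma^2\log(2d/\delta)}$ for an untracked universal $c$, and each lossy step --- the tail-to-MGF conversion in Lemma~6 of \citet{jin2019short}, the Cauchy--Schwarz decoupling $\mathbb E[e^{A+B}]\le(\mathbb E[e^{2A}]\,\mathbb E[e^{2B}])^{1/2}$ applied to the cross and diagonal terms, the blow-up of the effective $\lambda$ along the backward recursion, the bound on $\mathbb E[e^{\lambda\|X\|^2}]$ --- contributes a constant that would have to be made explicit and shown to multiply out to at most~$1$. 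As written the proposal establishes the lemma only up to an unspecified absolute constant, which is not the statement in the paper. (For what it's worth, the stated constant~$1$ appears to be too aggressive even in the base case: for $d=K=1$ and $X_1\sim\mathcal N(0,\sigma^2)$, which is $\sigma$-norm-subGaussian per the definition above, one has $\Pr\big(|X_1|>\sigma\sqrt{\log(2/\delta)}\big)=2Q\big(\sqrt{\log(2/\delta)}\big)$, which exceeds $\delta$ for moderately small $\delta$, e.g.\ $\delta=0.05$. So your reluctance to commit to $c=1$ is well-placed; but to prove the lemma exactly as stated, the constant tracking is a genuine missing piece.)
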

Note that Lemma~\ref{lemma:nSubG-bd} is slightly modified from Corollary~7 of \citet{jin2019short}, where we specify the absolute constant (which is $1$) by following the analysis of Lemma~5.5 of \cite{vershynin2010introduction}  in order to facilitate our non-asymptotic bounds.

\begin{lemma}\label{lemma:beta-lower-bound}
    Let $B_k\diid\msf{Beta}\lp \frac{1}{2}, \frac{d-1}{2} \rp$. Then it holds that
    $$ \Pr\lbp \sum_{k=1}^K B_k \geq \lp0.15 - \frac{\log(1/\delta)}{K})\rp_+ \cdot \frac{K}{20\lp d+1+\sqrt{d-2}\rp} \rbp \geq 1 -\delta, $$
    where $(\cdot)_+$ denotes $\max(\cdot, 0)$.
\end{lemma}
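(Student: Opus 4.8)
The plan is to reduce the claim to a binomial lower-tail estimate. I would fix the level $\theta_d := \frac{1}{d+1+\sqrt{d-2}}$ and set $p_0 := \Pr_{B\sim\msf{Beta}(1/2,(d-1)/2)}\lp B \ge \theta_d\rp$; the crux is to show that $p_0 \ge 0.15$ for every $d\ge 2$. Using the representation $B \stackrel{d}{=} W/(W+V)$ with $W\sim\chi^2_1$ and $V\sim\chi^2_{d-1}$ independent, we have $\{B\ge\theta_d\} = \{W \ge \tfrac{\theta_d}{1-\theta_d}V\}$; conditioning on $V=v$ gives $\Pr\lp W \ge \tfrac{\theta_d}{1-\theta_d}v\rp = \Pr\lp |g| \ge \sqrt{\tfrac{\theta_d}{1-\theta_d}v}\,\rp$ for $g\sim\mcal N(0,1)$, a convex function of $v$ (since the $\chi^2_1$ density $s\mapsto \tfrac{1}{\sqrt{2\pi s}}e^{-s/2}$ is decreasing on $(0,\infty)$). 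The denominator $d+1+\sqrt{d-2}$ is calibrated exactly so that $\tfrac{\theta_d}{1-\theta_d} = \tfrac{1}{d+\sqrt{d-2}}$, hence $\tfrac{\theta_d}{1-\theta_d}\E[V] = \tfrac{d-1}{d+\sqrt{d-2}} \le 1$, and Jensen's inequality then gives
\begin{align*}
    p_0 \;\ge\; \Pr\lp |g| \ge \sqrt{\tfrac{d-1}{d+\sqrt{d-2}}}\,\rp \;\ge\; \Pr(|g|\ge 1) \;>\; 0.3 \;>\; 0.15 .
\end{align*}

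Next I would introduce $N := \#\{k\in[K] : B_k \ge \theta_d\}$, which is $\msf{Binom}(K,p_0)$ by independence of the $B_k$, together with the deterministic bound $\sum_{k=1}^K B_k \ge \theta_d N$ (keep only the summands exceeding $\theta_d$, and bound each below by $\theta_d$, using $B_k\ge 0$). If $0.15 - \log(1/\delta)/K \le 0$ the stated threshold is $0$ and the claim is trivial, so assume $a := \log(1/\delta)/K < 0.15$ and put $\beta := \tfrac{0.15-a}{20}\in(0,p_0)$; the stated threshold then equals exactly $\theta_d\beta K$, and
\begin{align*}
    \Pr\lp \sum_{k=1}^K B_k < \theta_d\beta K \rp \;\le\; \Pr(N < \beta K) \;\le\; \exp\lp -K\, d_{\mathrm{kl}}(\beta \,\|\, p_0)\rp ,
\end{align*}
where $d_{\mathrm{kl}}(\beta\,\|\,p) := \beta\ln\tfrac{\beta}{p} + (1-\beta)\ln\tfrac{1-\beta}{1-p}$ and the last step is the standard Chernoff bound for the lower tail of a binomial (valid for any real $\beta\in(0,p_0)$). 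Since $d_{\mathrm{kl}}(\beta\,\|\,\cdot)$ is nondecreasing on $[\beta,1)$ and $p_0\ge 0.15>\beta$, it remains to verify the one-variable inequality $d_{\mathrm{kl}}\lp\tfrac{0.15-a}{20}\,\|\,0.15\rp \ge a$ for all $a\in[0,0.15)$, a routine calculus check (the left side lies in roughly $[0.13,0.17]$ and its gap over $a$ is minimized near $a=0.15$ with value about $0.012>0$). Chaining the two displays yields $\Pr(\sum_k B_k < \theta_d\beta K) \le e^{-Ka} = \delta$, which is the assertion.

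The only step that is not bookkeeping is the uniform lower bound $p_0 \ge 0.15$: this is where the precise form $d+1+\sqrt{d-2}$ is forced, and the convexity/Jensen argument above is what keeps the resulting constant clean, with all remaining slack harmlessly absorbed into the factor $1/20$. The binomial reduction, the binomial Chernoff bound, and the final relative-entropy inequality are all standard, so I anticipate no genuine difficulty there.
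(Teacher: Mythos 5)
Your proof is correct, and it takes a genuinely different (and in two places sharper) route than the paper's. At the level of bounding $p_0 = \Pr(B \ge \theta_d)$ for $B\sim\msf{Beta}(\tfrac12,\tfrac{d-1}{2})$, the paper combines a Q-function lower bound on the $\chi^2_1$ tail with a quantile upper bound on $\chi^2_{d-1}$ (Inglot), instantiated at $\kappa_1 = 1$, $\beta = 1/e$, which yields a small constant $P_e = (1-1/e)\cdot\tfrac12\cdot\tfrac{e^{-1/2}}{\sqrt{2\pi}}\approx 0.076$ and an effective level $1/(d+1+2\sqrt{d-2})$. You instead observe that $v\mapsto\Pr(W\ge \tfrac{\theta_d}{1-\theta_d}v)$ is convex in $v$ (its derivative is $-\tfrac{\theta_d}{1-\theta_d}\,f_{\chi^2_1}(\tfrac{\theta_d}{1-\theta_d}v)$, increasing because the $\chi^2_1$ density is decreasing), apply Jensen at $\E[V]=d-1$, and exploit the calibration $\tfrac{\theta_d}{1-\theta_d}(d-1)=\tfrac{d-1}{d+\sqrt{d-2}}\le 1$ to get $p_0\ge\Pr(|g|\ge 1)>0.3$ -- cleaner, tighter, and matching the exact level $\theta_d = 1/(d+1+\sqrt{d-2})$ in the statement. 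At the level of concentration, the paper applies Hoeffding to the Bernoulli indicators, which gives a deviation of order $\sqrt{\log(1/\delta)/K}$; you use the Chernoff relative-entropy lower-tail bound for the binomial, which is what actually produces the linear dependence $\log(1/\delta)/K$ appearing in the lemma. Indeed, the paper's stated choice $\tau=K(P_e-\tfrac{\log(1/\delta)}{2K})_+$ does not make the Hoeffding term $e^{-2K(P_e-\tau/K)^2}$ fall below $\delta$ in general, whereas your reduction to $d_{\mathrm{kl}}\bigl(\tfrac{0.15-a}{20}\,\big\|\,0.15\bigr)\ge a$ on $a\in[0,0.15)$ holds (with about $0.012$ of slack as $a\uparrow 0.15$, since $d_{\mathrm{kl}}(0\|0.15)=\log(1/0.85)\approx 0.1625$), so your chain of inequalities closes correctly.
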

\begin{proof}
    Let $B \sim \msf{Beta}\lp \frac{1}{2}, \frac{d-1}{2} \rp$. Then $B \overset{\text{(d)}}{ = } \frac{W_1}{W_1 + W_2}$ where $W_1\sim \chi^2\lp 1 \rp$ and $W_2 \sim \chi^2\lp d-1 \rp$ are independent. Since $W_1 \overset{\text{(d)}}{ = } Z^2$ for $Z \sim \mcal{N}(0,1)$, by the lower bound of the Q-function, it holds that
    $$ \Pr\lbp W_1 \geq \kappa_1 \rbp = \Pr\lbp \lba Z \rba \geq \sqrt{\kappa_1} \rbp = 2\cdot Q\lp  \sqrt{\kappa_1} \rp \geq \lp \frac{\sqrt{\kappa_1}}{1+\kappa_1} \rp \frac{e^{-\kappa_1/2}}{\sqrt{2\pi}}. $$

    On the other hand, according to the quantile upper bound on the $\chi^2$ random variables (see, for instance, Theorem A of~\cite{inglot2010inequalities}), we have
    \begin{align*}
        \Pr\lbp W_2 \leq (d-2) + 2\log\lp 1/\beta \rp + 2\sqrt{(d-2)\log\lp1/\beta\rp}\rbp \geq 1-\beta.
    \end{align*}

    These together imply
    \begin{align}
        &\Pr\lbp B \geq \frac{\kappa_1}{\kappa_1 + (d-2) + 2\log\lp 1/\beta \rp + 2\sqrt{(d-2)\log\lp1/\beta\rp}} \rbp \\
        &\geq \Pr\lbp \frac{W_1}{W_1 + W_2} \geq \frac{\kappa_1}{\kappa_1 + (d-2) + 2\log\lp 1/\beta \rp + 2\sqrt{(d-2)\log\lp1/\beta\rp}}\rbp \\
        &\geq (1-\beta) \cdot \lp \frac{\sqrt{\kappa_1}}{1+\kappa_1} \rp \frac{e^{-\kappa_1/2}}{\sqrt{2\pi}} \eqDef P_e.
    \end{align}
    Next, write $G_k \eqDef \bbm{1}_{\lbp B_k \geq \frac{\kappa_1}{\kappa_1 + (d-2) + 2\log\lp 1/\beta \rp + 2\sqrt{(d-2)\log\lp 1/\beta\rp}} \rbp}$ 
    so that $\E\lb G_k\rb \geq P_e$ and $G_k \in \lbp 0, 1 \rbp$ (i.e., $G_k \diid \msf{Ber}\lp \E\lb G_k\rb \rp$). Then we have
    \begin{align*}
        & \Pr\lbp \sum_{k=1}^K B_k \geq \tau\cdot \frac{\kappa_1}{\kappa_1 + (d-2) + 2\log\lp 1/\beta \rp + 2\sqrt{(d-2)\log\lp1/\beta\rp}} \rbp \\
        & \geq 1 - \Pr\lbp \sum_k G_k < \tau \rbp \\
        & \geq 1 - e^{-2K\lp P_e - \tau/K \rp^2},
    \end{align*}
    where the last inequality is due to the following Hoeffding's inequality:
    $$ \Pr\lbp \sum_k G_k \leq \tau  \rbp \leq e^{-2K\lp \E[G_1] - \tau/K \rp^2}. $$

    Setting $\tau = K\cdot \lp P_e - \frac{\log(1/\delta)}{2K} \rp_+$, $\beta = 1/e$, and $\kappa_1 = 1$ yields
        $$ \Pr\lbp \sum_{k=1}^K B_k \geq \lp0.15 - \frac{\log(1/\delta)}{K})\rp_+ \cdot \frac{K}{20\lp d+1+\sqrt{d-2}\rp} \rbp \geq 1 -\delta. $$
\end{proof}

\begin{lemma}\label{lemma:random_proj}
    Let $v \in \mbb{R}^d$ and $u_1, u_2, ..., u_K \diid \msf{\msf{Unif}}\lp \mbb{S}^{d-1}\rp$. Then, it holds that, with probability at least $1-\beta$,
    \begin{align}\label{eq:random_proj_1}
        &\lV v -  \sum_{k=1}^K \lan v, u_k \ran u_k \rV^2_2 \\
        & \leq 
        \lV v \rV^2_2\lp
        \frac{K}{2d+6}\log\lp \frac{4d}{\delta} \rp + \lp 1 - \lp0.15 - \frac{\log(2/\delta)}{K})\rp_+ \cdot \frac{K}{20\lp d+2+\sqrt{d-1}\rp}\rp^2
        \rp.
    \end{align}
\end{lemma}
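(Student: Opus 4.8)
The plan is to reduce the statement to essentially a one-dimensional problem via an exact orthogonal decomposition, and then control the two resulting pieces separately using Lemma~\ref{lemma:beta-lower-bound} and Lemma~\ref{lemma:nSubG-bd}, splitting the failure budget as $\delta/2+\delta/2$ (here we read ``probability at least $1-\beta$'' as $1-\delta$). By homogeneity one may assume $\|v\|=1$, and by rotational invariance of $\msf{Unif}(\mathbb{S}^{d-1})$ one may assume $v=e_1$. Write $u_k = a_k e_1 + b_k$ with $a_k := \langle v,u_k\rangle$ and $b_k\perp e_1$, so $\|b_k\|^2 = 1-a_k^2$. Expanding $v-\sum_k\langle v,u_k\rangle u_k = (1-\sum_k a_k^2)e_1 - \sum_k a_k b_k$ and using that the first term lies along $e_1$ while the second lies in $e_1^\perp$, the cross term vanishes and
\[
  \Big\| v - \sum_{k=1}^K \langle v,u_k\rangle u_k \Big\|^2 \;=\; \Big(1 - \sum_{k=1}^K a_k^2\Big)^2 \;+\; \Big\| \sum_{k=1}^K a_k b_k \Big\|^2 .
\]
This exact identity is the heart of the argument; the two summands are bounded separately below.

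For the ``radial'' term, observe that $a_k^2=\langle v,u_k\rangle^2$ is the squared first coordinate of a uniform unit vector, so $a_k^2 \diid \msf{Beta}(\tfrac12,\tfrac{d-1}{2})$. Applying Lemma~\ref{lemma:beta-lower-bound} with failure probability $\delta/2$ gives, with probability at least $1-\delta/2$,
\[
  \sum_{k=1}^K a_k^2 \;\geq\; \Big(0.15-\tfrac{\log(2/\delta)}{K}\Big)_+\cdot\tfrac{K}{20(d+1+\sqrt{d-2})} \;\geq\; L:=\Big(0.15-\tfrac{\log(2/\delta)}{K}\Big)_+\cdot\tfrac{K}{20(d+2+\sqrt{d-1})},
\]
where the last step only weakens the denominator. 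Since $L<1$ and $\sum_k a_k^2\le1$ on the regime of interest, $(1-\sum_k a_k^2)^2\le(1-L)^2$, which is precisely the second summand in the claimed bound.

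For the ``tangential'' term, set $Y_k := a_k b_k$, viewed as a vector in $\mathbb{R}^{d-1}\cong e_1^\perp$. The $Y_k$ are i.i.d.\ functions of the i.i.d.\ $u_k$, and are mean-zero because the reflection fixing $e_1$ and negating $e_1^\perp$ preserves the law of $u_k$, fixes $a_k$, and sends $b_k\mapsto -b_k$, so $Y_k\stackrel{d}{=}-Y_k$. Since $\|Y_k\| = |a_k|\sqrt{1-a_k^2}$, bounding $\Pr(\|Y_k\|\ge t)$ amounts to a tail estimate for the $\msf{Beta}(\tfrac12,\tfrac{d-1}{2})$ variable $a_k^2$ on the set $\{\,a_k^2(1-a_k^2)\ge t^2\,\}$; carrying this out shows $Y_k$ is $\sigma$-norm-subGaussian with $\sigma^2=\tfrac{1}{2d+6}$. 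Lemma~\ref{lemma:nSubG-bd}, applied in dimension $d-1$ with failure probability $\delta/2$, then gives, with probability at least $1-\delta/2$,
\[
  \Big\| \sum_{k=1}^K Y_k \Big\|^2 \;\leq\; K\sigma^2\log\!\Big(\tfrac{2(d-1)}{\delta/2}\Big) \;=\; \tfrac{K}{2d+6}\log\!\Big(\tfrac{4(d-1)}{\delta}\Big) \;\leq\; \tfrac{K}{2d+6}\log\!\Big(\tfrac{4d}{\delta}\Big).
\]

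A union bound shows both events hold with probability at least $1-\delta$; substituting the two bounds into the identity and restoring the factor $\|v\|^2$ yields the lemma. I expect the main obstacle to be the tangential term: showing that $a_k b_k$ is norm-subGaussian with exactly the parameter $\tfrac{1}{2d+6}$ requires a sharp non-asymptotic tail bound for $\msf{Beta}(\tfrac12,\tfrac{d-1}{2})$ that exploits both the concentration of $a_k$ near $0$ \emph{and} the damping factor $\sqrt{1-a_k^2}$, since the crude bounds $\|a_kb_k\|\le|a_k|$ or $\|a_kb_k\|\le\tfrac12$ are off by a factor of order $d$, respectively a factor of order one, in $\sigma^2$. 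A smaller issue is justifying $\sum_k a_k^2\le1$ so that $(1-\sum_k a_k^2)^2\le(1-L)^2$: this is immediate when $d\ge2K$ (then $\E\sum_k a_k^2=K/d\le\tfrac12$ and the sum concentrates) and can otherwise be absorbed into the failure probability.
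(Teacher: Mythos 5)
Your decomposition and plan match the paper's proof exactly: the same orthogonal split of $v - \sum_k \langle v,u_k\rangle u_k$ into a radial component along $v$ and a tangential component, the same use of Lemma~\ref{lemma:beta-lower-bound} for the radial part, the same use of Lemma~\ref{lemma:nSubG-bd} for the tangential part, and the same $\delta/2 + \delta/2$ union bound. The step you flag as the main obstacle---showing $Y_k = a_k b_k$ is $\sigma$-norm-sub-Gaussian with $\sigma^2 = \tfrac{1}{2d+6}$---is precisely where the paper invokes an off-the-shelf result: since $\|a_k b_k\|^2 = a_k^2(1-a_k^2) \le a_k^2$ with $a_k^2 \sim \msf{Beta}(\tfrac12, \tfrac{d-1}{2})$, the paper cites Theorem~1 of \citet{marchal2017sub}, which states that $\msf{Beta}(\alpha,\beta)$ is sub-Gaussian with variance proxy $\tfrac{1}{4(\alpha+\beta+1)}$, and reads off the norm-sub-Gaussian parameter from that bound rather than deriving a bespoke Beta tail estimate. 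That citation is the single missing ingredient in your sketch; once supplied, the bookkeeping you outline reproduces the lemma.

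On the secondary issue you raise: passing from the lower bound $\sum_k a_k^2 \ge L$ to $(1 - \sum_k a_k^2)^2 \le (1-L)^2$ indeed also requires $\sum_k a_k^2 \le 2 - L$, since $x \mapsto (1-x)^2$ is decreasing only on $[0,1]$. You correctly identify that this needs a separate (upper-tail) concentration argument around $\E \sum_k a_k^2 = K/d$. Notably, the paper's own proof applies the same monotonicity step silently without addressing it, so this is a genuine, if minor, gap shared by both arguments, and your flagging it is a point in your favor rather than a defect of your sketch.
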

\begin{proof}
    By the rotational invariance of $u_k$, without loss of generality, we assume $v = R\cdot e_1$ for some $R > 0$ and let 
    $$ u_k = \frac{1}{\sqrt{\lp z^{k}_1\rp^2 + \cdots  + \lp z^{(k)}_d\rp^2}}
    \begin{bmatrix}
      z^{(k)}_1\\
      z^{(k)}_2 \\
      \vdots \\
      z^{(k)}_d
    \end{bmatrix},
    $$
    for $k = 1 ,..., K$.

    Then $$\lan v, u_k \ran u_k = \frac{R}{\lp z^{(k)}_1\rp^2 + \cdots  + \lp z^{(k)}_d\rp^2}
    \begin{bmatrix}
      \lp z^{(k)}_1\rp^2\\
      z^{(k)}_1\cdot z^{(k)}_2 \\
      \vdots \\
      z^{(k)}_1\cdot z^{(k)}_d
    \end{bmatrix}
    \eqDef
    w_k + w^{\perp}_k ,
    $$
    where 
    $$ w_k \eqDef \frac{R}{\lp z^{(k)}_1\rp^2 + \cdots  + \lp z^{(k)}_d\rp^2}
    \begin{bmatrix}
      \lp z^{(k)}_1\rp^2\\
      0 \\
      \vdots \\
      0
    \end{bmatrix}$$
    and 
    $$ w^{\perp}_k \eqDef \frac{R}{\lp z^{(k)}_1\rp^2 + \cdots  + \lp z^{(k)}_d\rp^2}
    \begin{bmatrix}
      0\\
      z^{(k)}_1\cdot z^{(k)}_2 \\
      \vdots \\
      z^{(k)}_1\cdot z^{(k)}_d
    \end{bmatrix}.$$
    Since $\lan w_k, w^{\perp}_{k'} \ran = 0$ and $\lan v, w^{\perp}_k \ran = 0$ for any $k, k' \in [K]$, we can write
    \begin{align}
        \lV v - \frac{1}{K}\sum_{k=1}^K\lan v, u_k \ran u_k \rV^2_2
        & = \underbrace{\lV v - \sum_{k=1}^Kw_k\rV^2_2}_{\text{(a)}} + \underbrace{\lV \sum_{k=1}^K w^{\perp}_k \rV^2_2}_{\text{(b)}}.
    \end{align}
    Next, we bound each term separately.

    \paragraph{Bounding (a).}
    Observe that $(a)$ can be written as
    \begin{align}
        \text{(a)} 
        & = R^2\lp 1-\sum_{k=1}^K\frac{\lp z^{(k)}_1\rp^2}{\lp z^{(k)}_1\rp^2 + \cdots  + \lp z^{(k)}_d\rp^2} \rp^2 \\
        & \eqDef R^2\lp 1 - \sum_{k=1}^K B_k \rp^2,
    \end{align}
    where 
    $$ B_k \eqDef \frac{\lp z^{(k)}_1\rp^2}{\lp z^{(k)}_1\rp^2 + \cdots  + \lp z^{(k)}_d\rp^2} $$
    follows a $\msf{Beta}\lp \frac{1}{2}, \frac{d-1}{2} \rp$ distribution. Applying Lemma~\ref{lemma:beta-lower-bound}, we conclude that, with probability at least $1-\delta_a$
    \begin{align*}
        \text{(a)} &= \lV v - \sum_{k=1}^Kw_k\rV^2_2 \\
        &= \lV v \rV^2\lp 1 - \sum_k B_k\rp^2 \\
        & \leq \lV v \rV^2_2 \cdot\lp 1 - \lp0.15 - \frac{\log(1/\delta_a)}{K})\rp_+  \frac{K}{20\lp d+2+\sqrt{d-1}\rp}\rp^2,
    \end{align*}
    where $(\cdot)_+$ denotes $\max(\cdot, 0)$.
    
    \paragraph{Bounding (b).}
    To bound (b), we first show that $w^{\perp}_k$ is norm-sub-Gaussian and then apply Hoeffding-type inequalities.

    Let $$ w^{\perp} \eqDef \frac{R}{\sum_{j=1}^d z_j^2}
    \begin{bmatrix}
        0\\
        z_1z_2 \\
        \vdots\\
        z_1z_d
    \end{bmatrix},$$
    where $z_j \diid \mcal{N}(0,1) $ for $j \in [d]$.
    Then 
    \begin{align*}
        \lV w^{\perp} \rV^2_2 = R^2\frac{z_1^2\lp \sum_{j=2}^d z_j^2 \rp}{\lp \sum_{j=1}^d z_j^2\rp^2}
        \leq R^2 \frac{z_1^2\lp \sum_{j=1}^d z_j^2 \rp}{\lp \sum_{j=1}^d z_j^2\rp^2}
        = r^2\frac{z_1^2}{\sum_{j=1}^d z_j^2} \eqDef B,
    \end{align*}
    where $B \sim \msf{Beta}\lp \frac{1}{2}, \frac{d-1}{2} \rp$. As a result, by \citet[Theorem~1]{marchal2017sub}, $w^{\perp}$ is $\frac{1}{4(1/2+d/2+1)} = \frac{1}{2d+6}$ norm-sub-Gaussian, so applying Lemma~\ref{lemma:nSubG-bd} yields
    $$ \lV \sum_{k=1}^K w^{\perp}_k \rV^2_2 \leq \lV v \rV^2\frac{K}{2d+6}\log\lp \frac{2d}{\delta_b} \rp, $$
    with probability at least $1-\delta_b$.

Combining the upper bound on (a) and (b), we obtain that with probability at least $1-\delta$,
\begin{align}
        &\lV v -  \sum_{k=1}^K \lan v, u_k \ran u_k \rV^2_2 \\
        & \leq 
        \lV v \rV^2_2\lp
        \frac{K}{2d+6}\log\lp \frac{4d}{\delta} \rp + \lp 1 - \lp0.15 - \frac{\log(2/\delta)}{K})\rp_+ \cdot \frac{K}{20\lp d+2+\sqrt{d-1}\rp}\rp^2
        \rp.
    \end{align}
\end{proof}

With the above lemma in hands, we can now prove similar bounds as in Lemma~\ref{lma:c12_fix_u_case_1} and Lemma~\ref{lma:high_prob_c12_case1}. Following the proof of Lemma~\ref{lma:c12_fix_u_case_1}, we control \eqref{eq:c12_eq47} as follows: with probability at least $1-\delta$, it holds that
\begin{align}\label{eq:iid_decomp}
        & \lV(x-y) - \tilde{\eta}\sum_{k=1}^K\ip{\nabla \bar{\ell}(x)-\nabla \bar{\ell}(y)}{u_{t,k}}u_{t,k}\rV^2 \\
        & \stackrel{(a)}{\leq} \lp \lV(x-y) - \sum_{k=1}^K\ip{x-y}{u_{t,k}}u_{t,k}\rV_2 + \lV\sum_{k=1}^K\ip{x-y}{u_{t,k}}u_{t,k} - \tilde{\eta}\sum_{k=1}^K\ip{\nabla \bar{\ell}(x)-\nabla \bar{\ell}(y)}{u_{t,k}}u_{t,k}\rV_2\rp^2 \nonumber\\
        & = \lp \lV(x-y) - \sum_{k=1}^K\ip{x-y}{u_{t,k}}u_{t,k}\rV_2 + \lV\sum_{k=1}^K\ip{\phi(x)-\phi(y)}{u_{t,k}}u_{t,k}\rV_2\rp^2 \\
        & \stackrel{(b)}{\leq} \Bigg(\lV x-y \rV_2\lp
        \frac{K}{2d+6}\log\lp \frac{4d}{\delta} \rp + \lp0.15 - \frac{\log(2/\delta)}{K})\rp_+ \cdot\lp 1 -  \frac{K}{20\lp d+2+\sqrt{d-1}\rp}\rp^2\rp^{1/2} + \nonumber\\
        & \lV \phi(x)-\phi(y) \rV_2
        \lp\frac{K}{2d+6}\log\lp \frac{4d}{\delta} \rp\rp^{1/2} \Bigg) \nonumber\\
        & \stackrel{(c)}{\leq} \lV x-y \rV_2 \cdot\Bigg(\lp
        \frac{K}{2d+6}\log\lp \frac{4d}{\delta} \rp +\lp 1 -  \lp0.15 - \frac{\log(2/\delta)}{K})\rp_+ \cdot \frac{K}{20\lp d+2+\sqrt{d-1}\rp}\rp^2\rp^{1/2} + \nonumber\\
        & c^2\lp\frac{K}{2d+6}\log\lp \frac{4d}{\delta} \rp\rp^{1/2} \Bigg) \nonumber
\end{align}
where (a) follows from the triangular inequality, (b) holds due to Lemma~\ref{lemma:random_proj} the norm-sub-Gaussian property of $\ip{\phi(x)-\phi(y)}{u_{t,k}}u_{t,k}$, and (c) follows from the Lipschitzness of $\phi(\cdot)$. This yields a concentration bound on the generalized Lipschitzness of $\Psi_t$ as in Lemma~\ref{lma:c12_fix_u_case_1}, and hence can be plugged into Theorem~\ref{thm:main} to obtain a R\'enyi DP bound.

\paragraph{Discussion.} Compared to orthonormal update directions (as in Theorem~\ref{thm:main}), using i.i.d. update rules not only complicates the analysis significantly but also results in a substantially larger generalized Lipschitz constant $c_1$. In fact, even when the function $\phi$ has a small Lipschitz constant, the corresponding $c_1$ cannot be reduced below 1. This increase in $c_1$ stems from two main factors. First, the cross terms in the decomposition of \eqref{eq:iid_decomp} are no longer orthogonal and thus do not vanish, requiring the use of the triangle inequality to bound them. Second, the term $\sum_k \langle x - y, u_{t,k} \rangle u_{t,k}$ no longer admits a closed-form density, forcing the analysis to rely on general concentration inequalities, which can potentially be loose. While the bound we present may not yield the tightest possible constant, we believe that for i.i.d. updates, the Lipschitz constant of $\Psi_t$ (i.e., $c_1$ in Theorem~\ref{thm:main}) is fundamentally lower-bounded by 1.


\end{document}